\documentclass[11pt,a4paper]{article}

\usepackage{geometry}
\usepackage{microtype}
\usepackage{graphicx}
\usepackage{subcaption}
\usepackage{booktabs} 
\usepackage{float}
\usepackage{placeins}
\usepackage{hyperref}

\usepackage{amsmath}
\usepackage{amssymb}
\usepackage{mathtools}
\usepackage{amsthm}

\usepackage{xcolor}
\usepackage{bm}

\usepackage{tcolorbox}
\tcbuselibrary{theorems}
\usepackage{mdframed}

\usepackage{enumitem}
\usepackage{natbib}

\newtcolorbox{theorembox}{
  colback=gray!5,          
  colframe=gray!50,       
  boxrule=0.4pt,           
  arc=3pt,                 
  left=4pt,        
  right=4pt,       
  top=3pt,
  bottom=3pt,
  outer arc=3pt,
  before skip=8pt,
  after skip=8pt
}

\usepackage[capitalize,noabbrev]{cleveref}
\usepackage{authblk}

\newcommand{\prox}{\operatorname{prox}}
\newcommand{\R}{\mathbb{R}}

\newcommand{\aff}{\operatorname{aff}}
\newcommand{\parl}{\operatorname{par}}
\newcommand{\ri}{\operatorname{ri}}
\newcommand{\gph}{\operatorname{graph}}
\newcommand{\rb}{\operatorname{bd}}
\newcommand{\Proj}{\operatorname{Proj}}

\newcommand{\M}{\mathcal{M}}

\theoremstyle{plain}
\newtheorem{theorem}{Theorem}[section]
\newtheorem{proposition}[theorem]{Proposition}
\newtheorem{lemma}[theorem]{Lemma}
\newtheorem{corollary}[theorem]{Corollary}
\theoremstyle{definition}
\newtheorem{definition}[theorem]{Definition}
\newtheorem{assumption}[theorem]{Assumption}
\theoremstyle{remark}
\newtheorem{remark}[theorem]{Remark}

\newcommand{\nc}[1]{{\color{black} #1}}
\newcommand{\kf}[1]{{\color{black} #1}}
\newcommand{\hb}[1]{{\color{black} #1}}
\newcommand{\kff}[1]{{\color{black} #1}}
\newcommand{\kfff}[1]{{\color{black} #1}}

\DeclareMathOperator{\dom}{dom}

\title{Flow Matching from Viewpoint of Proximal Operators}

    \author[1]{Kenji Fukumizu}
    \author[2,1]{Wei Huang}
    \author[1,3]{Han Bao}
    \author[4]{Shuntuo Xu}
    \author[5]{Nisha Chandramoorthy}

  \affil[1]{The Institute of Statistical Mathematics, Japan}
  \affil[2]{RIKEN AIP, Japan}
  \affil[3]{Tohoku University, Japan}
  \affil[4]{East China Normal University, China}
  \affil[5]{University of Chicago, USA}

  \date{\today}

\begin{document}

\maketitle

\begin{abstract}
We reformulate Optimal Transport Conditional Flow Matching (OT-CFM), a class of dynamical generative models, showing that it admits an exact proximal formulation via an extended Brenier potential, without assuming that the target distribution has a density. In particular, the mapping to recover the target point is exactly given by a proximal operator, which yields an explicit proximal expression of the vector field. We also discuss the convergence of minibatch OT-CFM to the population formulation as the batch size increases. Finally, using second epi-derivatives of convex potentials, we prove that, for manifold-supported targets, OT-CFM is terminally normally hyperbolic: after time rescaling, the dynamics contracts exponentially in directions normal to the data manifold while remaining neutral along tangential directions.
\end{abstract}

\section{Introduction}
\label{sec:intro}

Dynamical generative models such as diffusion models \citep{SohlDickstein2015,Ho2020DDPM,Song2021SDE} and flow matching \citep{Lipman2023,Liu2023-ha,Albergo2023} have recently achieved remarkable empirical success in broad application fields
\citep{Yang2024-vq}, showing strong sampling ability for high-dimensional domains. 
Prominent examples include text-to-image generation via latent diffusion models~\cite{Rombach2022LDM,Saharia2022Imagen}, posterior sampling for inverse problems~\cite{Kawar2022DDRM}, and diffusion-based policies for robot manipulation~\cite{Chi2023DiffusionPolicy}.
In scientific domains, diffusion models have also led to strong results in molecular docking and protein design~\cite{Corso2023DiffDock,Watson2023RFdiffusion}.
In parallel, flow-based models have emerged as competitive alternatives and have been applied to image generation~\cite{Esser2024-pf} and scientific modeling such as molecule editing \cite{Ikeda2025-uv}.

\begin{figure}
    \centering
    \includegraphics[width=0.5
    \columnwidth]{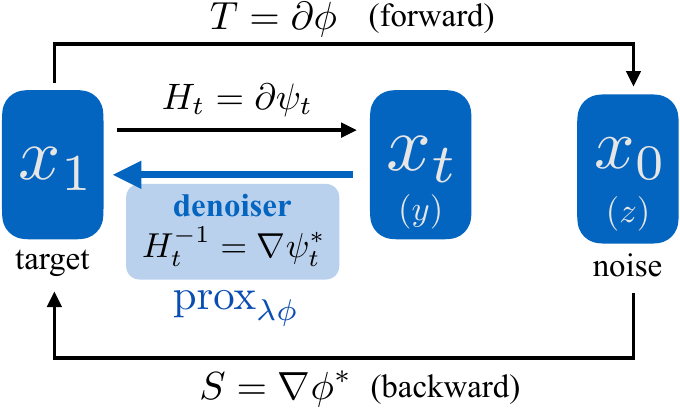}
    \caption{
      Relationship between \nc{ a target sample} $x_1$, noise \nc{sample} $x_0$, and the conditional path $x_t$.
      Here $\phi$ denotes the Aleksandrov--Brenier potential (see Sec.~\ref{sec:AB-potential}).
      \kfff{The target $x_1$ and noise $x_0$ are connected via $x_0 = T(x_1)$, where $T = \partial \phi$ is the (extended) Brenier map.}
      In parallel, $x_t$ is mapped back to target $x_1$ via the denoiser $H_t^{-1}=\nabla\psi_t^\ast$, where $H_t$ is the interpolation operator (see Sec.~\ref{sec:VF_prox}).
    }
    \label{fig:prox}
\end{figure}

This paper focuses on flow matching (FM), which learns a vector field $v_t(x)$ whose induced ODE transports a simple base distribution
$P_0$ (typically standard normal) to a target distribution $P_1$.
The method of Conditional FM \citep[CFM,][]{Lipman2023} generates random conditional paths
\begin{equation}\label{eq:xt}
    x_t = \alpha_t x_0 + \beta_t x_1 \qquad (t\in[0,1])
\end{equation}
given $(x_0,x_1)\sim \pi$ and schedulers $\alpha_t,\beta_t\in[0,1]$, where $\pi$ is a joint distribution with marginals $P_0$ and $P_1$.
A random vector field along the conditional paths is constructed by
\begin{equation}\label{eq:vt}
    v_t(x_t; x_0,x_1) = \dot{\alpha}_t x_0 + \dot{\beta}_t x_1,
\end{equation}
where $\dot{\alpha}_t$ and $\dot{\beta}_t$ denote the time derivatives, and is used as teaching data to train a neural network to estimate the marginal vector field
\begin{equation*}
    v_t(x_t) \coloneqq \mathbb{E}_{x_0,x_1}\left[ v_t(x_t;x_0,x_1)\mid x_t = \alpha_t x_0 + \beta_t x_1 \right].
\end{equation*}

\kf{In general, many different vector fields $v_t$ can flow samples of $P_0$ at time $t=0$ to samples of the target distribution $P_1$ at time $1$, and each $v_t$ yields a different coupling between $P_0$ and $P_1.$ Hence, a} central degree of freedom in CFM is the choice of the coupling $\pi$ between $P_0$ and $P_1,$ \nc{which is implicit in an interpolation path such as \eqref{eq:xt}}. 
A popular CFM method is to construct this coupling by solving (minibatch) optimal transport (OT) problems in training of stochastic gradient descent (SGD) \citep{Tong2024,Pooladian2023-kl}.  This is called {\em OT-CFM}, which improves training by reducing the variance of teaching data \citep{Pooladian2023-kl} and enables easier ODE solutions by straighter vector fields, resulting in better empirical performance.

Such strong empirical evidence of the generative models is often attributed to the observation that they effectively learn the distribution of high dimensional data, such as high-resolution images.  This is typical if the data are distributed on a low-dimensional manifold, \nc{a phenomenon} often called \emph{manifold hypothesis}.
Some recent theories analyze diffusion models under the manifold hypothesis, including minimax estimation rates governed by intrinsic dimension \citep{Oko2023},
a clean subspace/tangent--normal decomposition \citep{Chen2023-lr,Chen2024-pj},
Lyapunov stability near manifolds \citep{Chandramoorthy2025-sd},
and geometric effects/singular behavior of the score around manifolds \citep{Li2025-ut,Liu2025-hg}. 
\kf{Since the learning of FM is based on a distinct principle from the score, different approaches are required to understand the dynamical behavior of FM around a data manifold. }

This paper presents a novel framework for analyzing the vector field and dynamics of OT Conditional FM (OT-CFM), in the general case where the target distribution concentrates on a low-dimensional manifold.  
To express the vector field of OT-CFM, we use the (extended) Brenier potential and the proximal operator, which are
popular tools in convex analysis \citep{Brenier1991-lg,RockafellarVariatioal,ParikhBoyd_proximal}.  \kf{We show that FM can be regarded as a denoiser equipped with the proximal operator in an analogous manner that diffusion models operate a denoiser by the score.}
Our analysis uses a theoretical assumption that the conditional random paths are constructed by the \emph{population} OT coupling. Although this is strong compared to the \emph{minibatch} OT used in practice, \kf{we also show that the vector field of the minibatch OT with finite data converges in a subsequence to the one given by the population OT for the large batch size limit.  } 
\kf{This enables us to interpret OT-CFM as a denoising process from the viewpoint of convex analysis.}

We then demonstrate that the same proximal structure appears in diffusion models and Schr\"odinger bridges (SB). Diffusion models admit an approximate proximal denoiser induced by Tweedie's formula, while SB operates the backward process corresponding to the proximal operator of the Schr\"odinger potential.

\nc{Additionally, we use our proximal structure of OT-CFM to analyze  stability at the terminal time, under the manifold hypothesis.}
To this end, we use the second \kf{epi-derivative}
of the potential, which may be set-valued in the presence of low-dimensional structure \nc{to derive the Lyapunov exponents of the terminal-time dynamics.}
\nc{Specifically, 
we show that, after a natural terminal-time rescaling, OT-CFM contracts exponentially in directions normal to the manifold while exhibiting zero Lyapunov exponents along the manifold}.
\nc{In the context of the recent analysis in \citet{Chandramoorthy2025-sd}, this means the robustness of the manifold structure to small perturbations of OT-CFM.}


Our main contributions are summarized as follows:

\begin{itemize}[topsep=3pt, partopsep=0pt, itemsep=3pt, parsep=0pt, leftmargin=18pt]
  \item 
  We give a novel formulation for OT-based FM using an Aleksandrov--Brenier potential and express the vector field using the subdifferential of the potential. 
  \kf{Our formulation covers the manifold hypothesis cases.}
  \item 
  We show that \kf{the map from $x_t$ to $x_1$ in OT-CFM }
  admits a Euclidean proximal form, yielding an exact expression of the FM vector field by a proximal operator. 
  \item We show that the vector field by the minibatch OT-CFM method with finite data converges to the vector field given by the population OT by taking a subsequence. 
  \item 
  By using second epi-derivatives of convex potentials, we prove that OT-CFM is terminally normally hyperbolic under the manifold hypothesis, which indicates that the dynamics remains neutral along the manifold.
\end{itemize}

\section{Formulation of OT-based Flow Matching via Proximal Operators}
\label{sec:prox_OTCFM}

\subsection{OT-CFM}
We use the conditional path and vector field notations \eqref{eq:xt} and \eqref{eq:vt} in Sec.~\ref{sec:intro} respectively, assuming that $P_0$ is the standard normal distribution $\mathcal N(0,I_d)$ on $\mathbb{R}^d$ and
the target distribution $P_1 \in\mathcal{P}_2(\mathbb{R}^d)$ has a finite second moment.
Throughout this paper, we do \emph{not} assume that $P_1$ admits a density function; rather, we are interested in the case where the support of $P_1$ lies on a low-dimensional manifold.
Let us write the extended reals as $\overline{\R}\coloneqq\R\cup\{+\infty\}$. 

By eliminating $x_0 = (x_t-\beta_t x_1)/\alpha_t$ in \eqref{eq:vt}, we obtain
\begin{equation}\label{eq:vf_denoise}
     v_t(x_t) = \frac{\dot{\alpha}_t}{\alpha_t} x_t + \beta_t \biggl(\frac{\dot{\beta}_t}{\beta_t} -\frac{\dot{\alpha}_t}{\alpha_t}\biggr)  \mathbb{E}[x_1|x_t].
 \end{equation}
 This is essentially the same as the drift term of the diffusion model, and the term $\mathbb E[x_1|x_t]$ is often regarded as a {\em denoiser} to estimate clean $x_1$ from noisy $x_t$ in diffusion models.

We mainly discuss the OT-CFM in population; that is, instead of OT coupling in minibatches, the population OT between $P_1$ and $P_0$ is used to generate random paths \eqref{eq:vt} for CFM.  
In particular, the OT with the quadratic-cost, 
\[
\inf_{\pi\in\Pi(P_0,P_1)} \int \frac12\|x_0-x_1\|^2\,d\pi(x_0,x_1),
\]
is considered, where $\Pi(P_0,P_1)$ denotes the set of joint distributions on $\R^d\times \R^d$ with marginals $P_0,P_1\in\mathcal{P}_2(\mathbb{R}^d)$.
We will discuss relations with the minibatch OT in Sec.~\ref{sec:minibatch-OT}.  

\subsection{Aleksandrov--Brenier potential}
\label{sec:AB-potential}

For the vector field of the OT-CFM, we can consider the transport map, either from $P_0$ to $P_1$ or from $P_1$ to $P_0$.
To compare with the denoiser view \eqref{eq:vf_denoise} of diffusion models, the main part of this paper discusses the transport from $P_1$ to $P_0$.
The transport from $P_0$ to $P_1$ will be considered in Sec.~\ref{sec:discussion}.

When the target $P_1$ does not have a density, such as in the manifold hypothesis, an OT \emph{map} from $P_1$
to $P_0$ may fail to exist \citep{Brenier1991-lg}.
Nevertheless, for the quadratic cost, the OT plan always admits a representation by the subdifferential of a convex potential.
More concretely, it is known that, for an OT plan $\pi^\star$, 
the support of $\pi^\star$ is cyclically monotone \citep[Theorem 1.38]{santambrogio2015optimal}.
Then
by \citet[Theorem 24.8]{Rockafellar},
there is a proper lower semicontinuous (lsc) convex function
$\phi:\mathbb{R}^d\to\overline{\R}$ such that
\begin{equation}\label{eq:aleksandrov_graph}
(x_0,x_1)\in\mathrm{supp}(\pi^\star)
\quad\Longrightarrow\quad
x_0\in \partial \phi(x_1),
\end{equation}
where $\partial\phi$ denotes the subdifferential of the convex function $\phi$ \citep{Bauschke2017}.
As a generalization of the case where $P_1$ has a density, we call such $\phi$ the \emph{Aleksandrov--Brenier potential} for the transport from $P_1$ to $P_0$.
Note that the Aleksandrov--Brenier potential may not be differentiable, unlike the usual Brenier potential, and $\phi$ corresponds to the Aleksandrov potential to the Monge--Amp\`{e}re equation \citep{Aleksandrov1939SecondDifferential,Gutierrez2001MongeAmpere}.
If $P_1$ has a density function, then $\phi$ is differentiable
$P_1$-a.e.\ and \eqref{eq:aleksandrov_graph} reduces to the usual Brenier map
$x_0=\nabla\phi(x_1)$ \citep{Brenier1991-lg}.
Intuitively, when the support of $P_1$ is an $m$-dimensional manifold ($m< d$), a single point $x_1$ couples with a $(d-m)$-dimensional subset induced by $\pi^\star$.
The subgradient $\partial \phi (x_1)$ contains the subset 
(see the example in Fig.~\ref{fig:OTmap} in Sec.~\ref{sec:example}). 

\subsection{Proximal representation of vector field}
\label{sec:VF_prox}

We express the vector field of the OT-CFM with convex analysis.
In the conditional paths \eqref{eq:xt}, fix a smooth schedule $\alpha_t,\beta_t\geq 0$ with 
$\alpha_t:1\searrow 0$ and $\beta_t:0\nearrow 1$ as $t\uparrow 1$.
As seen in Sec.~\ref{sec:AB-potential}, 
for a pair $(x_0,x_1)$ sampled by the OT plan $\pi^\star$, 
\begin{equation}\label{eq:pphi}
    x_0 \in \partial \phi(x_1)
\end{equation}
holds, where $\phi$ is the Aleksandrov--Brenier potential of $\pi^\star$.
Define the (generally set-valued) interpolation operator
\[
H_t(x) \coloneqq \alpha_t\,\partial\phi(x) + \beta_t x.
\]
Then \eqref{eq:pphi} implies $x_t\in H_t(x_1)$.
Introduce the strongly convex function
\[
\psi_t(x) \coloneqq \alpha_t\phi(x) + \frac{\beta_t}{2}\|x\|^2,
\]
for which $\partial\psi_t(x) = H_t(x)$.
Since $\beta_t>0$ for $t>0$, $\psi_t$ is $\beta_t$-strongly convex, and therefore its convex conjugate
$\psi_t^\ast$ is differentiable at any point.
We have the standard 
duality 
(see, e.g.,~\citet[Proposition 11.3]{RockafellarVariatioal}): 
\begin{equation}\label{eq:subgrad_inverse}
y\in\partial\psi_t(x)
\quad\Longleftrightarrow\quad
x=\nabla\psi_t^\ast(y).
\end{equation}
In particular, although $H_t=\partial\psi$ may be set-valued, its inverse is single-valued, and we have 
\[
H_t^{-1}(y) = \nabla\psi_t^\ast(y)\qquad (t\in(0,1),\ \beta_t>0).
\]
Thus, for $x_t$ generated by \eqref{eq:xt}, the corresponding endpoint $x_1$ is recovered
uniquely as $x_1=\nabla\psi_t^\ast(x_t)$ (see Fig.~\ref{fig:prox}).
Then $x_1$ in \eqref{eq:vf_denoise} is deterministic on $x_t$, resulting in
\begin{equation}\label{eq:vf_pointwise_dual}
\boxed{
v_t(y)
=
\frac{\dot{\alpha}_t}{\alpha_t}y
+
\beta_t\biggl(\frac{\dot{\beta}_t}{\beta_t}-\frac{\dot{\alpha}_t}{\alpha_t}\biggr)\nabla\psi_t^\ast(y).
}
\end{equation}

We can further express the vector field $v_t$ by the proximal operator (see, e.g., \citet{ParikhBoyd_proximal} and \citet{RockafellarVariatioal}, for general references). 
Generally, for a convex function $F : \mathbb{R}^d \to \overline{\R}$
and $\lambda > 0$, the \emph{proximal operator} of $F$ with parameter $\lambda$ is defined by 
\begin{equation}
    \prox_{\lambda F}(y)
    \coloneqq \arg\min_{u\in\R^d} \Bigl\{
        F(u) + \frac{1}{2\lambda}\|u-y\|^2
    \Bigr\}.
    \label{eq:prox_def}
\end{equation}
A key identity is
\[
     x = \prox_{\lambda F}(y)
     \quad\Longleftrightarrow\quad
     y = x + \lambda \nabla F(x),
\]
whenever $F$ is differentiable, which allows the interpretation as an implicit Euler step.

The basic relation for our purpose is given by the next lemma. 
Note that the differentiability of $\phi$ is \emph{not} required.
\begin{lemma}\label{lem:prox_general}
For $t\in(0,1)$ with $\beta_t>0$,
\begin{equation}\label{eq:prox_identity}
\nabla\psi_t^\ast(y)
=
\prox_{\lambda_t \phi}\biggl( \frac{y}{\beta_t} \biggr),
\qquad
\lambda_t = \frac{\alpha_t}{\beta_t}.
\end{equation}
\end{lemma}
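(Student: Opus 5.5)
The plan is to compare optimality conditions and avoid computing $\psi_t^\ast$ directly. By the duality \eqref{eq:subgrad_inverse}, $x=\nabla\psi_t^\ast(y)$ holds exactly when $y\in\partial\psi_t(x)$. The proximal point $u^\ast=\prox_{\lambda_t\phi}(y/\beta_t)$ is characterized by its own first-order condition. So it suffices to show that these two inclusions coincide. Single-valuedness on both sides then gives the identity.

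\textbf{Step 1: subdifferential of $\psi_t$.} We have $\partial\psi_t(x)=\alpha_t\partial\phi(x)+\beta_t x$, as already used in Sec.~\ref{sec:VF_prox}. This needs the sum rule for $\alpha_t\phi$ plus the quadratic $\tfrac{\beta_t}{2}\|\cdot\|^2$. The rule holds without any qualification condition here, because the quadratic is finite and differentiable everywhere, so $\dom$ of the quadratic is all of $\R^d$ \citep[Exercise 8.8]{RockafellarVariatioal}. Hence
\[
y\in\partial\psi_t(x)\iff \frac{y}{\beta_t}-x\in\lambda_t\,\partial\phi(x).
\]
The right-hand side follows by dividing by $\beta_t>0$ and using $\alpha_t/\beta_t=\lambda_t$, with $\alpha_t>0$ for $t<1$.

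\textbf{Step 2: the proximal point.} Set $z=y/\beta_t$. The objective $u\mapsto\phi(u)+\frac{1}{2\lambda_t}\|u-z\|^2$ is proper, lsc, and $\lambda_t^{-1}$-strongly convex, since $\phi$ is proper lsc convex. Therefore it has a unique minimizer, and $\prox_{\lambda_t\phi}$ is well defined and single-valued. By Fermat's rule and the same sum rule, $u^\ast$ is the minimizer if and only if $0\in\partial\phi(u^\ast)+\lambda_t^{-1}(u^\ast-z)$, that is, $z-u^\ast\in\lambda_t\partial\phi(u^\ast)$. This is exactly the condition in Step 1 with $x=u^\ast$.

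\textbf{Step 3: conclusion.} Combining the two steps gives $y\in\partial\psi_t(u^\ast)$, so $u^\ast=\nabla\psi_t^\ast(y)$ by \eqref{eq:subgrad_inverse}.

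As a direct check, one can also write
\[
\psi_t^\ast(y)=\sup_x\{\langle y,x\rangle-\alpha_t\phi(x)-\tfrac{\beta_t}{2}\|x\|^2\}=\tfrac{\|y\|^2}{2\beta_t}-\alpha_t\inf_x\{\phi(x)+\tfrac{1}{2\lambda_t}\|x-z\|^2\}.
\]
The infimum is attained at the prox point, so $\psi_t^\ast$ is a quadratic minus $\alpha_t$ times a Moreau envelope. The Moreau envelope gradient formula then recovers the same identity, and this could serve as an alternative route.

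The only delicate point is Step 1: justifying the sum rule and Fermat's rule for a nonsmooth, possibly extended-valued $\phi$, for example one equal to $+\infty$ off the support of $P_1$. Adding a finite smooth convex function keeps both rules exact, so I expect no real obstruction.
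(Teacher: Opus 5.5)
Your proof is correct and follows essentially the paper's argument: take the first-order optimality condition for $\prox_{\lambda_t\phi}(y/\beta_t)$, multiply by $\beta_t$ to get $y\in\partial\psi_t(u)$, and invert via the duality \eqref{eq:subgrad_inverse}. Your justifications of the sum rule (the quadratic term is finite and smooth everywhere) and of existence and uniqueness of the prox, together with the Moreau-envelope cross-check, are valid additions that the paper leaves implicit.
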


\begin{proof}
Let $u=\prox_{\lambda_t\phi}(y/\beta_t)$.
The optimality condition for the convex function $u\mapsto \lambda_t \phi(u) + \frac{1}{2}\|u-y/\beta_t\|^2$ yields  
\[
0 \in \lambda_t \partial\phi(u) + \bigl(u-y/\beta_t\bigr).
\]
Multiplying $\beta_t$ gives 
$y \in \beta_t u + \alpha_t\,\partial\phi(u) = \partial\psi_t(u)$.
By duality \eqref{eq:subgrad_inverse}, this means $u=\nabla\psi_t^\ast(y)$.
\end{proof}

As a corollary of \eqref{eq:vf_pointwise_dual}, we have the following expression:
\begin{theorembox}
\begin{corollary}\label{cor:vf_prox_general}
For $t\in(0,1)$ with $\beta_t>0$,
\begin{equation}\label{eq:vf_prox_general}
\boxed{
v_t(y)
=
\frac{\dot{\alpha}_t}{\alpha_t}y
+
\beta_t\biggl(\frac{\dot{\beta}_t}{\beta_t}-\frac{\dot{\alpha}_t}{\alpha_t}\biggr)
\prox_{\lambda_t\phi}\!\biggl( \frac{y}{\beta_t} \biggr).
}
\end{equation}
\end{corollary}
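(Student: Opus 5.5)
The plan is to substitute the identity of Lemma~\ref{lem:prox_general} directly into the boxed expression \eqref{eq:vf_pointwise_dual}. Fix $t\in(0,1)$ with $\beta_t>0$; since $\alpha_t$ is also positive on $(0,1)$, the coefficient $\dot{\alpha}_t/\alpha_t$ and the ratio $\lambda_t=\alpha_t/\beta_t>0$ are well defined.

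The first step is to recall how \eqref{eq:vf_pointwise_dual} was obtained. Under the OT plan $\pi^\star$, we have $x_0\in\partial\phi(x_1)$, so $x_t\in\partial\psi_t(x_1)$. By \eqref{eq:subgrad_inverse}, $x_1=\nabla\psi_t^\ast(x_t)$ is then a deterministic function of $x_t$. Hence the conditional expectation $\mathbb{E}[x_1\mid x_t=y]$ in \eqref{eq:vf_denoise} equals $\nabla\psi_t^\ast(y)$, and this gives \eqref{eq:vf_pointwise_dual} pointwise in $y$.

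The second step is to apply Lemma~\ref{lem:prox_general}, which states
\[
\nabla\psi_t^\ast(y)=\prox_{\lambda_t\phi}(y/\beta_t)
\]
for every $y\in\R^d$. The proximal operator is single-valued and defined everywhere, because $u\mapsto\lambda_t\phi(u)+\tfrac12\|u-y/\beta_t\|^2$ is proper, lsc and strongly convex, so it has a unique minimizer. Replacing $\nabla\psi_t^\ast(y)$ by this expression in \eqref{eq:vf_pointwise_dual} yields \eqref{eq:vf_prox_general} verbatim.

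There is no real obstacle here; the only point that needs care is the identification of $v_t$ with \eqref{eq:vf_pointwise_dual}. The conditional expectation is only defined for $y$ in the support of the law of $x_t$, up to null sets. So \eqref{eq:vf_prox_general} should be read as the canonical version of the marginal vector field, extended to all of $\R^d$ by the right-hand side, which is defined everywhere. I would state this convention explicitly and note that the formula holds for every $y$ once this version is fixed.
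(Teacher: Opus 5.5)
Your proposal is correct and is the same argument as the paper: substitute Lemma~\ref{lem:prox_general}, $\nabla\psi_t^\ast(y)=\prox_{\lambda_t\phi}(y/\beta_t)$, into \eqref{eq:vf_pointwise_dual}. Your remark that the conditional expectation fixes $v_t$ only up to null sets of the law of $x_t$, so the right-hand side should be read as the canonical everywhere-defined version, is a sensible clarification that the paper leaves implicit.
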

\end{theorembox}

Based on the expressions \eqref{eq:vf_pointwise_dual} and \eqref{eq:vf_prox_general}, we can write the ODE as gradient flows with appropriate potentials; \eqref{eq:vf_prox_general} gives a gradient flow of the Moreau envelope.  See Sec.~\ref{sec:fm-gradient}.

\subsubsection{Proximal map as a denoiser}
\label{sec:denoiser}
Similarly to diffusion models, where $\mathbb{E}[x_1|x_t]$ is often regarded as a denoiser, we can give a precise \emph{denoiser interpretation} to $\prox$ term for OT-CFM in the following
``noise-corrupted observation'' sense.
For simplicity, suppose that there is an OT (Brenier) map
\[
T=\nabla \phi:\mathbb{R}^d\to\mathbb{R}^d
\quad\text{such that}\quad
T_{\#}P_1 = P_0,
\]
where $\phi$ is a differentiable Brenier potential and $T_{\#}P_1$ is the pushforward of $P_1$ by $T$.
The existence of $T$ is guaranteed if, for example, $P_1$ has a density function~\citep{Brenier1991-lg}. 
Sample a clean point $x_1\sim P_1$ and define its OT code as a noise by $z \coloneqq T(x_1)$.
Because $T_{\#}P_1=P_0$, the marginal distribution of $z$ is 
$\mathcal{N}(0,I_d)$.
Now define the observed (corrupted) variable by
$
y \coloneqq x_1 + \lambda z = x_1 + \lambda T(x_1) \eqqcolon F_\lambda(x_1)$. 
Then the proximal map recovers the clean sample exactly: $\prox_{\lambda\phi}(y)=
x_1$.  
Hence $\prox_{\lambda\phi}$ is a perfect denoiser for this specific
OT-induced corruption mechanism: it inverts the forward ``noising'' transform
$F_\lambda:x\mapsto x+\lambda T(x)$.
See Fig.~\ref{fig:prox} for the relationship among $(x_1,y,z)$.

Two clarifications are important.
First, although $z=T(x_1)$ is marginally Gaussian, \kf{$z$ is deterministically decided by $x_1$.}
This makes a clear contrast to diffusion models, where independent Gaussian noise is added for noising.
Second, the term ``denoiser'' here should be understood in the inverse-map sense:
the forward corruption is a bijection,
and the proximal map provides its exact inverse, rather than a statistical
regression such as $\mathbb{E}[x_1|y]$.

\subsubsection{An example of potential and vector field}
\label{sec:example}

We give an example of an Aleksandrov--Brenier potential and the associated set-valued OT map, and show how they give the vector field when the target distribution is supported on a low-dimensional manifold. 

Let $d=2$, $m=1$, and fix $c\in \R$. Consider the manifold 
\(
\M\coloneqq\{(x,c):x\in\R\} 
\) 
(a horizontal line). 
Let $P_0\coloneqq\mathcal N(0,I_2)$ be the standard Gaussian on $\R^2$ and
\(
P_1\coloneqq\mathcal N(0,1)\otimes \delta_c
\)
be the target distribution supported on $\M$,
where $\delta_c$ is the Dirac measure at $c$.

The OT map $S:\R^2\to\R^2$ from $P_0$ to $P_1$ is given by $S(u,v)=(u,c)$, projection onto $\M$. The coupling
$(x_0,x_1)=( (U,V), (U,c))$ with $(U,V)\sim P_0$ has cost
\(
\mathbb E\|x_0-x_1\|^2 = \mathbb E(V-c)^2,
\)
which is minimal among all couplings because the $u$-marginals already match and the cost of $v$ is fixed. 
The map $S$ is the gradient of the convex function
\(
\varphi(u,v)\coloneqq\frac12 u^2 + cv, 
\)
i.e., $\nabla\varphi(u,v)=(u,c)=S(u,v)$.

For the (set-valued) OT map from $P_1$ to $P_0$,  the Aleksandrov--Brenier potential $\phi$ is  
the convex conjugate $\phi=\varphi^\ast$ (similarly to \eqref{eq:subgrad_inverse}), which is explicitly given by 
\begin{equation*}
\phi(p,q)=\sup_{(u,v)\in\R^2}\{pu+qv-\frac{1}{2}u^2-cv\}
= \frac12 p^2 + \iota_{\{c\}}(q),
\end{equation*}
where $\iota_{\{c\}}(q)=0$ if $q=c$ and $+\infty$ otherwise.
For any $z=(p,c)\in \M$, the subdifferential of $\phi$ is
\[\partial\phi(p,c)=\{(p,s): s\in\R\}.
\]
The proximal operator is given by 
\[
\prox_{\lambda_t\phi}\bigl((u,v)/\beta_t\bigr)  = \bigl( \frac{u}{\alpha_t+\beta_t}, c\bigr).
\]
Therefore, for $\alpha_t+\beta_t = 1$, we have $v_t = (0, (\dot{\alpha}_t/\alpha_t)( v - c))$, which is normal to the manifold $\M$. See Fig.~\ref{fig:OTmap}.

\begin{figure}
    \centering
    \includegraphics[width=0.5\linewidth]{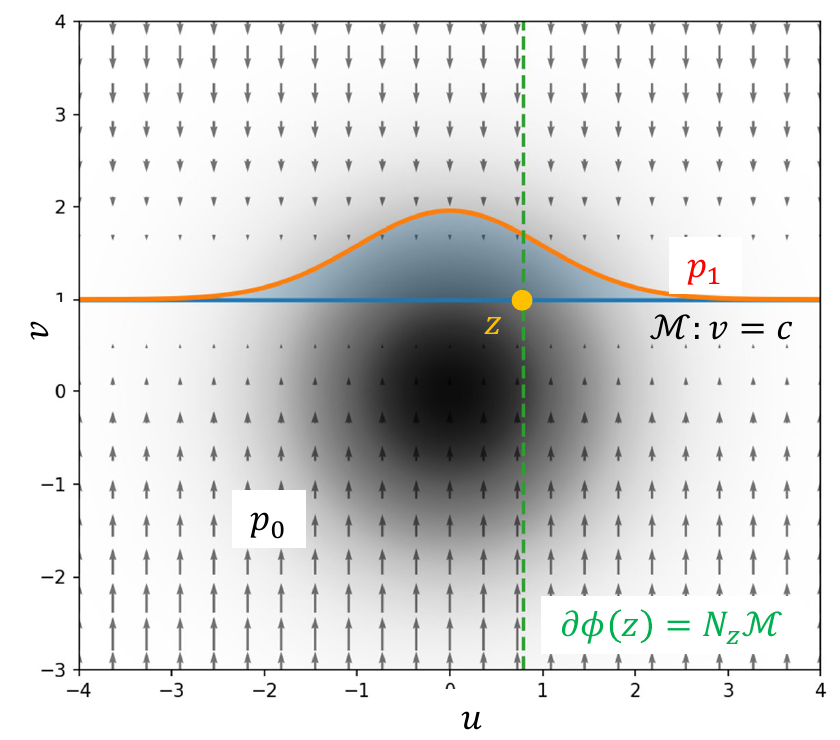}
    \caption{Example of low-dimensional target probability.
    \kfff{Here, $P_1$ is a singular distribution with a density on the 1D manifold, $\mathcal{M}$, shown in orange, while $P_0$ is the normal distribution $N(0,I_2)$ with density in 2D, as shown in the gray density plot. The subdifferential of the Aleksandrov-Brenier potential map, $\partial\phi,$ at $z\in \M$, is normal to $\mathcal{M}$ everywhere in this case, as shown by gray arrows. }}
    \label{fig:OTmap}
\end{figure}

\section{Extensions}
\label{sec:discussion}

\subsection{Minibatch OT-CFM} \label{sec:minibatch-OT}
The framework thus far has assumed that the conditional paths are constructed by the \emph{population} OT. However, in practical OT-CFM methods, the population OT is not tractable, and the OT coupling is computed with a finite minibatch of $x_0$ and $x_1$ using SGD.  

Let $x_0^{(1)},\ldots,x_0^{(n)}$ and $x_1^{(1)},\ldots,x_1^{(n)}$ be the minibatches of $x_0$ and $x_1$, respectively.
Since an OT between the empirical distributions $\hat{P}_0 =\tfrac{1}{n}\sum_{\ell=1}^n \delta_{x_0^{(\ell)}}$ and $\hat{P}_1 =\tfrac{1}{n}\sum_{\ell=1}^n \delta_{x_1^{(\ell)}}$ is given by a permutation in most of the cases,%
\footnote{
  In general, OT is a convex combination of permutations and hence may not be unique~\citep[Proposition~1.3.1]{panaretos2020invitation}.
}
we assume that $x_0^{(\ell)}$ and $x_1^{(\ell)}$ are coupled by the OT after reordering. 
We can apply exactly the same arguments as Sec.~\ref{sec:VF_prox} and obtain the Aleksandrov--Brenier potential $\phi_n$ such that 
\[
x_0^{(\ell)} \in \partial\phi_n(x_1^{(\ell)}), \qquad\ell=1,\ldots,n.
\]
Since adding a constant to $\phi_n$ does not change this property, we can assume $\phi_n(0)=0$ w.l.o.g.  It is known \citep[Theorem 7.6,][]{RockafellarVariatioal} that
\hb{the sequence $(\phi_n)_{n\ge1}$ has a subsequence $(\phi_{n_k})_{k\ge1}$ epi-converging to some $\phi$ as $k\to\infty$.}
\hb{For this $(\phi_{n_k})_k$ and $\phi$,}
we can guarantee $\prox_{\lambda_{t}\phi_{n_k}} \to\prox_{\lambda_t\phi}$ ($k\to\infty$)
uniformly for any compact set \citep[Theorems 7.33 and 12.35,][]{RockafellarVariatioal}.
\hb{Let $v_t$ and $v_t^{(n_k)}$ denote the vector field in the form \eqref{eq:vf_prox_general} but with the potential $\phi$ and $\phi_{n_k}$, respectively.
The uniform convergence of the proximal operator implies the uniform convergence of the vector field, for increasing batch size, }
\[
v^{(n_k)}_t(y) \to v_t(y)
\]
on any compact subset of $\R^d\times (0,1)$ \hb{over $(y,t)$}.
\kf{Based on this convergence, with some additional assumptions, we can also derive the convergence of the ODE solutions and pushforward distributions. See Sec.~\ref{sec:app:stability-pushforward}}. From this, 
the analysis with the population OT in Sec.~\ref{sec:prox_OTCFM} \kf{describes also} the behavior of the minibatch OT-CFM for a large batch size.

\subsection{Proximal operator view to generative models}
\label{sec:proximal_view}

The proximal formulation developed in Sec.~\ref{sec:prox_OTCFM} provides a unifying viewpoint for a wider class of generative models.
\hb{While a similar observation has been made by \citet[Appendix~A.1]{Shi2023NeurIPS}, we explicitly write down the denoiser with the proximal operator below.}%
\footnote{\hb{Unlike FM described so far in Sec.~\ref{sec:prox_OTCFM}, we regard time $s=0$ and $s=\infty$ (or $s=T$ for SB) as the target and noise distributions, respectively, by following the convention.}}

\noindent 
{\bf Diffusion models.} 
Consider the variance-preserving diffusion model
\[
X_s = \beta_s X_0 + \sigma_s Z_s, \qquad Z_s \sim \mathcal{N}(0, I),
\]
where $s \in [0,\infty)$, $\beta_s \downarrow 0$, $\sigma_s \uparrow 1$, and
$\beta_s^2 + \sigma_s^2 = 1$.
The target distribution is recovered at $s=0$, while $X_s$ converges to $\mathcal{N}(0, I)$ as $s \to \infty$.
The drift term of the reverse process is the same as  \eqref{eq:vf_denoise}. A key connection with the denoising score matching is Tweedie's formula \citep{Tweedie1984}:
\[
\mathbb{E}[\beta_s X_0 \mid X_s = y]
= y + \sigma_s^2 \nabla \log p_s(y),
\]
where $p_s$ denotes the density of $X_s$.
Using Lemma~\ref{lem:prox_first_order} in Appendix, we can approximate this denoiser by the proximal operator of $-\log p_s$ with the noise parameter $\sigma_s^2$ as long as the noise is sufficiently small: 
\[
\mathbb{E}[\beta_s X_0 \mid X_s = y]
= \prox_{\sigma_s^2 (-\log p_s)}(y) + O(\sigma_s^4). 
\]

\noindent
{\bf Schr\"odinger bridges (SB).}
Given a reference path measure $R$ (typically Brownian motion),
SB seeks a path measure $(P_s^{\text{SB}})_{s\in[0,T]}$ solving
\[
P^{\text{SB}} = \arg\min_{P\in\mathcal P_2(\mathbb R^d\times[0,T])} \mathrm{KL}(P \,\|\, R),
\]
subject to the boundary conditions $P^{\text{SB}}_0=P_0$ and $P^{\text{SB}}_T=P_1$ \cite{Leonard2014Survey}.
This problem can be equivalently regarded as an entropic OT.
With the Brownian motion reference, the joint bridge density at times $0$ and $s$ factorizes into
\[
  p^{\text{SB}}(x_s,x_0) \propto \psi_s(x_s)K_s(x_s|x_0)\varphi_0(x_0),
\]
where $K_s(x_s|x_0)\coloneqq\mathcal{N}(x_s;x_0,2\epsilon sI)$ is the transition kernel, and 
$(\varphi,\psi)$ are the Schr{\"o}dinger potentials (in the exponential domain), which are the solutions to the Kolmogorov equation.
Then we obtain the Laplace approximation to the denoiser as follows (see Sec.~\ref{sec:SB} for the full derivation):
\[
\mathbb{E}[X_0|X_s=x_s]
= x_s + 2\epsilon s\nabla\log\varphi_0(x_s) + O((\epsilon s)^2).
\]
By a similar argument to  
the diffusion model case,
we can express the backward process of SB with infinitesimal noise by
$\prox_{2\epsilon s(-\log\varphi_0)}+O((\epsilon s)^2)$.

\subsection{Forward proximal operator}

While we discuss the transport from $P_1$ to $P_0$ in this paper, we can also consider a formulation based on the proximal operator for the transport from $P_0$ to $P_1$.  In that case, since $P_0$ has a density function, there is a differentiable Brenier potential $\varphi$ such that $S\coloneqq\nabla \varphi$ gives the transport map $x_1 = S(x_0)$. Although it may give mathematically simpler arguments, we use the reverse (set-valued) map ($x_1\mapsto x_0$) to interpret the proximal operator as a denoiser (see Sec.~\ref{sec:denoiser}),  
an understanding parallel to diffusion models. Additionally, although the Lyapunov analysis in Sec.~\ref{sec:lyap-prox-epi} is also  possible for the OT map $S$, the theorem requires a stronger smoothness assumption for $S,$ \nc{which are not guaranteed by the classical Cafarelli regularity theory \citep{Caffarelli2003ElementaryReview}} See Sec.~\ref{sec:lyapunov_forward}.

\section{Analysis of Terminal Lyapunov Exponents}
\label{sec:lyap-prox-epi}

This section analyzes the terminal behavior of FM under the low-dimensional manifold hypothesis, based on the proximal operator framework in Sec.~\ref{sec:prox_OTCFM}. We first discuss the semiderivatives (directional derivatives) of the proximal operators using second-order variational characterization.
See, e.g., \citet{RockafellarVariatioal}, for the general theory. 

In the sequel, assume that $\M\subset \R^d$ is a $C^2$ submanifold of dimension $m(<d)$ and the support of $P_1$ is $\M$.  
For $x_1\in \M$,  $T_{x_1}\M$ and $N_{x_1}\M$ denote the tangent and normal spaces, respectively, 
(with respect to the inner product of $\R^d$), so that
$\R^d \cong T_{x_1}\M \oplus N_{x_1}\M$ and $\dim N_{x_1}\M = d-m$.

\subsection{Lyapunov exponents}
Consider, in general, the ODE with time $\tau\in[0,\infty)$
\[
\frac{d x(\tau)}{d \tau}=u_{\tau}(x(\tau)),
\]
and the associated flow $\Phi_{\tau}:\R^d\to\R^d$; $x(\tau)=\Phi_{\tau}(x(0))$.
For an initial $x(0)=x_0$ and nonzero $v\in\R^d$, the {\em Lyapunov exponent} in the direction $v$
is defined (if it exists) by
\begin{equation}
\lambda(x_0,v)
\coloneqq\lim_{\tau\to+\infty}\frac{1}{\tau}\log\bigl\|D_x\Phi_{\tau}(x_0)\,v\bigr\|,
\end{equation}
where $D_x\Phi_{\tau}$ is the Jacobian of the flow with respect to the initial condition and $\|\cdot\|$ is any norm on $\R^d$%
---all norms are equivalent, so $\lambda$ is norm-independent.
In particular, negative (resp., positive) exponents correspond to exponential contraction (resp., expansion) of nearby trajectories in $\tau$-time.

\subsection{Second derivatives and tangent/normal splitting}
\nc{To define the Lyapunov exponents of the FM ODE in different directions} around the manifold, we \nc{recall the notion of} second epi-derivatives \nc{from convex analysis}. 
For a function $g:\mathbb R^d\to\overline{\R}$, let $\dom g\coloneqq\{z\in\mathbb R^d: g(z)<+\infty\}$.

Let $f$ be a proper, lsc, convex function, and fix $x\in\dom f$ and $v\in\partial f(x)$.
The \emph{second epi-derivative} of $f$ at $x$ relative to $v$ is defined by 
\begin{equation}
\label{eq:epi-derivative}
 d^2 f(x\mid v)(w)
 \coloneqq\liminf_{t\downarrow 0, w'\to w}
 \frac{ f(x+t w')-f(x)-t\langle v,w'\rangle}{ t^2/2},
\end{equation}
(in $\overline{\R}$), and the \emph{effective domain} by $\dom d^2 f(x\mid v)\coloneqq\{w: d^2 f(x\mid v)(w)<+\infty\}$.  Note that, by the definition of subgradient, $d^2f(x\mid v)(w)\geq 0$ for $v\in \partial f(x)$. 

We will use the second epi-derivative to identify tangent directions.
As in Sec.~\ref{sec:prox_OTCFM}, let $\phi$ be the Aleksandrov--Brenier potential of the transport from $P_1$ to $P_0$, and fix $(x_1,x_0)$ with $x_0\in\partial\phi(x_1)$.   Define the shifted convex potential
\begin{equation}
 \label{eq:shiftedPotential}f(\cdot)\coloneqq\phi(\cdot)-\langle x_0,\cdot\rangle.
\end{equation}
Then $0\in\partial f(x_1)$, so that $x_1$ is a (global) minimizer of $f$.

We make the following three assumptions about $f$ at $x_1$ for our theoretical analysis.  The restriction of $f$ to $\M$ is denoted by $f|_{\M}$, and $\mathrm{aff}(S)$ is the affine hull $\{\sum_{k=1}^K \alpha_k s_k\mid s_k\in S, \alpha_k\in\R, \sum_{k=1}^K \alpha_k = 1, k\in\mathbb{N}\}$.

\begin{itemize}[topsep=12pt, partopsep=0pt, itemsep=6pt, parsep=0pt, leftmargin=36pt]
\item[{\bf (A1)}] $f|_{\M}$ is $C^2$ in a neighborhood of $x_1\in\M$;
\item[{\bf (A2)}] $\dim \mathrm{aff}(\partial f(x_1))=d-m$; 
\item[{\bf (A3)}] $0\in\mathrm{ri}(\partial f(x_1))$ (relative interior in $\mathrm{aff}(\partial f(x_1))$).
\end{itemize}
These assumptions formalize that the convex potential $f$ is smooth along the manifold $\M$ (A1), but sharp in the normal directions. 
(A2) means that the variability of subgradients spans the full normal space and encodes the normal geometry.
(A3) rules out boundary/face-degenerate cases.

Under (A1)--(A3), the next proposition identifies the tangent space by the effective domain.  See Sec.~\ref{sec:Proof_prop} for the proof.
\begin{proposition}
\label{prop:dom-second-subderivative}
Under \textnormal{(A1)}, \textnormal{(A2)}, and \textnormal{(A3)}, we have 
\begin{equation*}
   \dom d^2 f(x_1\mid 0) \;=\; T_{x_1}\M, 
\end{equation*}
or equivalently,
$
d^2 f(x_1\mid 0)(w)<+\infty \iff w\in T_{x_1}\M$. 
\end{proposition}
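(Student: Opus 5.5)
The plan is to prove the two inclusions separately. Both rely on a single structural fact: under (A1)--(A3), the subdifferential $\partial f(x_1)$ contains a relative ball around $0$ in the normal space $N_{x_1}\M$. Throughout, $L\coloneqq\mathrm{par}(\mathrm{aff}(\partial f(x_1)))$ denotes the linear subspace parallel to the affine hull. By (A3), $0\in\partial f(x_1)$, so $\mathrm{aff}(\partial f(x_1))=L$. By (A2), $\dim L=d-m$. By (A3) again, there is $r>0$ with $L\cap \overline{B}_r(0)\subset\partial f(x_1)$.

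\emph{Step 1: the local $O(t^2)$ estimate along the manifold.} Fix $w\in T_{x_1}\M$. I will take a $C^2$ curve $\gamma$ in $\M$ with $\gamma(0)=x_1$ and $\gamma'(0)=w$, and set $w'_t\coloneqq(\gamma(t)-x_1)/t$, so that $w'_t\to w$ as $t\downarrow0$. By (A1), $g\coloneqq f\circ\gamma$ is $C^2$ near $0$, and hence finite there. Since $x_1$ minimizes $f$ globally (because $0\in\partial f(x_1)$), $g$ has a minimum at $0$, so $g'(0)=0$. Therefore
\[
f(x_1+tw'_t)-f(x_1)=g(t)-g(0)=O(t^2).
\]
This estimate is used twice below.

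\emph{Step 2: $T_{x_1}\M\subset\dom d^2 f(x_1\mid 0)$.} Plug the sequence $w'_t$ from Step 1 into the definition \eqref{eq:epi-derivative} with $v=0$. The difference quotient equals $(g(t)-g(0))/(t^2/2)$, which stays bounded as $t\downarrow0$. Since the liminf is taken over all $w'\to w$, it is at most this bounded value, so $d^2f(x_1\mid0)(w)\le 2\sup_{t\text{ small}}|g(t)-g(0)|/t^2<\infty$.

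\emph{Step 3: $L=N_{x_1}\M$.} I expect this to be the main obstacle. Let $v\in\partial f(x_1)$ and $w\in T_{x_1}\M$. The subgradient inequality gives $f(x_1+tw'_t)\ge f(x_1)+t\langle v,w'_t\rangle$. Combined with Step 1, this gives $t\langle v,w'_t\rangle\le O(t^2)$. Dividing by $t$ and letting $t\downarrow0$ yields $\langle v,w\rangle\le0$. Replacing $w$ by $-w$ gives $\langle v,w\rangle=0$. Hence $\partial f(x_1)\subset N_{x_1}\M$, so $L\subset N_{x_1}\M$. Both spaces have dimension $d-m$ by (A2), so $L=N_{x_1}\M$.

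\emph{Step 4: $\dom d^2 f(x_1\mid 0)\subset T_{x_1}\M$.} Let $w\notin T_{x_1}\M$ and write $w=w_T+n$ with $n\in N_{x_1}\M$, $n\neq0$. By Step 3 and (A3), the vector $v\coloneqq r\,n/\|n\|$ belongs to $\partial f(x_1)$. For any $t>0$ and $w'$, the subgradient inequality gives
\[
\frac{f(x_1+tw')-f(x_1)}{t^2/2}\ge \frac{2\langle v,w'\rangle}{t}.
\]
As $w'\to w$, we have $\langle v,w'\rangle\to r\|n\|>0$, so the right-hand side tends to $+\infty$ as $t\downarrow0$. Hence $d^2f(x_1\mid0)(w)=+\infty$. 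This proves the reverse inclusion, and together with Step 2 the claimed equality $\dom d^2 f(x_1\mid 0)=T_{x_1}\M$.
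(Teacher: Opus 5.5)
Your proposal is correct and follows essentially the same route as the paper. Both arguments identify $\aff(\partial f(x_1))$ with $N_{x_1}\M$, use the normal subgradient $r\,n/\|n\|$ from the relative ball to force $d^2 f(x_1\mid 0)(w)=+\infty$ off $T_{x_1}\M$, and use a $C^2$ curve in $\M$ with $g'(0)=0$ to bound the quotient on $T_{x_1}\M$. The only small variation is in your Step 3: you show $\partial f(x_1)\subset N_{x_1}\M$ directly from the subgradient inequality along the curve, whereas the paper (Lemma~\ref{lma:ps2-sufficient}) shows that every subgradient has tangential projection $\nabla_{\M} f(x_1)$, and the two yield the same conclusion here.
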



For notational simplicity, we introduce 
\[
P_\lambda\coloneqq\prox_{\lambda f}\qquad (\lambda>0).
\]
We have the following expression for the semiderivative of the proximal operator. See Sec.~\ref{sec:proof_lemma} for the proof. 

\begin{lemma}\label{lem:prox-dir-der}
Assume that $f$ is a proper, lsc, and convex function such that  $0\in\partial f(x_1)$.
Fix $\lambda>0$ and $h\in\mathbb R^d$.
If $f$ is twice epi-differentiable at $x_1$ relative to $0$,
then the following \emph{semiderivative} exists:
\[
DP_\lambda(x_1;h)\coloneqq\lim_{\varepsilon\downarrow 0, h'\to h}\frac{P_\lambda(x_1+\varepsilon h')-P_\lambda(x_1)}{\varepsilon},
\]
and is the unique minimizer of the strongly convex problem:
\begin{equation}\label{eq:prox-der-min}
 DP_\lambda(x_1;h)\in\arg\min_{w\in\mathbb R^d}
 \Big\{ d^2 f(x_1|0)(w)+\frac{1}{\lambda}\|w-h\|^2\Big\}.
\end{equation}
\end{lemma}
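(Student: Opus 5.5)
Since $0\in\partial f(x_1)$, the point $x_1$ is a fixed point of the proximal map, $P_\lambda(x_1)=x_1$, by the optimality condition used in the proof of Lemma~\ref{lem:prox_general}. The plan is to write the difference quotient of $P_\lambda$ at $x_1$ as the minimizer of a rescaled proximal problem. Then epi-convergence of second-order difference quotients of $f$ should pass to convergence of these minimizers, via \citet[Theorem 7.33]{RockafellarVariatioal}. This is essentially the mechanism behind proto-differentiability of proximal maps in \citet[Ch.~13]{RockafellarVariatioal}.

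\textbf{Step 1 (rescaling).} Fix $\varepsilon>0$ and $h'$. Substitute $u=x_1+\varepsilon w$ into the definition \eqref{eq:prox_def} of $P_\lambda(x_1+\varepsilon h')$, subtract the constant $f(x_1)$, and divide by $\varepsilon^2/2$. This gives
\[
w_\varepsilon(h')\coloneqq\frac{P_\lambda(x_1+\varepsilon h')-x_1}{\varepsilon}
=\arg\min_{w}\Bigl\{\Delta_\varepsilon^2 f(w)+\frac1\lambda\|w-h'\|^2\Bigr\},
\qquad
\Delta_\varepsilon^2 f(w)\coloneqq\frac{f(x_1+\varepsilon w)-f(x_1)}{\varepsilon^2/2}.
\]
The linear term $\varepsilon\langle 0,w\rangle$ of \eqref{eq:epi-derivative} is absent because the subgradient is $0$. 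We also have $\Delta_\varepsilon^2 f\ge 0$, since $x_1$ minimizes $f$.

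\textbf{Step 2 (epi-convergence).} Twice epi-differentiability means that $\Delta_\varepsilon^2 f$ epi-converges to $d^2f(x_1\mid 0)$ as $\varepsilon\downarrow0$. The term $\frac1\lambda\|w-h'\|^2$ converges continuously to $\frac1\lambda\|w-h\|^2$ as $(\varepsilon,h')\to(0,h)$. Adding a continuously converging finite term preserves epi-convergence \citep[Theorem 7.46]{RockafellarVariatioal}. Hence the objectives $G_{\varepsilon,h'}$ epi-converge to
\[
G(w)=d^2f(x_1\mid0)(w)+\tfrac1\lambda\|w-h\|^2.
\]

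\textbf{Step 3 (uniform level-boundedness and uniqueness).} This step needs care and is the main obstacle. We test the objective at $w=0$, where $\Delta^2_\varepsilon f(0)=0$, and use $\Delta^2_\varepsilon f\ge0$. This gives $\frac1\lambda\|w_\varepsilon(h')-h'\|^2\le\frac1\lambda\|h'\|^2$, so the minimizers stay in a bounded set uniformly for $h'$ near $h$. This is the eventual level-boundedness needed in \citet[Theorem 7.33]{RockafellarVariatioal}. For uniqueness of the limit, note that $d^2f(x_1\mid0)$ is the epi-limit of convex functions. It is therefore lsc and convex, and it is proper because its value at $0$ is $0$ and it is nonnegative. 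Consequently $G$ is strongly convex and lsc, and it has a unique minimizer $\bar w$.

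\textbf{Step 4 (conclusion).} Theorem 7.33 then gives that every cluster point of $w_\varepsilon(h')$ minimizes $G$. By boundedness and uniqueness, $w_\varepsilon(h')\to\bar w$ along every sequence $\varepsilon\downarrow0$, $h'\to h$. So the semiderivative exists and equals the unique minimizer in \eqref{eq:prox-der-min}. Alternatively, the $h'\to h$ part can be handled separately using nonexpansiveness of $P_\lambda$. This gives $\|w_\varepsilon(h')-w_\varepsilon(h)\|\le\|h'-h\|$, which reduces the claim to the fixed-direction limit.
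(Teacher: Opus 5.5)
Your proof is correct. It takes a different route from the paper, which gives no argument of its own: it cites Exercise~13.45 of Rockafellar--Wets (semidifferentiability of proximal maps for prox-regular, prox-bounded functions). The only thing the paper checks is that a convex $f$ is prox-regular with $\rho=0$ and has prox-threshold $+\infty$, so the exercise applies for every $\lambda>0$.

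You instead prove the convex case directly:
\begin{itemize}
\item $P_\lambda(x_1)=x_1$, so the difference quotient of $P_\lambda$ at $x_1$ is exactly the minimizer of $\Delta^2_\varepsilon f+\frac1\lambda\|\cdot-h'\|^2$.
\item These objectives epi-converge to the limiting objective.
\item Theorem~7.33, together with uniqueness of the limiting minimizer, then forces convergence along every sequence.
\end{itemize}
This is the mechanism behind the cited exercise, specialized to the convex case, where $P_\lambda$ is globally single-valued and nonexpansive and no localization is needed.

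What the citation buys is brevity and generality. It covers nonconvex prox-regular $f$ (for small $\lambda$), and it gives the converse: semidifferentiability of $P_\lambda$ at $x_1$ is equivalent to twice epi-differentiability.

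What your version buys is a self-contained argument that makes explicit what the citation hides:
\begin{itemize}
\item why every $\lambda>0$ is allowed;
\item where the uniform bounds come from, namely $\Delta^2_\varepsilon f\ge 0$;
\item why the limiting problem really is strongly convex with a unique minimizer.
\end{itemize}
For the last point you correctly use that $d^2 f(x_1\mid 0)$ is an actual epi-limit, which again relies on twice epi-differentiability. Convexity passes to epi-limits but not, in general, to mere lower epi-limits.

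One small imprecision: testing the objective at $w=0$ bounds the minimizers, not the level sets. Eventual level-boundedness follows just as easily from $G_{\varepsilon,h'}(w)\ge\frac1\lambda\|w-h'\|^2$. In any case, the bounded-minimizer cluster-point argument (recovery sequences plus the liminf inequality) already suffices.
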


Using the expression of Lemma \ref{lem:prox-dir-der}, we obtain the following lemma.  See Sec.~\ref{sec:proof_lma_DProx} for the proof. 
\begin{lemma} \label{lma:DProx}
Let $h_T$ denote the orthogonal projection of $h$ onto $T_{x_1}\M$.  With the above notations, \\
(i) For any $h\in \R^d$, $DP_\lambda(x_1;h)=DP_\lambda(x_1;h_T)\in T_{x_1}\M$.\\
(ii) If $h\in N_{x_1}\M$, then $DP_\lambda(x_1;h)=0$.\\
(iii) If $h\in T_{x_1}\M$, then $DP_\lambda(x_1;h)=h + O(\sqrt{\lambda})\|h\|$.
\end{lemma}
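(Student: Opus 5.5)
The plan is to work entirely from the variational characterization in Lemma~\ref{lem:prox-dir-der}, applied with the shifted potential $f$ from \eqref{eq:shiftedPotential}, for which $0\in\partial f(x_1)$. We assume, as in that lemma, that $f$ is twice epi-differentiable at $x_1$ relative to $0$. Write $q\coloneqq d^2 f(x_1\mid 0)$. Then $DP_\lambda(x_1;h)$ is the unique minimizer of
\[
J_h(w)\coloneqq q(w)+\tfrac1\lambda\|w-h\|^2 .
\]
We first record two elementary facts about $q$:
\begin{itemize}
\item $q\ge 0$, which is noted after \eqref{eq:epi-derivative}.
\item $q(0)=0$, obtained by taking $w'\equiv 0$ in the $\liminf$, so that $0\in\dom q$.
\end{itemize}
By Proposition~\ref{prop:dom-second-subderivative}, $\dom q=T_{x_1}\M$. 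Since $J_h(0)<\infty$, the minimizer has finite objective value. Hence it lies in $\dom q=T_{x_1}\M$, which already gives the membership claim in (i).

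For (i), restrict attention to $w\in T_{x_1}\M$ and split $h=h_T+h_N$ with $h_N\in N_{x_1}\M$. Pythagoras gives
\[
\|w-h\|^2=\|w-h_T\|^2+\|h_N\|^2 .
\]
On $T_{x_1}\M$, therefore, $J_h=J_{h_T}+\|h_N\|^2/\lambda$. Off $T_{x_1}\M$ both functions are $+\infty$, so $J_h$ and $J_{h_T}$ differ by a constant on all of $\mathbb R^d$. They have the same unique minimizer, i.e.\ $DP_\lambda(x_1;h)=DP_\lambda(x_1;h_T)$.

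For (ii), $h\in N_{x_1}\M$ gives $h_T=0$, and by (i) we must minimize $J_0(w)=q(w)+\|w\|^2/\lambda$. Because $q\ge 0$ and $q(0)=0$, we have $J_0\ge 0=J_0(0)$, with equality only at $w=0$ by strict convexity. Hence $DP_\lambda(x_1;h)=0$.

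For (iii), the key intermediate bound is a quadratic upper estimate
\[
q(w)\le C\|w\|^2\qquad (w\in T_{x_1}\M),
\]
with $C$ depending only on $f$ near $x_1$. Granting this, let $w^\ast=DP_\lambda(x_1;h)$ and compare with the competitor $w=h$:
\[
\tfrac1\lambda\|w^\ast-h\|^2\le J_h(w^\ast)\le J_h(h)=q(h)\le C\|h\|^2 .
\]
This gives $\|w^\ast-h\|\le\sqrt{C\lambda}\,\|h\|$, which is the claim.

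To prove the estimate, take a $C^2$ curve $\gamma$ in $\M$ with $\gamma(0)=x_1$ and $\gamma'(0)=w$; for instance, the image of a straight line under a $C^2$ local chart. Set $w'_t\coloneqq(\gamma(t)-x_1)/t\to w$, so that $x_1+tw'_t=\gamma(t)$ lies on $\M$. The linear term in \eqref{eq:epi-derivative} vanishes because $v=0$, so the difference quotient along this sequence equals $2\bigl(f(\gamma(t))-f(x_1)\bigr)/t^2$. By (A1), $g(t)\coloneqq f(\gamma(t))$ is $C^2$ near $0$. Moreover $g'(0)=0$, since $x_1$ is a global minimizer of $f$ and hence of $g$. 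Taylor's theorem then gives
\[
g(t)-g(0)\le \tfrac12 t^2\sup_{|s|\le t_0}|g''(s)| .
\]
The second derivative $g''$ is controlled by the Hessian of $f|_\M$ in the chart times $\|w\|^2$, plus a gradient-times-$\gamma''$ term that is also $O(\|w\|^2)$. Since $q$ is a $\liminf$, this particular sequence bounds it from above, yielding $q(w)\le C\|w\|^2$.

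I expect this last step to be the main obstacle. The difficulty is making $C$ uniform in $w$, which requires fixing one chart and bounding its first and second derivatives on a neighborhood. An alternative is to invoke positive $2$-homogeneity of $q$, so that it suffices to treat $\|w\|=1$ and take the supremum over the compact unit sphere of $T_{x_1}\M$.
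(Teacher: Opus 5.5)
Your proof is correct. For (i), (ii), and the comparison step of (iii) it matches the paper: you test the objective of Lemma~\ref{lem:prox-dir-der} at $w=h$ to get $\|DP_\lambda(x_1;h)-h\|^2\le\lambda\, d^2 f(x_1\mid 0)(h)$. Your version of (i) is slightly more careful in two respects. You keep the additive constant $\|h_N\|^2/\lambda$, which the paper drops when it writes $\|w-h\|^2=\|w-h_T\|^2$. You also state explicitly that the minimizer has finite value because $J_h(0)<\infty$.

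The genuine difference is how you obtain the bound $d^2 f(x_1\mid 0)(w)\le C\|w\|^2$ on $T_{x_1}\M$.

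\emph{The paper's route} is abstract. $d^2 f(x_1\mid 0)$ is convex, which uses twice epi-differentiability together with convexity of $f$. It is finite on $T_{x_1}\M$ by Proposition~\ref{prop:dom-second-subderivative}, hence continuous there and bounded on the unit ball of $T_{x_1}\M$. Positive $2$-homogeneity then gives the quadratic bound, with a non-explicit constant.

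\emph{Your route} reuses the curve-and-Taylor construction from part (iii) of the proof of Proposition~\ref{prop:dom-second-subderivative}. It needs neither convexity nor homogeneity of the second epi-derivative, so twice epi-differentiability enters only through Lemma~\ref{lem:prox-dir-der}. It also yields a constant controlled by the second derivative of $f$ along $\M$.

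The uniformity issue you flag disappears with one choice. Fix a single $C^2$ parametrization $\sigma$ of $\M$ with $\sigma(0)=x_1$, and take $\gamma(t)=\sigma(ta)$ where $D\sigma(0)a=w$. Then $g(t)=(f\circ\sigma)(ta)$, so there is no separate gradient-times-$\gamma''$ term. Since $g$ is $C^2$ with $g'(0)=0$, letting $t\downarrow 0$ gives directly $d^2 f(x_1\mid 0)(w)\le g''(0)=a^\top\nabla^2(f\circ\sigma)(0)\,a\le \|\nabla^2(f\circ\sigma)(0)\|\,c_\sigma^2\|w\|^2$. Here $c_\sigma$ is the norm of the inverse of the isomorphism $D\sigma(0)$ from $\R^m$ onto $T_{x_1}\M$. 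No supremum of $g''$ over a neighborhood is needed.
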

\kf{
\hb{Under the differentiability of the potential $\phi$, this lemma tells that}
the Jacobian of $P_\lambda$ acts as $I_m+o(1)$ on the tangent space and vanishes on the normal space.
\hb{Lemma~\ref{lem:prox-dir-der} extends this Jacobian behavior beyond differentiable potentials, which is significant for manifold learning;}
the proximal operator effectively filters out the curvature singularities of the potential, while preserving its smoothness along the manifold. }

\subsection{Terminal Lyapunov exponents}
We make the following assumption on the terminal schedule. 
\begin{itemize}[topsep=0pt, partopsep=0pt, itemsep=3pt, parsep=0pt, leftmargin=24pt]
    \item[{\bf (SC)}] $\alpha_t$ and $\beta_t$ are $C^1$ curves. There is $\gamma>0$ such that 
    \begin{equation*}\label{eq:schedule}
        (1-t)\frac{\dot{\alpha}_t}{\alpha_t} \to -\gamma\quad (t\to 1)
    \end{equation*}
    and $\dot\beta_t$ is bounded as $t\to 1$. 
\end{itemize}
Note that assumption (SC) is satisfied with the standard schedule $ \alpha_t=C_\alpha(1-t)^\gamma$ and $\beta_t = C_\beta t^\eta$ ($\gamma, \eta>0)$. 

To study the terminal behavior, we apply a standard log transform of the time variable,
\[
\tau\coloneqq-\log(1-t)
\]
so that $t=t(\tau)=1-e^{-\tau}$ and $\frac{dt}{d\tau}=1-t$. For $t\to 1$, we have $\tau \to \infty$. 
Letting $u_{\tau}(x(\tau))\coloneqq (1-t)\,v_{t(\tau)}(x(\tau))$, the ODE of OT-CFM with rescaled time is expressed by
\[
x'(\tau)=u_{\tau}(x(\tau)), 
\]
where $x'(\tau)$ denotes the derivative with respect to time $\tau$.

Fix an OT pair $(x_1,x_0)$ and consider the trajectory $x_t=\alpha_t x_0+\beta_t x_1$.
With $\lambda_t = \alpha_t/\beta_t$, introduce 
\[
Q_t(x_\nc{t})\coloneqq\prox_{\lambda_t\phi}(x_\nc{t}/\beta_t) = \prox_{\lambda_t\phi}(x_1+\lambda_t x_0).
\]
\nc{Since $f$ is a shift of the potential $\phi$, from \eqref{eq:shiftedPotential}, it holds that}
$\prox_{\lambda f}(x_1) = \prox_{\lambda \phi}(x_1+\lambda x_0 )$,  \kf{which yields}
\begin{equation}\label{eq:DQDP}
    DQ_t(x_t;\xi) = DP_{\lambda_t} (x_1;\xi/\beta_t). 
\end{equation}

The following is our main theorem for Lyapunov exponents.
\begin{theorembox}
\begin{theorem}[\kf{Lyapunov exponents}]
\label{thm:terminal-lyap}
Assume (SC) for time scheduling.
Fix an OT pair $(x_1,x_0)$ such that the shifted potential $f=\phi-\langle x_0,\cdot\rangle$ with the Aleksandrov--Brenier potential 
$\phi$ satisfies (A1), (A2), and (A3)
at $x_1$, and is twice epi-differentiable at $(x_1\mid 0)$.
Let $\Phi_\tau$ denote the flow map of the rescaled dynamics $x'(\tau)=u_{\tau}(x(\tau))$.
Then the terminal Lyapunov exponents at the trajectory converging to $x_1$ satisfy:
\begin{align*}
 \lambda(v)& =-\gamma\quad\text{for all }v\in N_{x_1}\M\setminus\{0\},  \\
 \lambda(v) & =0\quad\text{for all }v\in T_{x_1}\M\setminus\{0\},
\end{align*}
where $\lambda(v)\coloneqq\lim_{\tau\to\infty}\frac1\tau\log\|D\Phi_\tau(x_0)v\|$.
Hence $\M$ is a normally hyperbolic attractor with normal rate $\gamma$.
\end{theorem}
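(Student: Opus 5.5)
We will linearize the rescaled dynamics along the reference trajectory $x_t=\alpha_t x_0+\beta_t x_1$ and read off the exponents from the variational equation. By \eqref{eq:vf_prox_general} and the definition of $Q_t$,
\[
u_\tau(y)=(1-t)\frac{\dot\alpha_t}{\alpha_t}\,y+(1-t)\beta_t\Bigl(\frac{\dot\beta_t}{\beta_t}-\frac{\dot\alpha_t}{\alpha_t}\Bigr)Q_t(y),\qquad t=1-e^{-\tau}.
\]
A perturbation $\xi(\tau)=D\Phi_\tau(x_0)v$ then satisfies, in the sense of semiderivatives,
\[
\xi'(\tau)=a(\tau)\,\xi+b(\tau)\,DQ_t(x_t;\xi),
\]
where $a(\tau)=(1-t)\dot\alpha_t/\alpha_t\to-\gamma$ by (SC), and $b(\tau)=(1-t)(\dot\beta_t-\beta_t\dot\alpha_t/\alpha_t)\to\gamma$, since $(1-t)\dot\beta_t\to0$ and $\beta_t\to1$. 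Using \eqref{eq:DQDP}, we have $DQ_t(x_t;\xi)=DP_{\lambda_t}(x_1;\xi/\beta_t)$ with $\lambda_t\to0$.

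The next step is to split $\xi=\xi_T+\xi_N$ according to $T_{x_1}\M\oplus N_{x_1}\M$ and apply Lemma~\ref{lma:DProx}. Part (i) gives $DP_{\lambda_t}(x_1;\xi/\beta_t)=DP_{\lambda_t}(x_1;\xi_T/\beta_t)\in T_{x_1}\M$. Part (iii) gives that this equals $\xi_T/\beta_t+O(\sqrt{\lambda_t})\|\xi_T\|$. Because the splitting is fixed (it is taken at the limit point $x_1$), the equations decouple up to small errors:
\begin{align*}
\xi_N'&=a(\tau)\,\xi_N,\\
\xi_T'&=\bigl(a(\tau)+b(\tau)/\beta_t\bigr)\xi_T+b(\tau)\,O(\sqrt{\lambda_t})\|\xi_T\|.
\end{align*}
For the normal component we get $\xi_N(\tau)=\exp(\int_0^\tau a)\,\xi_N(0)$, hence $\frac1\tau\log\|\xi_N\|\to-\gamma$ because $a\to-\gamma$ (Cesàro). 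For the tangential component, $a+b/\beta_t=(1-t)\dot\beta_t/\beta_t\to0$ and $O(\sqrt{\lambda_t})\to0$. A Grönwall argument on $\|\xi_T\|$ therefore gives
\[
\Bigl|\frac{d}{d\tau}\log\|\xi_T\|\Bigr|\le\varepsilon(\tau)\to0,
\]
so the tangential exponent is $0$. For a general $v$ with $v_T\neq0$ the tangential part dominates; hence normal directions have exponent $-\gamma$ and tangent directions exponent $0$, as claimed.

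\textbf{Main obstacle: justifying the decoupling at finite time.} $D\Phi_\tau$ is defined along the actual trajectory, where $DQ_t(x_t;\cdot)$ has not yet collapsed onto its limiting form. The identity $DP_{\lambda}(x_1;h)=DP_\lambda(x_1;h_T)$ is an exact statement at $x_1$ for each fixed $\lambda$, but uniformity in $\lambda_t\to0$ is needed, specifically a uniform version of the $O(\sqrt\lambda)$ bound in (iii). Two further technical points arise:
\begin{itemize}
\item The semiderivative is only positively homogeneous, not linear, so the variational equation must be read as a differential inclusion.
\item Lipschitz continuity of $P_\lambda$ (nonexpansiveness of the prox) keeps $\|\xi\|$ controlled.
\end{itemize}
We would first try to work only with exponents, i.e.\ averages of $\frac{d}{d\tau}\log\|\xi\|$: errors that are $o(1)$ as $\tau\to\infty$ do not affect the limits $\frac1\tau\log\|\cdot\|$. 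This requires only that the errors be bounded on compact $\tau$-intervals and vanish asymptotically. For the normal part, nonexpansiveness also shows that a perturbation entering $\xi_N$ through the tangential coupling cannot alter the rate $-\gamma$.
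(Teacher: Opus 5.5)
Your argument is essentially the paper's proof: you use the variational equation $\xi'=a\xi+b\,DQ_t(x_t;\xi)$, the reduction \eqref{eq:DQDP} to $DP_{\lambda_t}(x_1;\cdot)$, and Lemma~\ref{lma:DProx} (i)--(iii) to get $\xi_N'=a\xi_N$ on $N_{x_1}\M$ and a tangential growth rate $(1-t)\dot\beta_t/\beta_t+O(\sqrt{\lambda_t})\to 0$, followed by a Ces\`aro-mean argument for both exponents. The ``main obstacle'' you flag does not actually arise: \eqref{eq:DQDP} holds exactly at every finite $t$ along the trajectory $x_t=\alpha_t x_0+\beta_t x_1$, so (i)--(ii) decouple $T_{x_1}\M$ and $N_{x_1}\M$ exactly, and the constant in (iii) is $\sup_{u\in T_{x_1}\M,\ \|u\|\le 1} d^2f(x_1\mid 0)(u)$, which does not depend on $\lambda$.
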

\end{theorembox}
\begin{proof}
Let 
\[
\xi(\tau) \coloneqq D \Phi_\tau(x_0)v. 
\]

From the definition of the semiderivative, 
we can see that $\xi(\tau)$ satisfies the directional variational equation
\begin{equation}\label{eq:dudtau}
     \xi'(\tau)=Du_\tau(x(\tau),\xi(\tau)), \quad \xi(0)=v.
\end{equation}

By the semiderivative of \eqref{eq:vf_prox_general} in the direction of $\xi$, we have 
\begin{equation}\label{eq:Du}
Du_\tau(x(\tau);\xi)
=(1-t)\biggl\{\frac{\dot\alpha_t}{\alpha_t}\xi  
+\beta_t\biggl(\frac{\dot\beta_t}{\beta_t}-\frac{\dot\alpha_t}{\alpha_t}\biggr)DQ_t(x(\tau);\xi)  \biggr\}.
\end{equation}

\medskip
\noindent
{\bf (Case 1) $\bm{v\in N_{x_1}\M\backslash \{0\}}$}.  
As $\xi(0)=v\in N_{x_1}\M$ and $DQ_t(x(\tau);\xi(0))=0$ \kf{from Lemma~\ref{lma:DProx}~(ii) and \eqref{eq:DQDP}}, the solution of ODE \eqref{eq:dudtau} stays in $N_{x_1}\M$.  Thus, by taking the projection of \eqref{eq:dudtau} and \eqref{eq:Du} onto $N_{x_1}\M$ and using $DQ_t\in T_{x_1}\M$ (Lemma~\ref{lma:DProx}~(i)), the solution $\xi(\tau)$ follows 
\[
\xi'(\tau) = a(t)\xi(\tau), 
\]
where $a(t)\coloneqq (1-t)\frac{\dot\alpha_t}{\alpha_t}$.  
Therefore, $\|\xi(\tau)\|=\exp\!\big(\!\int_0^\tau a(t(s))ds\big)\,\|v\|$, 
which yields
\[ 
\frac{1}{\tau}\log\|\xi(\tau)\|
=\frac{1}{\tau}\log\|v\|+\frac{1}{\tau}\int_0^\tau a(t(s))\,ds .
\]
Since $a(t)$ is bounded and $a(t(\tau))\to -\gamma$ as $\tau\to\infty$, its Ces\`aro mean converges to the same limit, which proves 
\[
\lambda(v)=\lim_{\tau\to\infty}\frac{1}{\tau}\log\|\xi(\tau)\|=-\gamma.
\]

\medskip 
\noindent
{\bf (Case 2) $\bm{v\in T_{x_1}\M\backslash \{0\}}$.}  
In this case, the solution stays in $T_{x_1}\M$.  \kf{Since $DQ_t(x(\tau);\xi)=\xi/\beta_t + o(\|\xi\|)$ from Lemma~\ref{lma:DProx}~(iii) and \eqref{eq:DQDP}}, we have 
\[
 \xi'(\tau) = c(\tau) \xi(\tau) + \delta(\xi),
\]
where $c(\tau) = (1-t)\frac{\dot\alpha_t}{\alpha_t}(1-\beta_t) + (1-t)\dot\beta_t = o(1)$ and $\delta(\xi)= o(\|\xi\|)$.  Therefore, we have $r(\tau)=o(1)$ ($\tau\to\infty)$ such that 
$\|\xi'(\tau)\| \leq |c(\tau)|\|\xi\| + \|\delta(\xi)\| \leq r(\tau) \|\xi\|$.
From the inequality 
\[
\biggl| \frac{d}{d\tau}\log \|\xi(\tau)\| \biggr| = \frac{|\langle \xi(\tau), \xi'(\tau)\rangle |}{\| \xi(\tau)\|^2} \leq r(\tau),
\]
the same Ces\`{a}ro mean argument as Case~1 concludes 
$\lim_{\tau\to\infty}\tfrac{1}{\tau}\log\|\xi(\tau)\|=0$.
\end{proof}

\kfff{Theorem~\ref{thm:terminal-lyap} demonstrates that the perturbation of the dynamics in the tangential direction, with zero Lyapunov exponent, does not expand or contract exponentially, while the perturbation in the normal directions contracts at rate $O(e^{-\gamma\tau})$ by the negagtive Lyapunov exponent.
This implies that the manifold structure is retained under perturbations of the OT-CFM dynamics.}
These results agree with the stability analysis of diffusion models by \citet{Chandramoorthy2025-sd}.
They showed that if the Lyapunov exponents along the tangential directions are dominant, the manifold structure is stable in the dynamics.
They also demonstrated empirically that diffusion models have such an alignment property.
\citet{Li2025-ut} and \citet{Liu2025-hg} demonstrated the expansion of the score function in the tangential and normal directions, implying that the Lyapunov exponents are zero along the manifold and negative along the normal directions.
The Lyapunov exponents of the OT-CFM derived above \nc{are consistent with} these results; both the dynamics share a similar stable behavior around a low-dimensional data manifold.

\section{Related Work}
\label{sec:related}

{\bf Diffusion models and the manifold hypothesis.}
Some recent work discusses the behavior of diffusion models under the manifold hypothesis. \citet{Pidstrigach2022-nt} derived conditions under which score-based diffusion models provably sample the underlying low-dimensional data manifold. 
\citet{Oko2023} showed that, in the asymptotics of large training samples, the convergence rate for learning the target distribution depends on the dimensionality of the manifold rather than the ambient space. In addition to the studies discussed in the end of Sec.~\ref{sec:lyap-prox-epi}, the dynamical behavior of diffusion models under low-dimensional subspaces has also been studied. 
\citet{Chen2023-lr} discussed such cases and showed the decomposition of diffusion dynamics into the subspace and its orthogonal complement. \citet{Wang2024-bt} showed that diffusion models implicitly but provably recover the underlying subspace structure by clustering score estimates. 
Compared with these diffusion-focused results, analogous manifold-oriented analysis for FM remains limited.

{\bf Flow matching and proximal viewpoints.}
FM models are often trained with improved couplings, notably the minibatch OT, to reduce variance and straighten paths \citep{Pooladian2023-kl,Tong2024}.
Our work complements this line by giving a convex-analytic characterization of OT-CFM, enabling a dynamical stability analysis on manifold-supported targets.  Our framework does not apply directly to rectified flow \cite{Liu2023-ha}, and this direction will be in our future work. 
Note that our usage of proximal operators is different from the standard view of finite proximal-steps, which include Schr\"odinger bridges/entropic OT as KL-proximal iterations \citep{Leonard2014Survey,Cuturi_NIPS2013} and classical proximal methods \citep{ParikhBoyd_proximal}. \textcolor{black}{\kff{The most relevant to this work is }
Optimal Flow Matching (OFM) \citep{kornilov2024optimal}, which 
leverages convex potentials to \kff{propose a method for} learing straight trajectories ($P_0$ to $P_1$). While their flow inversion is mathematically equivalent to a proximal operator, they do not explicitly adopt the proximal or denoiser perspective. In contrast, our framework \kff{provides a theretical basis to }analyze the backward map ($P_1$ to $P_0$), {\kff establsing } 
an explicit proximal denoiser view to OT-CFM. Crucially, we utilize this structure to prove terminal stability on low-dimensional manifolds via Lyapunov exponents, a geometric analysis absent in \cite{kornilov2024optimal}.}

\section{Experiments}

\begin{figure}[ttt]
    \centering
    \includegraphics[width=0.75\linewidth]{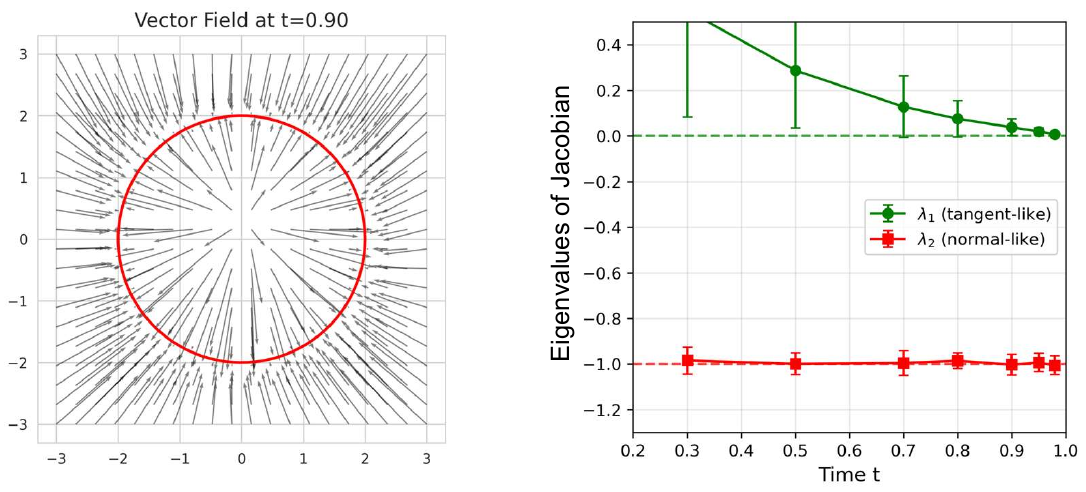}
    \caption{Circle. Left: \nc{OT-CFM vector field shown at time $t=0.9$ obtained by a Neural Network, which appears to be an attracting force on the manifold.  Right: Sample mean of the eigenvalues of the Jacobian $\kfff{(1-t)}Dv_t$ at different times over different flow trajectories. The second eigenvalue has a small variance and hence, the terminal Lyapounov exponent is also close to -1, as predicted by the analysis in section \ref{sec:lyap-prox-epi}. }
    }\label{fig:circle}
\end{figure}
\begin{figure}[t]
    \centering
    \includegraphics[width=0.75\linewidth]{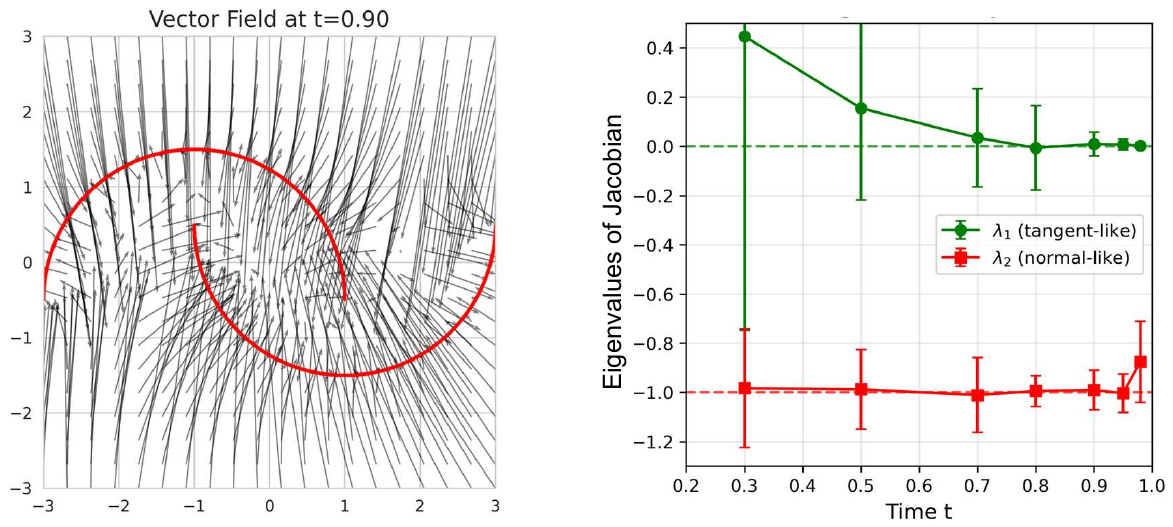}
    \caption{Two moons. Left: Vector field, Right: Eigenvalues of the Jacobian.  Similar settings are applied to Fig.~\ref{fig:circle}.}
    \label{fig:moon}
\end{figure}

\FloatBarrier

{\bf Toy examples. }
We applied the standard minibatch OT-CFM to two datasets: ``Circle (Fig.~\ref{fig:circle}) and ``Two Moons'' (Fig.~\ref{fig:moon}), \nc{where the target distribution, $P_1$ has a uniform density on $\mathbb{S}^1$ and the ``Two Moons" curves, respectively.}
The left figures show the vector field trained by an MLP with batch size 512. We applied the Hungarian algorithm to couple the source sample from $\mathcal N(0,I_2)$ and the uniform sample of the target. 
Both show that the trained vector fields around the target manifold are normal to the manifold.  The right figures show the means and STDs of two eigenvalues of $D_x v_t(x_t)$ for different $t$ over 100 initial points.  The values are well separated with $\lambda_1 = 0$ and $\lambda_2 = -\gamma = -1.0$ for $t\to 1$, which agrees with Theorem~\ref{thm:terminal-lyap}.  Note that \nc{ the``Two Moons'' example} \kfff{constitutes a manifold with boundary, at which the theoretical analysis in Section \ref{sec:lyap-prox-epi} does not neccessarily apply. }

{\bf MNIST. }
We used MNIST images to train OT-CFM with minibatch size 256.  Fig.~\ref{fig:MNIST_FM} presents the eigen-decomposition of the derivatives of the obtained flow $\Phi_{\tau(t)}$.  In the middle row, the eigendirections are presented from the 1st to the 160th. 
The bottom row shows that, up to the 80th eigenvalue, the perturbed images stay within the manifold, while adding the 160th eigen-direction causes noisy data. 

Fig.~\ref{fig:Eigenspectrum} shows the eigenvalues of Jacobian $(1-t)D_x v(x_t)$ at $t=0.98$ for two images.  We can see that there are clear gaps in the eigenspectrum: about 100 eigenvalues are significantly larger than the others.  The second right ($P_1$) and rightmost ($P_{100}$) images are perturbed from $x_1$ with eigenvectors of the 1st and 100th largest eigenvalues, respectively, to the generated image (second left). The results suggest that the eigenspaces with the eigenvalues $\approx 0$ correspond to the tangential directions to the manifold that constitutes the MNIST images.

\begin{figure}[ttt]
    \centering
    \includegraphics[width=0.75\linewidth]{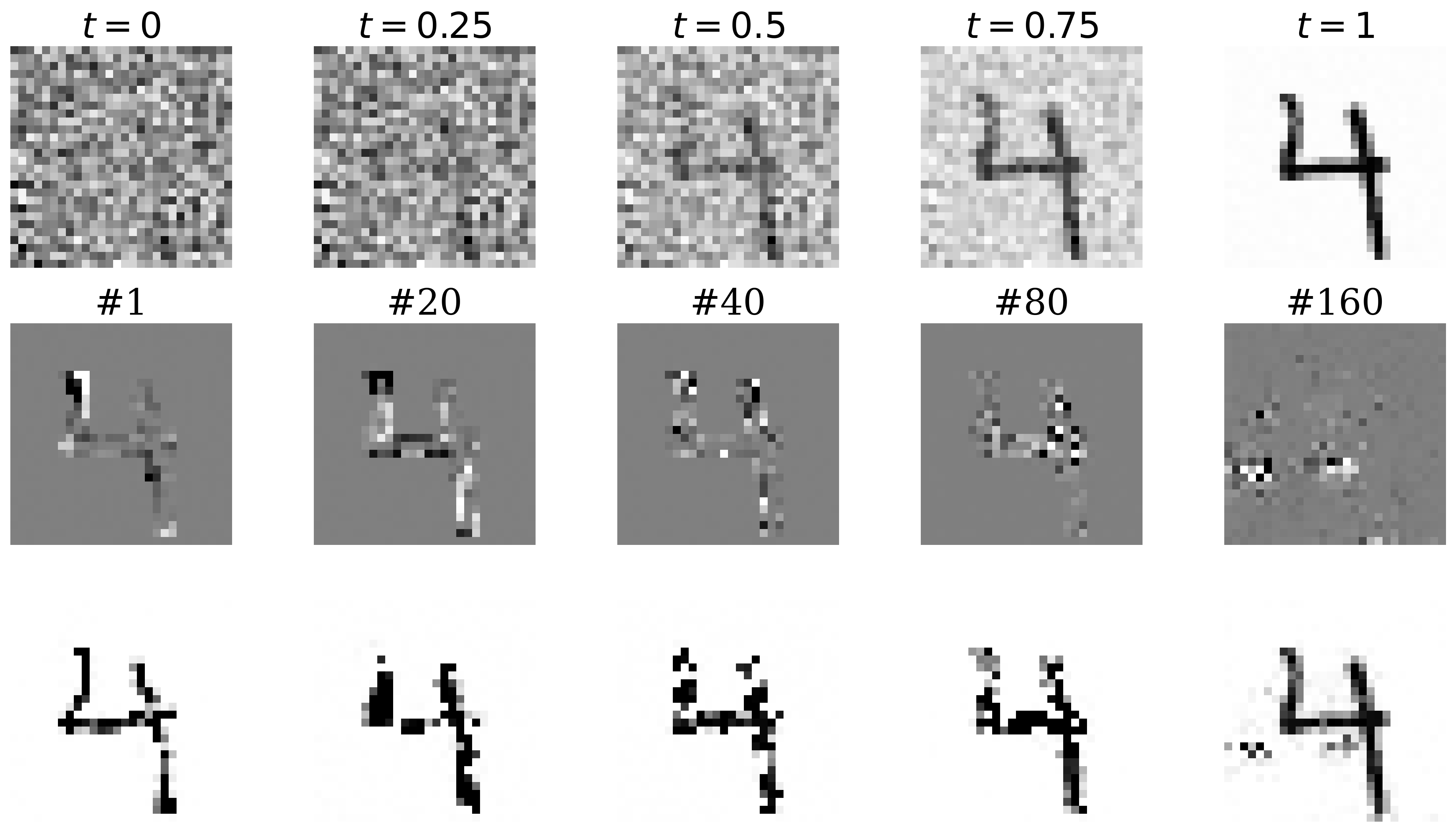}
    \caption{Top: Generation with OT-CFM. Middle: \kfff{Eigenvectors of the Jacobian of the vector field $v_t(x)$ with respect to the 1st, 20th, 40th, 80th, and 160th eigenvalues in the descending order}. Bottom: Images perturbed with the eigenvectors in the middle row. }
    \label{fig:MNIST_FM}
\end{figure}

\begin{figure}
    \centering
    \includegraphics[width=0.49\linewidth]{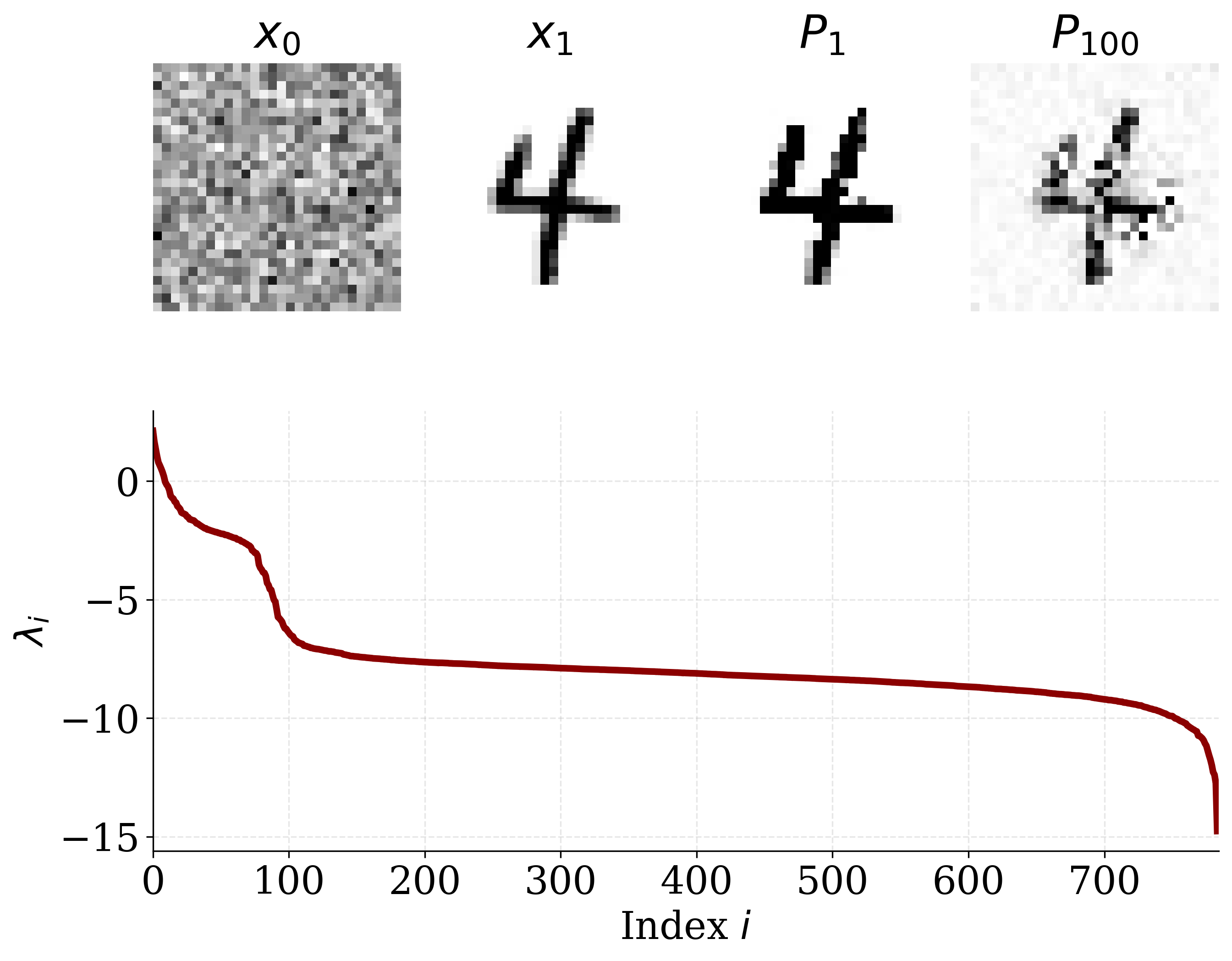}
        \includegraphics[width=0.49\linewidth]{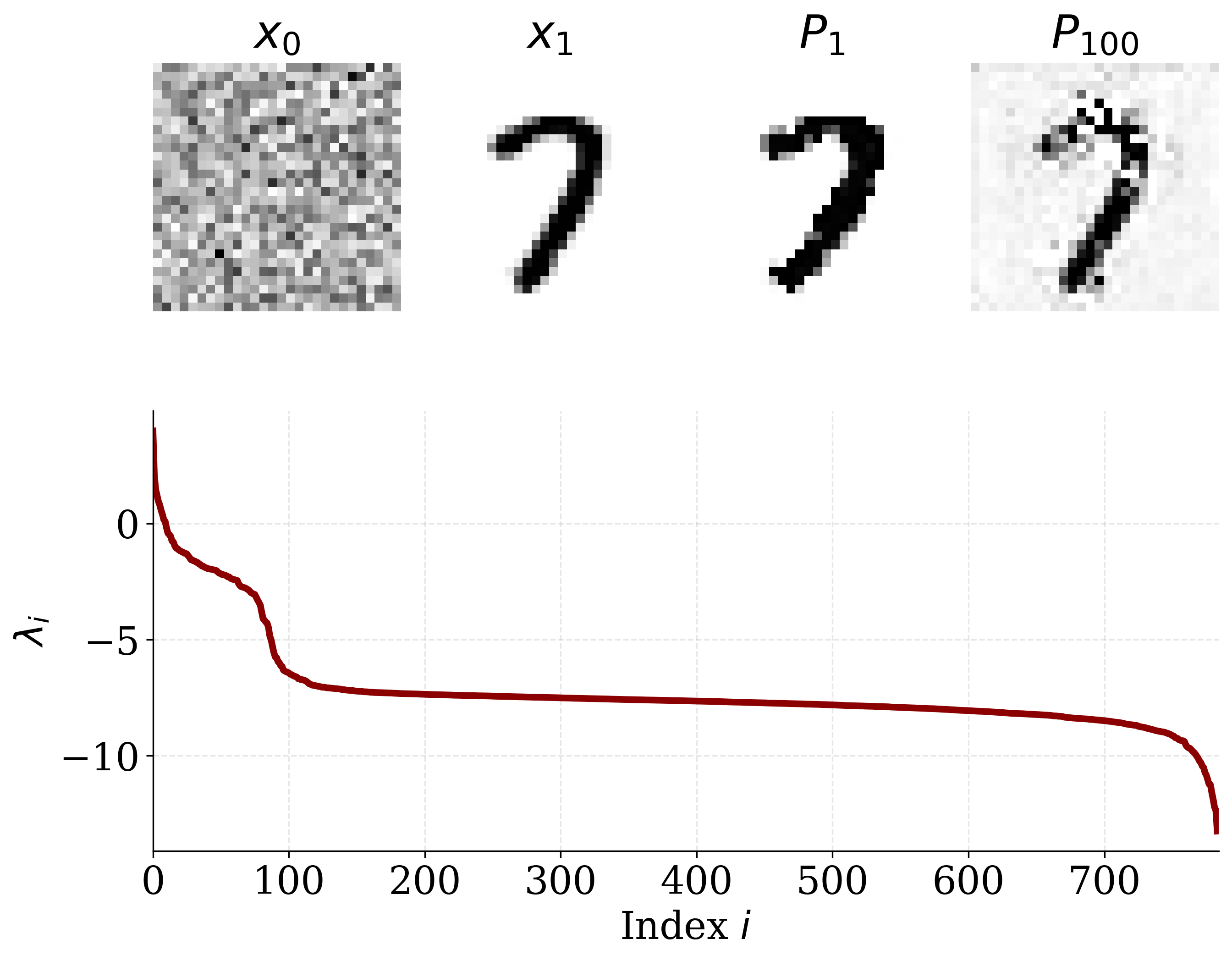}
    \caption{{Eigenvalues of MNIST \kfff{images ``4" and ``7".}} {Top:} \kfff{$x_0$ from $N(0,I_d)$, $x_1$ generated with $x_0$, and perturbed images by adding to $x_1$ the eigenvectors $P_1$ and $P_{100}$, which correspond to the first and 100th largeset eigenvalues. {Bottom:} sorted eigenvalues of $(1-t)D_x v(x_t)$ in the descending order at $t=0.98$.}}
    \label{fig:Eigenspectrum}
\end{figure}

\section{Conclusions}

We developed a proximal-operator framework for OT-based conditional FM. By representing the quadratic-cost OT via an Aleksandrov--Brenier potential, we obtained an exact Euclidean proximal form for the inverse of the interpolation map and the FM vector field, without requiring the target distribution to admit a density. This framework also gives an (approximate) proximal view to other generative models, such as diffusion models and Schr\"odinger bridges. Using second epi-derivatives, we further established a terminal Lyapunov theory under the manifold hypothesis, showing exponential contraction in normal directions and neutral behavior along tangential directions. Finally, while our main analysis uses the population OT, we discuss how the minibatch OT yields finite-sample potentials whose convergence implies convergence of the associated proximal denoisers, connecting the theory to practical OT-CFM training.

\section*{Acknowledgements}
KF is partially supported by JST CREST JPMJCR2015 and JSPS Grant-in-Aid for Transformative Research Areas (A) 22H05106. WH is supported by JSPS KAKENHI (24K20848) and JST BOOST (JPMJBY24G6). HB is supported by JST PRESTO (JPMJPR24K6).

\bibliographystyle{abbrvnat}
\bibliography{fukumizu}

\newpage
\appendix
\onecolumn

\section{Semiderivatives and Proximal Operators}

We summarize some basic definitions and facts in convex analysis: semiderivatives and proximal operators used in this paper.  See \citet{ParikhBoyd_proximal} and \citet{RockafellarVariatioal} for more details. 

\subsection{Definitions}
\label{sec:prox}

For a mapping $G:\R^d\to\R^d$, the \emph{semiderivative} of $G$ at $x$ in direction $h$ is defined by 
\begin{equation}
    DG(x;h)\coloneqq \lim_{\substack{\varepsilon\downarrow 0\\ h'\to h}}
\frac{G(x+\varepsilon h')-G(x)}{\varepsilon},
\end{equation}
whenever the limit exists.
This is a generalization of the directional derivative with a one-sided limit.

For a convex function $F : \mathbb{R}^d \to \overline{\R}$
and $\lambda > 0$, the \emph{proximal operator} of $F$ with parameter $\lambda$ is defined by 
\begin{equation}
    \prox_{\lambda F}(y)
    = \arg\min_x \left\{
        F(x) + \frac{1}{2\lambda}\|x - y\|^2
    \right\}.
\end{equation}

The proximal operator is often used for optimization steps. A key identity from this viewpoint is, if $F$ is proper, lsc, convex, and differentiable, 
\[
    x = y- \lambda \nabla F(x)
    \quad\Longleftrightarrow\quad
    x = \prox_{\lambda F}(y).
\]
Thus, $x_{t+1}=\prox_{\lambda F}(x_t)$ can be regarded as the  implicit Euler step for the gradient flow $\dot{x} = -\nabla F(x)$. 

The \emph{Moreau envelope} (or \emph{Moreau--Yosida regularization}) of $F$ with parameter $\lambda$ is defined by 
\begin{equation}
    M_{\lambda F}(y)
    = \min_x \left\{
        F(x) + \frac{1}{2\lambda}\|x - y\|^2
    \right\}.
\end{equation}
If $F$ is convex, $x\mapsto  F(x) + \frac{1}{2\lambda}\|x - y\|^2$ is strictly convex, so the proximal operator $\prox_{\lambda F}(y)$ is the unique minimizer to give $M_{\lambda F}(y)$.

\subsection{Prox-boundedness and prox-regularity}
\label{sec:def_prox_reg}


We recall two standard regularity notions from variational analysis (see, e.g., \citet[Chs.~1 \& 13]{RockafellarVariatioal}).
Let $f:\mathbb{R}^d\to\overline{\mathbb{R}}$ be a proper lower-semicontinuous (lsc) function.

\begin{definition}[Prox-boundedness]\label{def:prox-bounded}
The function $f$ is \emph{prox-bounded} if there exists $\lambda>0$ such that
$M_{\lambda f}(x)>-\infty$ for at least one $x\in\mathbb{R}^d$
(equivalently, $M_{\lambda f}$ is not identically $-\infty$).
The \emph{prox-threshold} of $f$ is defined as
\[
\bar\lambda_f
\coloneqq \sup\Bigl\{\lambda>0:\exists x\ \text{s.t.}\ M_{\lambda f}(x)>-\infty\Bigr\}
\in (0,+\infty].
\]
Intuitively, prox-boundedness means that $f$ does not decrease faster than a quadratic,
so the proximal subproblem is well-posed for sufficiently small step sizes
$\lambda<\bar\lambda_f$.
\end{definition}

\begin{definition}[Prox-regularity]\label{def:prox-regular}
Let $\bar x\in\dom f$ and $\bar v\in\partial f(\bar x)$.
We say that $f$ is \emph{prox-regular at $\bar x$ for $\bar v$} if there exist
$\varepsilon>0$ and $\rho\ge 0$ such that
\begin{equation}\label{eq:prox-regular}
f(x') \ \ge\ f(x)+\langle v, x'-x\rangle - \frac{\rho}{2}\|x'-x\|^2
\end{equation}
holds for all $x,x'\in B(\bar x,\varepsilon)$ and all $v\in\partial f(x)$ satisfying
$\|v-\bar v\|<\varepsilon$ and $|f(x)-f(\bar x)|<\varepsilon$.
\end{definition}

Inequality \eqref{eq:prox-regular} is a ``convexity up to a quadratic'' condition:
it says that, locally and along nearby subgradients, the function admits a quadratic
supporting model. Prox-regularity is a standard assumption guaranteeing good local behavior
of the proximal mapping (e.g.\ local single-valuedness and Lipschitz properties), which is
used in second-order variational characterizations of $\prox_{\lambda f}$ and its directional
derivatives.

\begin{remark}[Convex case]\label{rem:convex-prox}
If $f$ is proper lsc\ and \emph{convex}, then \eqref{eq:prox-regular} holds globally with
$\rho=0$ by the subgradient inequality, hence $f$ is prox-regular everywhere.
Moreover, convexity implies the existence of an affine minorant (e.g.\ if $v\in\partial f(\bar x)$,
then $f(y)\ge f(\bar x)+\langle v,y-\bar x\rangle$ for all $y$), which in turn implies that
$M_{\lambda f}(x)>-\infty$ for every $\lambda>0$ and $x\in\mathbb{R}^d$ (i.e.\ $\bar\lambda_f=+\infty$).  Thus, for a proper, lsc, convex function $f$, the prox-boundedness and prox-regularity are straightforward. 
\end{remark}

\subsection{First-order expansion of the Euclidean proximal operator}
\label{app:prox_first_order}

This subsection contains a standard first-order expansion of the (Euclidean) proximal operator
in the small-step regime. We state the result as a self-contained lemma, which can be invoked
to interpret diffusion denoisers as approximate proximal steps in Sec.~\ref{sec:proximal_view}. 

\begin{lemma}[First-order expansion of $\prox_{\lambda F}$]
\label{lem:prox_first_order}
Let $F:\mathbb{R}^d\to\mathbb{R}$ be $C^2$ in a neighborhood of a point $x\in\mathbb{R}^d$.
Assume that $\nabla F$ is locally Lipschitz around $x$, i.e., there exist $r>0$ and $L>0$ such that
\[
\|\nabla F(u)-\nabla F(v)\|\le L\|u-v\|
\qquad \text{for all }u,v\in B(x,r).
\]
Then there exist $\lambda_0>0$ and a constant $C>0$ (depending on $x$, $r$, $L$, and local bounds of $\nabla F$)
such that for all $\lambda\in(0,\lambda_0)$,
\[
\prox_{\lambda F}(x)
= x-\lambda \nabla F(x) + R_\lambda,
\qquad \|R_\lambda\|\le C\lambda^2.
\]
\end{lemma}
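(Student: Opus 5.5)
The plan is to use the fixed-point characterization of the proximal operator. We set $u_\lambda\coloneqq\prox_{\lambda F}(x)$ and write the optimality condition $u_\lambda = x-\lambda\nabla F(u_\lambda)$. This condition holds once $u_\lambda$ is known to lie in $B(x,r)$, where $F$ is differentiable. Since $F$ is only assumed $C^2$ locally and not convex, I will read $\prox_{\lambda F}(x)$ as the local minimizer of $F(u)+\frac{1}{2\lambda}\|u-x\|^2$ near $x$. For small $\lambda$ this objective is strongly convex on $B(x,r)$: its Hessian is at least $(1/\lambda - L)I$. This matches the prox-regular setting of Definition~\ref{def:prox-regular}. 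In the convex case, Remark~\ref{rem:convex-prox}, the global minimizer is the same point.

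\textbf{Step 1 (localization).} We show that $u_\lambda\in B(x,r/2)$ for $\lambda$ small. Restrict the objective to $\overline{B}(x,r/2)$ and let $G\coloneqq\sup_{B(x,r)}\|\nabla F\|$, which is finite by local Lipschitzness. Suppose $\|u-x\|=r/2$. Then
\[
F(u)+\tfrac{1}{2\lambda}\|u-x\|^2 \ge F(x)-G r/2 + \tfrac{r^2}{8\lambda} > F(x)
\]
once $\lambda<\lambda_0\coloneqq \min\{r/(4G),\,1/(2L)\}$. So the minimum over the closed ball is attained in its interior, and it is unique by strong convexity. It is therefore a critical point, and the identity $u_\lambda = x-\lambda\nabla F(u_\lambda)$ holds. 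In addition, $\|u_\lambda-x\|=\lambda\|\nabla F(u_\lambda)\|\le \lambda G$.

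\textbf{Step 2 (expansion).} Set $R_\lambda\coloneqq u_\lambda - x+\lambda\nabla F(x) = \lambda\bigl(\nabla F(x)-\nabla F(u_\lambda)\bigr)$. Both $x$ and $u_\lambda$ lie in $B(x,r)$, so the Lipschitz bound gives
\[
\|R_\lambda\|\le \lambda L\|u_\lambda-x\|\le L G\lambda^2.
\]
This is the claimed estimate with $C=LG$.

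The only delicate point is Step 1, specifically how $\prox$ is interpreted for a nonconvex $F$. A global minimizer may not exist, or may lie far away, if $F$ is not prox-bounded. I would handle this by stating the result for the local proximal point described above. Alternatively, under prox-boundedness one can add the assumption that the global minimizer lies near $x$ for small $\lambda$, which is standard (RockafellarVariatioal, Ch.~1). Everything after the localization is a one-line Lipschitz estimate.
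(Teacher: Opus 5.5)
Your argument is correct and has the same skeleton as the paper's. Both use the stationarity identity $u_\lambda=x-\lambda\nabla F(u_\lambda)$, then $R_\lambda=\lambda(\nabla F(x)-\nabla F(u_\lambda))$ and the Lipschitz bound. The difference is the localization step, and there your version is the sounder one.

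The paper's localization runs as follows. It claims $F(z)+\frac{1}{2\lambda}\|z-x\|^2$ is $(1/\lambda)$-strongly convex and $C^1$, which tacitly needs $F$ convex and globally $C^1$. It then assumes $z_\lambda\in B(x,r)$ and derives $\|z_\lambda-x\|\le \lambda M/(1-\lambda L)\le 2\lambda M$ with $M=\|\nabla F(x)\|$. Finally it says this bound ``validates'' the assumption. As written that is circular, since the bound was obtained under the very assumption it is meant to justify.

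Your comparison on the sphere $\|u-x\|=r/2$ avoids this. There the objective is at least $F(x)-Gr/2+r^2/(8\lambda)>F(x)$ for $\lambda<r/(4G)$, which genuinely forces the minimizer over $\overline{B}(x,r/2)$ into the interior. It needs no convexity, which matters because the paper applies the lemma to $F=-\log p_s$.

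Your constant $C=LG$, with $G\le\|\nabla F(x)\|+Lr$, is slightly cruder than the paper's $2L\|\nabla F(x)\|$. Once $u_\lambda\in B(x,r)$ is established, though, the paper's inequality $\|u_\lambda-x\|\le\lambda(M+L\|u_\lambda-x\|)$ becomes legitimate and recovers the sharper constant.

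Two minor points:
\begin{itemize}
\item Phrase the local strong convexity through $\langle\nabla F(u)-\nabla F(v),u-v\rangle\ge -L\|u-v\|^2$ on $B(x,r)$ rather than a Hessian bound. $F$ is only assumed $C^2$ on some possibly smaller neighborhood, and uniqueness is not needed for the estimate anyway.
\item Your hedge about the global prox can be turned into a proof rather than an extra assumption. For convex $F$, your interior minimizer is a local, hence global, minimizer of a strictly convex function. If $F(u)\ge -a-b\|u\|^2$ (as under prox-boundedness), extending your comparison to all $\|u-x\|\ge r/2$ shows that any global minimizer lies in $B(x,r/2)$ once $\lambda$ is small.
\end{itemize}
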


\begin{proof}
Fix $r>0$ and $L>0$ such that $\nabla F$ is $L$-Lipschitz on $B(x,r)$.
For each $\lambda>0$, consider 
\[
\Phi_\lambda(z)\coloneqq F(z)+\frac{1}{2\lambda}\|z-x\|^2.
\]
Since $\Phi_\lambda$ is $(1/\lambda)$-strongly convex and $C^1$, it has a unique minimizer
$z_\lambda=\prox_{\lambda F}(x)$ characterized by the optimality condition
\begin{equation}
0=\nabla F(z_\lambda)+\frac{1}{\lambda}(z_\lambda-x),
\quad\text{equivalently }
z_\lambda=x-\lambda \nabla F(z_\lambda).
\label{eq:prox_optimality}
\end{equation}

Let $M\coloneqq\|\nabla F(x)\|$.
By Lipschitzness on $B(x,r)$, we have $\|\nabla F(z)\|\le M+L\|z-x\|$ for $z\in B(x,r)$.
Assume $z_\lambda\in B(x,r)$ temporarily. Then \eqref{eq:prox_optimality} yields
\[
\|z_\lambda-x\|=\lambda\|\nabla F(z_\lambda)\|
\le \lambda\bigl(M+L\|z_\lambda-x\|\bigr).
\]
Hence, when $\lambda L<1/2$ holds, we have
\begin{equation}
\|z_\lambda-x\|\le \frac{\lambda M}{1-\lambda L}\le 2\lambda M.
\label{eq:prox_Olambda}
\end{equation}
Choose $\lambda_1>0$ such that $\lambda_1 L<1/2$ and $2\lambda_1 M<r$.
Then \eqref{eq:prox_Olambda} implies $\|z_\lambda-x\|<r$ for all $\lambda\in(0,\lambda_1)$,
which validates the assumption $z_\lambda\in B(x,r)$ and establishes that
$z_\lambda$ remains in $B(x,r)$ for sufficiently small $\lambda$.

For $\lambda\in(0,\lambda_1)$ we can use the Lipschitz bound on $B(x,r)$:
\[
\|\nabla F(z_\lambda)-\nabla F(x)\|\le L\|z_\lambda-x\|.
\]
Define the remainder
\[
R_\lambda\coloneqq\prox_{\lambda F}(x)-\bigl(x-\lambda \nabla F(x)\bigr)=z_\lambda-x+\lambda \nabla F(x).
\]
Using \eqref{eq:prox_optimality},
\[
R_\lambda
= \bigl(x-\lambda \nabla F(z_\lambda)\bigr)-x+\lambda \nabla F(x)
= \lambda\bigl(\nabla F(x)-\nabla F(z_\lambda)\bigr).
\]
Therefore,
\[
\|R_\lambda\|
\le \lambda L \|z_\lambda-x\|.
\]
Combining with \eqref{eq:prox_Olambda}, for $\lambda\in(0,\lambda_1)$ we obtain
\[
\|R_\lambda\|\le \lambda L\cdot 2\lambda M = 2LM\,\lambda^2.
\]
Setting $C\coloneqq2LM$ and $\lambda_0\coloneqq\lambda_1$ completes the proof.
\end{proof}

\begin{remark}[Equivalent viewpoint via the implicit Euler step]
As discussed in Sec.~\ref{sec:prox}, the proximal operator has a view of implicit Euler step
\[
\prox_{\lambda F}(x)=x-\lambda \nabla F\bigl(\prox_{\lambda F}(x)\bigr),
\]
for the gradient flow $\dot{x}=-\nabla F(x)$.
Lemma~\ref{lem:prox_first_order} states that this implicit step agrees with the explicit Euler step
$x-\lambda\nabla F(x)$ up to a second-order error in $\lambda$.
\end{remark}

\section{Epi-derivative and Domain of the second subderivative}
\label{sec:dom-second-subderivative}

Recall the notation $\overline{\mathbb{R}}\coloneqq\mathbb{R}\cup\{+\infty\}$. 

\subsection{Epi-derivative and epi-convergence}

\paragraph{Epigraph.}
For a function \(f:\mathbb{R}^d\to\overline{\R}\), its \emph{epigraph} is
\[
\operatorname{epi} f \;\coloneqq\; \{(x,r)\in\mathbb{R}^d\times\mathbb{R}:\ f(x)\le r\}.
\]

\paragraph{Epi-convergence.}
A sequence \(f_n:\mathbb{R}^d\to\overline{\R}\) \emph{epi-converges} to \(f\), written
\(f_n \xrightarrow{\mathrm{epi}} f\), if for every \(x\in\mathbb{R}^d\), the following two conditions hold:
\begin{align*}
\text{(i) } &\forall\, x_n\to x,\qquad \liminf_{n\to\infty} f_n(x_n)\;\ge\; f(x),\\
\text{(ii) } &\exists\, x_n\to x,\qquad \limsup_{n\to\infty} f_n(x_n)\;\le\; f(x).
\end{align*}
Equivalently, \(\operatorname{epi} f_n\) converges to \(\operatorname{epi} f\) in the
Painlev\'e--Kuratowski sense.

\paragraph{Epi-derivative (first order).}
Let \(f:\mathbb{R}^d\to\overline{\R}\) be proper and lsc, and fix
\(\bar x\in\operatorname{dom} f\).
For \(t>0\), define the first-order difference quotient
\[
\Delta_t f(\bar x)(w) \;\coloneqq\; \frac{f(\bar x+t w)-f(\bar x)}{t},\qquad w\in\mathbb{R}^d.
\]
If \(\Delta_t f(\bar x)\xrightarrow{\mathrm{epi}} d f(\bar x)\) (\(t\downarrow 0\)) as functions of $w$, then
\(d f(\bar x)\) is called the \emph{(first-order) epi-derivative} of \(f\) at \(\bar x\),
and \(f\) is said to be \emph{epi-differentiable} at \(\bar x\).

When $f$ is epi-differentiable at $\bar x$, a standard pointwise characterization is
\[
d f(\bar x)(w)
\;=\;
\liminf_{t\downarrow 0,\; w'\to w}\frac{f(\bar x+t w')-f(\bar x)}{t}
\;\in\;\overline{\R}.
\]

\subsection{Second subderivative and effective domain}

Let $f:\mathbb{R}^d\to\overline{\R}$ be proper, lsc,
and convex. Fix $x\in\dom f$ and $v\in\partial f(x)$.

\begin{definition}[Second subderivative / second epi-derivative]
For $w\in\mathbb{R}^d$, define the \emph{second subderivative} (also called the \emph{second epi-derivative})
of $f$ at $x$ relative to $v$ by
\[
d^2 f(x\mid v)(w)
\coloneqq
\liminf_{\substack{t\downarrow 0\\ w'\to w}}
\frac{f(x+t w')-f(x)-t\langle v,w'\rangle}{\frac12 t^2}
\ \in\ \overline{\mathbb{R}}.
\]
Its \emph{effective domain} is
\[
\dom d^2 f(x\mid v)\coloneqq\{w\in\mathbb{R}^d:\ d^2 f(x\mid v)(w)<+\infty\}.
\]
\end{definition}

In what follows, we focus on the case $v=0$.

\subsection{Proof of Lemma \ref{lem:prox-dir-der}}
\label{sec:proof_lemma}

Recall that we define $P_\lambda\coloneqq \prox_{\lambda f}$ for a given $f$, where $\lambda>0$.

\begin{lemma}[Lemma \ref{lem:prox-dir-der}]
Assume that $f$ is a proper, lsc, and convex function such that  $0\in\partial f(x_1)$.
Fix $\lambda>0$ and $h\in\mathbb R^d$.
If $f$ is twice epi-differentiable at $x_1$ relative to $0$,
then the semiderivative exists:
\[
DP_\lambda(x_1;h)\coloneqq\lim_{\varepsilon\downarrow 0, h'\to h}\frac{P_\lambda(x_1+\varepsilon h')-P_\lambda(x_1)}{\varepsilon}
\]
and is the unique minimizer of the strongly convex problem:
\begin{equation} 
 DP_\lambda(x_1;h)\in\arg\min_{w\in\mathbb R^d}
 \Big\{ d^2 f(x_1\mid 0)(w)+\frac{1}{\lambda}\|w-h\|^2\Big\}.
\end{equation}
\end{lemma}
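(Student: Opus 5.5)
Write the difference quotient as a proximal problem and pass to the limit by epi-convergence. This is the standard route of Rockafellar--Wets (Theorem 13.40 / Exercise 13.45); the plan below is self-contained.

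\textbf{Step 1: reduce to a rescaled prox.} Since $0\in\partial f(x_1)$, Lemma~\ref{lem:prox_general}'s optimality argument gives $P_\lambda(x_1)=x_1$. Indeed, $x_1$ satisfies $0\in\lambda\partial f(x_1)+(x_1-x_1)$, and the minimizer is unique by strong convexity. Fix $\varepsilon>0$ and $h'$, and set $w_\varepsilon\coloneqq(P_\lambda(x_1+\varepsilon h')-x_1)/\varepsilon$. Substitute $u=x_1+\varepsilon w$ in the definition of $P_\lambda(x_1+\varepsilon h')$. Subtract the constant $f(x_1)$ and the linear term $\varepsilon\langle 0,w\rangle$, then divide by $\varepsilon^2/2$. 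This shows that $w_\varepsilon$ is the unique minimizer of
\[
G_{\varepsilon,h'}(w)\coloneqq \Delta^2_\varepsilon f(x_1\mid 0)(w)+\frac{1}{\lambda}\|w-h'\|^2,
\qquad
\Delta^2_\varepsilon f(x_1\mid 0)(w)\coloneqq\frac{f(x_1+\varepsilon w)-f(x_1)}{\varepsilon^2/2}.
\]
The difference quotient of $P_\lambda$ is therefore exactly $\prox_{\lambda/2\,\Delta^2_\varepsilon f}(h')$, a proximal point of the second-order difference quotients.

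\textbf{Step 2: epi-convergence.} Twice epi-differentiability means $\Delta^2_\varepsilon f(x_1\mid 0)\xrightarrow{\mathrm{epi}} d^2 f(x_1\mid 0)$ as $\varepsilon\downarrow 0$. Each $\Delta^2_\varepsilon f$ is convex, lsc and nonnegative, since $x_1$ minimizes $f$. The limit $d^2f(x_1\mid 0)$ is therefore also convex, lsc, proper and nonnegative; it is $0$ at $w=0$. Adding the continuous quadratic $\frac1\lambda\|w-h'\|^2$ preserves epi-convergence, and it does so uniformly when $h'\to h$ because the quadratics converge uniformly on bounded sets. Hence $G_{\varepsilon,h'}\xrightarrow{\mathrm{epi}} G(w)\coloneqq d^2f(x_1\mid 0)(w)+\frac1\lambda\|w-h\|^2$. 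The function $G$ is strongly convex and has a unique minimizer $\bar w$.

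\textbf{Step 3: convergence of minimizers.} We need the minimizers $w_\varepsilon$ to stay bounded. Two facts give this. First, $P_\lambda$ is nonexpansive, so $\|w_\varepsilon\|\le\|h'\|$, which bounds $w_\varepsilon$ directly. Second, the functions $G_{\varepsilon,h'}$ are uniformly level-bounded by the quadratic term together with nonnegativity of $\Delta^2_\varepsilon f$. The standard epi-convergence theorem for minimizers (Rockafellar--Wets, Theorem 7.33) then applies: every cluster point of $w_\varepsilon$ minimizes $G$. Since the minimizer of $G$ is unique, the whole family converges, $w_\varepsilon\to\bar w$, along $\varepsilon\downarrow0$ and $h'\to h$. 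This establishes both the existence of the semiderivative and the characterization \eqref{eq:prox-der-min}.

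The main care point is Step 2: epi-convergence must hold jointly in $(\varepsilon,h')$, not just along $h'=h$. I would handle it by noting that perturbing by $\frac1\lambda(\|w-h'\|^2-\|w-h\|^2)$ is a continuous function converging uniformly to $0$ on compacts. Epi-limits are stable under such perturbations (Rockafellar--Wets, Theorem 7.46). The boundedness from nonexpansiveness then lets the argument stay on a compact set.
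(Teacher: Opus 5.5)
Your proof is correct, but it is organized differently from the paper's. The paper's proof is a one-line reduction: it invokes Rockafellar--Wets, Exercise~13.45, and only checks its hypotheses, namely that a proper lsc convex $f$ is automatically prox-bounded and prox-regular.

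You instead reprove the relevant part of that result in the convex setting. The substitution $u=x_1+\varepsilon w$ turns the difference quotient of $P_\lambda$ into exactly the proximal point, with parameter $\lambda$ and evaluated at $h'$, of the rescaled second-order quotient $\tfrac12\Delta^2_\varepsilon f(x_1\mid 0)$. Epi-convergence, which survives the continuously converging quadratic perturbation by Theorem~7.46, then combines with the argmin theorem (Theorem~7.33, or directly via recovery sequences) to give the joint limit in $(\varepsilon,h')$.

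Your route is self-contained and shows exactly where convexity is used:
\begin{itemize}
\item nonnegativity of $\Delta^2_\varepsilon f$, because $x_1$ minimizes $f$;
\item convexity of the epi-limit $d^2f(x_1\mid 0)$, which is what makes $G$ strongly convex and hence uniquely minimized;
\item boundedness of $w_\varepsilon$, from nonexpansiveness of $P_\lambda$.
\end{itemize}
It also visibly works for every fixed $\lambda>0$ without passing through the prox-regular framework. The paper's citation is shorter and places the lemma inside the general prox-regular theory, which covers nonconvex $f$ as well; none of that extra generality is needed here.
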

\begin{proof}
We apply \citet[Exercise~13.45]{RockafellarVariatioal}, and only have to confirm that $f$ is prox-bounded and prox-regular (see Sec.~\ref{sec:def_prox_reg} for the definitions) in this lemma.  Since $f$ is convex, the prox-regularity and prox-boundedness are trivial as discussed in Remark of Sec.~\ref{sec:def_prox_reg}. 
\end{proof}

\subsection{Assumptions (A1), (A2), and (A3)}

We show some discussions and consequences of assumptions (A1)--(A3) in this subsection. 

For the Aleksandrov--Brenier potential $\phi$ for the optimal transport, we fix $(x_0,x_1)$ such that $x_0\in \partial \phi(x_1)$, which means that $(x_0,x_1)$ is coupled by the associated OT.  Introduce $f(y) \coloneqq  \phi(y) - \langle y, x_0\rangle$. 

Let $\M\subset\mathbb{R}^d$ be an embedded $C^2$ submanifold of dimension $m<d$ and fix a point $x_1\in \M$.
Denote by $T_{x_1}\M$ and $N_{x_1}\M$ the tangent and normal spaces, respectively, so that
$\mathbb{R}^d=T_{x_1}\M\oplus N_{x_1}\M$ and $\dim N_{x_1}\M=d-m$.  

We assume:

\begin{itemize}
\item[(A1)] (\emph{Smoothness along the manifold}) The restriction $f|_{\M}$ is $C^2$ in a neighborhood of $x_1$ in $\M$.
\item[(A2)] (\emph{Sharpness / normal geometry at $x_1$})
\[
\dim\aff(\partial f(x_1)) = d-m. 
\]
\item[(A3)] (\emph{Nondegeneracy}) $0\in\ri\,\partial f(x_1)$, where $\ri$ denotes the relative interior in $\aff(\partial f(x_1))$.
\end{itemize}

\medskip

Assumption~(A1) is very natural. Assumption~(A2) intuitively shows that the sharp bend occurs in the volume of the $d-m$ dimension.  Combining (A1) and (A2),  we have 
a simple consequence on the normal space shown in the following lemma, which is used in the proof of Proposition \ref{prop:dom-second-subderivative} (Sec.~\ref{sec:Proof_prop}).
Some criterion for (A3) will be discussed seperately in Sec.~\ref{sec:ri-subgradient-disintegration}. 

\begin{lemma}
\label{lma:ps2-sufficient}
If Assumptions (A1) and (A2) hold at a point $x_1\in \M$, then 
\[
\parl(\partial f(x_1)) \;=\; N_{x_1}\M,
\]
where $\parl(C)\coloneqq\aff(C)-\aff(C)$ is the parallel subspace of a convex set $C$.
\end{lemma}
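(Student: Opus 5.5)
The plan is to show that every difference of two subgradients in $\partial f(x_1)$ is orthogonal to $T_{x_1}\M$. This gives the inclusion $\parl(\partial f(x_1))\subseteq N_{x_1}\M$, and a dimension count using (A2) then upgrades the inclusion to equality.

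\emph{Step 1 (a first-order identity along the manifold).} Fix $w\in T_{x_1}\M$ and a $C^2$ curve $\gamma:(-\delta,\delta)\to\M$ with $\gamma(0)=x_1$ and $\gamma'(0)=w$. By (A1), $f|_{\M}$ is $C^2$ near $x_1$. In particular it is finite there, so $s\mapsto f(\gamma(s))$ is a real-valued $C^1$ function near $s=0$.

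Take any $v\in\partial f(x_1)$ and define
\[
g_v(s)\coloneqq f(\gamma(s))-f(x_1)-\langle v,\gamma(s)-x_1\rangle .
\]
By the subgradient inequality, $g_v(s)\ge 0$ for all small $|s|$, and $g_v(0)=0$. Hence $s=0$ is an interior minimizer of the differentiable function $g_v$, so $g_v'(0)=0$. This reads
\[
\frac{d}{ds}f(\gamma(s))\Big|_{s=0}=\langle v,w\rangle .
\]
The left-hand side does not depend on $v$.

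\emph{Step 2 (inclusion).} For $v_1,v_2\in\partial f(x_1)$, Step 1 gives $\langle v_1-v_2,w\rangle=0$ for every $w\in T_{x_1}\M$. Hence $v_1-v_2\in N_{x_1}\M$.

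The parallel subspace $\parl(\partial f(x_1))=\aff(\partial f(x_1))-\aff(\partial f(x_1))$ is the linear span of such differences. Indeed, an affine combination minus another affine combination is a linear combination of differences $v_i-v_j$. Since $N_{x_1}\M$ is a linear subspace, we obtain $\parl(\partial f(x_1))\subseteq N_{x_1}\M$.

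\emph{Step 3 (dimension count).} By definition, $\dim\parl(\partial f(x_1))=\dim\aff(\partial f(x_1))$, which equals $d-m$ by (A2). Since $\dim N_{x_1}\M=d-m$, the inclusion from Step 2 between subspaces of equal finite dimension is an equality.

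The only delicate point is Step 1. There one must make sure that (A1) really yields finiteness and differentiability of $f\circ\gamma$, which I read as implicit in "$C^2$ in a neighborhood". One must also check that the minimizer is two-sided, i.e.\ that the curve extends to negative $s$. This holds because $\M$ is a manifold without boundary near $x_1$; at a boundary point only a one-sided inequality would survive. Everything else is linear algebra.
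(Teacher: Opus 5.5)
Your proof is correct and follows essentially the same route as the paper. The paper likewise uses that $x_1$ minimizes $f-\langle z_0,\cdot\rangle$ on $\M$ for every $z_0\in\partial f(x_1)$ to get $\Proj_{T_{x_1}\M}(z_0)=\nabla_{\M} f(x_1)$, deduces $\parl(\partial f(x_1))\subset N_{x_1}\M$, and finishes with the same dimension count from (A2); your curve-wise derivative of $f\circ\gamma$ at $0$ is just $\langle \nabla_{\M} f(x_1), w\rangle$ written without the Riemannian gradient.
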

\begin{proof}
We first show that for any $z_0\in \partial f(x_1)$, the projection $\Proj_{T_{x_1}\M}(z_0)$ does not depend on $z_0$.  To see this, note that, from $z_0\in \partial f(x_1)$, we have $0\in \partial( f(\cdot) - \langle \cdot,z_0\rangle )(x_1) $, and thus $x_1$ is a minimum of $f(x) - \langle x, z_0\rangle$. Further, $x_1\in \M$ is obviously a minimum of the same function on $\M$.  From the differentiability (A1), we have for any $u\in T_{x_1}\M$
\[
\langle \nabla_{\M} f(x_1) -  z_0, u\rangle = 0. 
\]
This means 
\[
\Proj_{T_{x_1}\M}(z_0) = \nabla_{\M} f(x_1).
\]
By this expression, for any $z,z'\in \partial f(x_1)$ we have $\Proj_{T_{x_1}\M}(z-z') =0$,
hence $z-z'\in N_{x_1}\M$, i.e., the difference $\partial f(x_1)-\partial f(x_1)$ is contained in $N_{x_1}\M$.  Therefore, 
\[
\parl(\partial f(x_1))=\operatorname{span}(\partial f(x_1)-\partial f(x_1))\subset N_{x_1}\M.
\]
Also, Assumption (A2) implies
\begin{align*}
  \dim \parl(\partial f(x_1)) \;& =\; \dim \aff(\partial f(x_1)) \; \\
 & =\; d-m \;=\; \dim N_{x_1}\M.  
\end{align*}
    
Since $\parl(\partial f(x_1))$ and $N_{x_1}\M$ are linear subspaces of the same dimension and
$\parl(\partial f(x_1))\subset N_{x_1}\M$, we obtain the assertion.
\end{proof}

\subsection{Proof of Proposition \ref{prop:dom-second-subderivative}}
\label{sec:Proof_prop}

The next proposition (Prop.~\ref{prop:dom-second-subderivative}) shows that the tangent space of $\M$ is equal to the domain of the second subderivative. 
\begin{proposition}[Proposition \ref{prop:dom-second-subderivative}]
\label{prop:dom-second-subderivative2}
Under \textnormal{(A1)}, \textnormal{(A2)}, and \textnormal{(A3)}, one has
\[
\dom d^2 f(x_1\mid 0) \;=\; T_{x_1}\M.
\]
Equivalently,
\[
d^2 f(x_1\mid 0)(w)<+\infty \iff w\in T_{x_1}\M.
\]
\end{proposition}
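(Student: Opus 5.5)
We prove the two inclusions separately. The key tool is Lemma~\ref{lma:ps2-sufficient}, which gives $\parl(\partial f(x_1))=N_{x_1}\M$. Combined with (A3), this yields a ball $B_N(0,\rho)\cap N_{x_1}\M\subset\partial f(x_1)$ for some $\rho>0$. We also use the fact, shown inside the proof of Lemma~\ref{lma:ps2-sufficient}, that every $z\in\partial f(x_1)$ has the same tangential projection $\nabla_\M f(x_1)$. Since $0\in\partial f(x_1)$, this forces $\nabla_\M f(x_1)=0$.

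\emph{Inclusion $T_{x_1}\M\subset\dom d^2 f(x_1\mid 0)$.} Fix $w\in T_{x_1}\M$. We choose a $C^2$ curve $c(t)\in\M$ with $c(0)=x_1$ and $c'(0)=w$, and set $w_t\coloneqq (c(t)-x_1)/t\to w$. Then $f(x_1+tw_t)=f(c(t))$. By (A1) and $\nabla_\M f(x_1)=0$, a second-order Taylor expansion of $f\circ c$ gives $f(c(t))-f(x_1)=O(t^2)$. Here the curve's acceleration term pairs only with $\nabla_\M f(x_1)=0$, or we can simply note that $f\circ c$ is $C^2$ with vanishing first derivative at $0$. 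Since $v=0$, the second-order difference quotient equals $2(f(c(t))-f(x_1))/t^2$, which stays bounded. The liminf is therefore finite, so $w\in\dom d^2 f(x_1\mid 0)$.

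\emph{Inclusion $\dom d^2 f(x_1\mid 0)\subset T_{x_1}\M$.} Take $w\notin T_{x_1}\M$, so the normal component $w_N\neq0$. Set $z\coloneqq \rho\, w_N/(2\|w_N\|)\in\partial f(x_1)$. The subgradient inequality gives
\[
f(x_1+tw')-f(x_1)\ge t\langle z,w'\rangle .
\]
For $w'\to w$ we have $\langle z,w'\rangle\to\langle z,w\rangle=\rho\|w_N\|/2>0$. Hence the quotient is at least $2\langle z,w'\rangle/t\to+\infty$ as $t\downarrow0$, so $d^2 f(x_1\mid 0)(w)=+\infty$. This direction is straightforward; it is where (A2) and (A3) enter, through the normal ball of subgradients.

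The main obstacle is the first inclusion, specifically the step $f(c(t))-f(x_1)=O(t^2)$. It needs (i) $\nabla_\M f(x_1)=0$, which we obtain from the argument inside Lemma~\ref{lma:ps2-sufficient} applied to $z_0=0$; and (ii) the fact that $x_1+tw_t$ lies exactly on $\M$, which lets us evaluate $f$ only through its restriction. Point (ii) is why we perturb $w$ to $w_t$ rather than using the straight line $x_1+tw$. On that line $f$ may well equal $+\infty$, for example when $\dom f=\M$ as in the example of Sec.~\ref{sec:example}. The liminf over $w'\to w$ in the definition is precisely what permits this curve-based choice.
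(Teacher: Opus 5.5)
Your proposal is correct and follows the paper's proof essentially step by step: you obtain the normal ball of subgradients from Lemma~\ref{lma:ps2-sufficient} together with (A3), use the subgradient inequality to get a quotient of order $1/t$ in non-tangent directions, and use a $C^2$ curve on $\M$ with a second-order bound on $f\circ c$ for tangent directions. The only cosmetic difference is that you justify $(f\circ c)'(0)=0$ via $\nabla_\M f(x_1)=0$, whereas the paper reads it off directly from $x_1$ being a global minimizer of $f$ along the curve; both rest on the same minimality fact.
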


\begin{proof}
For notational simplicity, we use $T\coloneqq T_{x_1}\M$ and $N\coloneqq N_{x_1}\M$ in this proof. 

\noindent
\textbf{(i) Relative interior.}
Let $C\coloneqq\partial f(x_1)$.  
Since $0\in\aff(C)$ and $\aff(C)$ is a linear subspace, we have $\aff(C)=\parl(C)$. By Lemma \ref{lma:ps2-sufficient}, we have  $\aff(C)=N$.

From $0\in\ri(C)=\ri(N)$, there exists $r_0>0$ such that the open ball in $N$,
$B_N(0,r_0)\coloneqq\{n\in N:\ \|n\|<r_0\}$, satisfies
\begin{equation}
B_N(0,r_0)\subset C=\partial f(x_1).
\label{eq:ball-in-subdiff}
\end{equation}

\medskip 
\noindent 
\textbf{(ii) $\mathbf{\dom d^2 f(x_1\mid 0)\subset T}$.}
For $w\notin T$, we will show that $w\notin \dom d^2 f(x_1\mid 0)$.  Decompose $w=w_T+w_N$ with $w_T\in T$ and $w_N\in N$. 
Then $w_N\neq 0$.  Fix $r\in (0,r_0)$ and define
\[
g\coloneqq r\,\frac{w_N}{\|w_N\|}\in N.
\]
By \eqref{eq:ball-in-subdiff}, we have $g\in\partial f(x_1)$.
From $\langle g,w\rangle = r\|w_N\| > 0$, 
we can find $\varepsilon>0$ and $c>0$ such that
\begin{equation}
\langle g,w'\rangle \ge c
\qquad\text{whenever }\|w'-w\|<\varepsilon.
\label{eq:positive-inner-product}
\end{equation}

From $g\in\partial f(x_1)$, the subgradient inequality gives, for all $t>0$ and $\|w'-w\|<\varepsilon$,
\[
f(x_1+t w')-f(x_1)\ \ge\ t\langle g,w'\rangle \ge\ ct.
\]
Therefore
\[
\frac{f(x_1+t w')-f(x_1)}{\frac12 t^2}\ \ge\ \frac{ct}{\frac12 t^2}=\frac{2c}{t}\to +\infty \quad (t\downarrow 0).
\]
Taking the $\liminf$ over $t\downarrow0$ and $w'\to w$ shows
\[
d^2 f(x_1\mid 0)(w)=+\infty.
\]
Hence $w\notin\dom d^2 f(x_1\mid 0)$.

\medskip 
\noindent
\textbf{(iii) $\mathbf{T\subset \dom d^2 f(x_1\mid 0)}$.}
Let $w\in T$ be arbitrary.
Because $0\in\partial f(x_1)$ and $f$ is convex, $x_1$ is a global minimizer of $f$.
Since $\M$ is an embedded $C^2$ manifold, there exists a $C^2$ curve $\gamma:(-\delta,\delta)\to \M$ such that
$\gamma(0)=x_1$ and $\dot\gamma(0)=w$.
Define
\[
w_t\coloneqq\frac{\gamma(t)-x_1}{t}\qquad(t\neq 0).
\]
We have $w_t\to w$ as $t\to 0$, and $x_1+t w_t = \gamma(t)\in \M$.

Now define the one-variable function $g(t)\coloneqq f(\gamma(t))$. By (A1), $f|_M$ is $C^2$ near $x_1$
and $\gamma$ is $C^2$, therefore $g$ is $C^2$ near $t=0$.
Moreover, $g(t)=f(\gamma(t))\ge f(x_1)=g(0)$ for all $t$, since $x_1$ is a global minimizer of $f$.  Thus, $t=0$ is a minimizer of $g$ and therefore $g'(0)=0$.

By the Taylor expansion, there are $C>0$ and $\delta'>0$ such that for all $|t|<\delta'$,
\[
0\le g(t)-g(0) \le C t^2.
\]
For any sufficiently small $t>0$,
\[
\frac{f(x_1+t w_t)-f(x_1)}{\frac12 t^2}
=
\frac{g(t)-g(0)}{\frac12 t^2}
\le 2C.
\]
Since $w_t\to w$, we obtain
\begin{align*}
d^2 f(x_1\mid 0)(w)
& =
\liminf_{\substack{t\downarrow 0\\ w'\to w}}
\frac{f(x_1+t w')-f(x_1)}{\frac12 t^2} \\
&  \le\
\liminf_{t\downarrow 0}
\frac{f(x_1+t w_t)-f(x_1)}{\frac12 t^2} \\
&  \le\ 2C
\ <\ +\infty.
\end{align*}
We have $w\in\dom d^2 f(x_1\mid 0)$, proving $T\subset \dom d^2 f(x_1\mid 0)$.
\end{proof}

\subsection{Proof of Lemma \ref{lma:DProx}}
\label{sec:proof_lma_DProx}

\begin{lemma}[Lemma~\ref{lma:DProx}] \label{lma:DProx_apdx}
Let $h_T$ denote the orthogonal projection of $h$ onto $T_{x_1}\M$.  With the notation of Sec.~\ref{sec:lyap-prox-epi}, \\
(i) For any $h\in \R^d$, $DP_\lambda(x_1;h)=DP_\lambda(x_1;h_T)\in T_{x_1}\M$.\\
(ii) If $h\in N_{x_1}\M$, then $DP_\lambda(x_1;h)=0$.\\
(iii) There is $C>0$ such that  $DP_\lambda(x_1;h)=h + C\sqrt{\lambda}\,\|h\|$  for $h\in T_{x_1}\M$.
\end{lemma}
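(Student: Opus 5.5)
The plan is to read everything off the variational characterization in Lemma~\ref{lem:prox-dir-der}. Write $q(w)\coloneqq d^2 f(x_1\mid 0)(w)$, $T\coloneqq T_{x_1}\M$ and $N\coloneqq N_{x_1}\M$. Then $DP_\lambda(x_1;h)$ is the unique minimizer of $J_h(w)\coloneqq q(w)+\lambda^{-1}\|w-h\|^2$. Three properties of $q$ will be used.
\begin{itemize}
\item $q\ge 0$, since $0\in\partial f(x_1)$, as noted after \eqref{eq:epi-derivative}.
\item $q(0)=0$: take $w'=0$ in the $\liminf$ to get $q(0)\le 0$, and combine with $q\ge 0$.
\item $\dom q=T$, by Proposition~\ref{prop:dom-second-subderivative}.
\end{itemize}

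For (i), note that $J_h$ is finite only on $T$, so the minimizer lies in $T$. For $w\in T$, orthogonality gives $\|w-h\|^2=\|w-h_T\|^2+\|h-h_T\|^2$. The last term is a constant independent of $w$. Hence on $T$ the functions $J_h$ and $J_{h_T}$ differ by a constant, their unique minimizers coincide, and $DP_\lambda(x_1;h)=DP_\lambda(x_1;h_T)\in T$.

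For (ii), take $h\in N$, so $h_T=0$ and by (i) it suffices to minimize $J_0(w)=q(w)+\lambda^{-1}\|w\|^2$. This function is $\ge 0$ and vanishes at $w=0$, so $0$ is a minimizer. By the uniqueness in Lemma~\ref{lem:prox-dir-der} it is the minimizer, hence $DP_\lambda(x_1;h)=0$.

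For (iii), take $h\in T$ and let $w^\ast=DP_\lambda(x_1;h)$. Comparing the minimum with the value at $w=h$ gives
\[
\lambda^{-1}\|w^\ast-h\|^2\le q(w^\ast)+\lambda^{-1}\|w^\ast-h\|^2\le q(h),
\]
using $q(w^\ast)\ge 0$ in the first step. So it suffices to prove a uniform quadratic bound $q(h)\le K\|h\|^2$ for $h\in T$. This would give $\|w^\ast-h\|\le\sqrt{K\lambda}\,\|h\|$, i.e.\ the claim with $C=\sqrt K$.

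The uniform constant $K$ is the main obstacle, because the proof of Proposition~\ref{prop:dom-second-subderivative} only gives a $w$-dependent constant. I would make that argument uniform with a single chart. Take a $C^2$ local parametrization $\sigma:U\subset\R^m\to\M$ with $\sigma(0)=x_1$, and for $h=D\sigma(0)a$ use the curve $\gamma(t)=\sigma(ta)$. By (A1), $g(t)=f(\sigma(ta))$ has $g'(0)=0$, since $x_1$ minimizes $f$. Taylor's theorem then gives
\[
g(t)-g(0)\le \tfrac12 t^2\sup_{\bar B}\|\nabla^2(f\circ\sigma)\|\,\|a\|^2
\]
for $|t|\,\|a\|$ small, where $\bar B$ is a fixed closed ball around $0$ in $U$. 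Since $D\sigma(0)$ is injective, $\|a\|\le c\|h\|$, so $q(h)\le K\|h\|^2$ with $K=c^2\sup_{\bar B}\|\nabla^2(f\circ\sigma)\|$.

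As a fallback, $q$ is positively homogeneous of degree $2$, and convex under twice epi-differentiability of the convex function $f$. A convex function that is finite on the whole subspace $T$ is continuous there, hence bounded on the unit sphere of $T$, and homogeneity again yields $q(h)\le K\|h\|^2$.
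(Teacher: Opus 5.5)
Your proof is correct. Parts (i), (ii), and the reduction in (iii) to a uniform bound $d^2 f(x_1\mid 0)(h)\le K\|h\|^2$ on $T_{x_1}\M$ follow the paper's argument. Your (i) is slightly cleaner: the paper writes $\|w-h\|^2=\|w-h_T\|^2$ for $w\in T_{x_1}\M$, dropping the constant $\|h-h_T\|^2$ that you correctly keep.

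The genuine difference is how you obtain the uniform constant. The paper uses exactly your fallback. It takes $d^2 f(x_1\mid 0)$ to be convex (Prop.~13.20 in Rockafellar--Wets, which relies on twice epi-differentiability) and finite on $T_{x_1}\M$ by Proposition~\ref{prop:dom-second-subderivative}. Hence it is continuous and bounded on the unit ball of $T_{x_1}\M$, and positive $2$-homogeneity (Prop.~13.5) finishes.

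Your primary chart argument instead makes step (iii) of the proof of Proposition~\ref{prop:dom-second-subderivative} uniform. With $h=D\sigma(0)a$, the curve $t\mapsto\sigma(ta)$ gives difference vectors $w_t\to h$, and the Lagrange remainder together with $g'(0)=0$ yields $d^2 f(x_1\mid 0)(h)\le \sup_{\bar B}\|\nabla^2(f\circ\sigma)\|\,\|a\|^2\le K\|h\|^2$. This route is more elementary and gives an explicit constant in terms of the tangential Hessian of $f$ and the conditioning of $D\sigma(0)$. It also never uses convexity of the second epi-derivative, although twice epi-differentiability is still needed for Lemma~\ref{lem:prox-dir-der} itself. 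The paper's route is shorter given the cited variational-analysis facts but produces no explicit $C$.

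Both you and the paper read the literally ill-formed statement (iii) as $\|DP_\lambda(x_1;h)-h\|\le C\sqrt{\lambda}\,\|h\|$, which is what is actually proved.
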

\begin{proof}
(i) Combine Prop.~\ref{prop:dom-second-subderivative} and Lemma~\ref{lem:prox-dir-der}.
Since $d^2 f(x_1\mid 0)(w)=+\infty$ for $w\notin T_{x_1}\M$, the minimization \eqref{eq:prox-der-min} reduces to $w\in T_{x_1}\M$, that is, we can restrict the minimization constraint as follows:
\begin{equation}\label{eq:dp_T}
 DP_\lambda(x_1;h)\in\arg\min_{w\in T_{x_1}\M}
 \Big\{ d^2 f(x_1\mid 0)(w)+\frac{1}{\lambda}\|w-h\|^2\Big\}.
\end{equation}
This implies $DP_\lambda(x_1;h)\in T_{x_1}\M$. For $w\in T_{x_1}\M$, we have $\|w-h\|^2 = \|w-h_T\|^2$, so the claim is proved. 

(ii) From the definition of the second epi-derivative \eqref{eq:epi-derivative}, $d^2 f(x_1|0)(0)$ ($w=v=0$) is not greater than the value given by the specific choice $w'\equiv 0$, and thus we have 
\[
d^2 f(x_1|0)(0) \leq \liminf_{t\downarrow 0}
 \frac{ f(x+t\cdot 0)-f(x)-t\langle v,0\rangle}{ t^2/2} =0.
\]
On the other hand, the opposite direction $d^2 f(x_1|0)(w)\geq 0$ always holds by the definition of subgradient. Therefore, $d^2 f(x_1|0)(w)$ takes the minimum at $w=0$.  

Note also that $\|w-h\|^2=\|w\|^2 + \|h\|^2$ for $w\in T_{x_1}\M$.  
Since both the first and second terms of the right hand side of \eqref{eq:dp_T} have the minimum at $w=0$, the claim is proved. 

(iii) 
Compare the values of \eqref{eq:dp_T} at the minimizer $w=DP_\lambda(x_1;h)$ and $w=h$, then we have
\[
 d^2 f(x_1\mid 0)(DP_\lambda(x_1;h))+\frac{1}{\lambda}\|DP_\lambda(x_1;h)-h\|^2 \leq d^2 f(x_1\mid 0)(h).
\]
From $d^2f(x_1|0)(w)\geq 0$ for any $w$, we have 
\[
\|DP_\lambda(x_1;h)-h\|^2\le \lambda\, d^2 f(x_1| 0)(h).
\]
It suffices to show that $d^2 f(x_1|0)(h) \leq C \|h\|^2$ for some $C>0$.

The second epi-derivative $d^2f(x_1|0)$ of convex funciton $f$ is convex \citep[e.g., Prop.~13.20, ][]{RockafellarVariatioal}. It is also finite-valued on $T_{x_1}\M$ from 
Prop.~\ref{prop:dom-second-subderivative}, and thus $d^2f(x_1|0)(h)$ is continuous on $T_{x_1}\M$. Then, there is $C>0$ such that $d^2 f(x_1|0)(u)\leq C$ for any $u$ in the unit ball $\{u\in T_{x_1}\M : \|u\|\leq 1\}$.  

Note also that the second epi-derivative is positive $2$-homogeneous \citep[Prop.~13.5,][]{RockafellarVariatioal}, meaning that 
\[
d^2f(x_1|0)(rh) =  r^2 d^2f(x_1|0)(h)
\]
for any $r > 0$, from which we have, with $u=h/\|h\|$, 
\[
d^2f(x_1|0)(h) =  \|h\|^2 d^2f(x_1|0)(u) \leq C \|h\|^2.
\]
This completes the proof.  
\end{proof}

\section{FM vector field as a gradient field}
\label{sec:fm-gradient}

This section explains how the proximal view provides FM vector field as a gradient flows.  We consider two formulations: one is the gradient flow in the original variable, while the other uses scaled varible to obtain the gradient flow for the Moreau envelope. 

\subsection{Gradient flow in the original variable}

Based on the results in Sec.~\ref{sec:VF_prox}, we have the following gradient flow views to the OT-CFM.  Let 
\begin{equation}
   F^{\mathrm{FM}}_t(y)
   \coloneqq -\frac{\dot\alpha_t}{2\alpha_t}\,\|y\|^2
      - \left(\dot\beta_t-\beta_t\frac{\dot\alpha_t}{\alpha_t}\right)\psi_t^*(y),
 \end{equation}
be the time-dependent potential. Then we obtain 
 \begin{equation}
   v_t(y) = -\nabla F^{\mathrm{FM}}_t(y),
 \end{equation}
so that FM dynamics $\dot x_t = v_t(x_t)$ is a (non-autonomous) gradient flow in the original variable. This is essentially the same as the Benamou--Brenier formulation of OT \citep{Ambrosio2008-ty}, where the vector field for the OT is written as 
$v_t = \nabla F_t^{\mathrm{BB}}$ with some potential $F^{\mathrm{BB}}_t$,
while the time scheduling in our case is more general than the standard geodesic ($\alpha_t = 1-t$ and $\beta_t = t$).

\subsection{Rescaled dynamics as a Moreau-envelope flow.}

To make the ``proximal'' structure appear directly at the level of the ODE, introduce the rescaled state 
\begin{equation}
\label{eq:def-z}
  z_t \coloneqq \frac{x_t}{\beta_t}\qquad (t>0).
\end{equation}
Direct calculation derives 
\begin{align}
  \dot z_t = -c_t\bigl(z_t-\prox_{\lambda_t\phi}(z_t)\bigr),
\end{align}
where $c_t \coloneqq \frac{\dot\beta_t}{\beta_t}-\frac{\dot\alpha_t}{\alpha_t}$ ($>0$). 
It is known that the Moreau envelope is continuously differentiable and satisfies the identity
\begin{equation*}
  \nabla M_{\lambda\phi}(z)= \frac{z-\prox_{\lambda\phi}(z)}{\lambda}.
\end{equation*}
Therefore, the ODE of $z_t$ can be written as 
\begin{equation}
\label{eq:z-moreau-flow}
  \dot z_t = -c_t\lambda_t \nabla M_{\lambda_t\phi}(z_t).
\end{equation}
Thus, after the natural rescaling $z_t=x_t/\beta_t$, the OT-CFM dynamics becomes a gradient flow of the
(smooth) Moreau envelope $M_{\lambda_t\phi}$.

\section{Assumption (A3): A Disintegration Criterion for Relative-Interior Subgradients}
\label{sec:ri-subgradient-disintegration}

This section records a convenient measure-theoretic criterion ensuring that, for an optimal
coupling concentrated on the graph of a subdifferential, the conditional points lie in the
\emph{relative interior} of the corresponding fiber almost surely.  The statement is purely
measure-theoretic and can be applied to OT couplings in the non-smooth setting.

\subsection{Setup and notation}

Let $d\ge 1$ and let $\phi:\mathbb{R}^d\to(-\infty,+\infty]$ be proper, lower semicontinuous,
and convex. For each $x\in\mathbb{R}^d$, the subdifferential $\partial \phi(x)$ is a (possibly empty)
closed convex set. We write
\[
C(x) \coloneqq \partial \phi(x),\qquad
L(x)\coloneqq\aff C(x),\qquad
k(x)\coloneqq\dim L(x),
\]
and denote by $\ri(\cdot)$ and $\rb(\cdot)$ the relative interior and relative boundary, i.e.
for any closed convex set $C\subset\mathbb{R}^d$,
\[
\rb(C) \coloneqq C\setminus \ri(C).
\]
For an affine subspace $L\subset\mathbb{R}^d$ of dimension $k$, we denote by
$\mathcal{H}^k\!\upharpoonright_L$ the $k$-dimensional Hausdorff measure restricted to $L$
(equivalently, the intrinsic $k$-dimensional Lebesgue measure on $L$).

Let $\pi$ be a probability measure on $\mathbb{R}^d\times\mathbb{R}^d$ with first marginal $P_1$.
Assume that $\pi$ is supported on the graph of the subdifferential $\partial\phi$:
\begin{gather}\label{eq:pi-on-graph}
\pi\bigl(\gph(\partial\phi)\bigr)=1,  
\\
\gph(\partial\phi)\coloneqq\{(x_1,x_0)\in\mathbb{R}^d\times\mathbb{R}^d: x_0\in \partial\phi(x_1)\}.  \nonumber 
\end{gather}
(For quadratic-cost OT, $\pi=\pi_{\mathrm{OT}}$ satisfies \eqref{eq:pi-on-graph} for some convex $\phi$
by Rockafellar's theorem on cyclically monotone sets \citep[Theorem 24.8,][]{Rockafellar}.)

Since $\mathbb{R}^d$ is a Polish space, $\pi$ admits a disintegration (conditional probability) with respect to $P_1$:
there exists a $P_1$-a.e.\ uniquely defined probability kernel $x_1\mapsto \pi_{x_1}$ such that
\begin{equation}
\pi(dx_1,dx_0)=P_1(dx_1)\,\pi_{x_1}(dx_0),
\label{eq:disintegration}
\end{equation}
and for every bounded Borel measurable $f:\mathbb{R}^d\times\mathbb{R}^d\to\mathbb{R}$,
\begin{equation}
    \int f(x_1,x_0)\,\pi(dx_1,dx_0)
=
\int \Bigl(\int f(x_1,x_0)\,\pi_{x_1}(dx_0)\Bigr)\,P_1(dx_1).
\label{eq:disintegration-identity}
\end{equation}

\subsection{A necessary and sufficient disintegration condition}

\begin{lemma}[Disintegration equivalence for relative interior]
\label{lem:disintegration-equivalence}
Assume \eqref{eq:pi-on-graph} and let $\pi_{x_1}$ be the disintegration \eqref{eq:disintegration}.
Then the following are equivalent:
\begin{enumerate}
\item[1)]\label{it:global-ri}
$\pi$-a.e.\ $(x_1,x_0)$ satisfies $x_0\in \ri\bigl(\partial\phi(x_1)\bigr)$.
\item[2)]\label{it:conditional-ri}
For $P_1$-a.e.\ $x_1$,
\[
\pi_{x_1}\Bigl(\partial\phi(x_1)\setminus \ri(\partial\phi(x_1))\Bigr)=0,
\]
equivalently
\[
\pi_{x_1}\bigl(\ri(\partial\phi(x_1))\bigr)=1.
\]
\end{enumerate}
\end{lemma}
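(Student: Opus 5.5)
The plan is to apply the disintegration identity \eqref{eq:disintegration-identity} to the indicator of the ``bad'' set
\[
B\coloneqq\bigl\{(x_1,x_0)\in\gph(\partial\phi):\ x_0\notin\ri(\partial\phi(x_1))\bigr\},
\]
so that statement 1) reads $\pi(B)=0$. Its $x_1$-section is
\[
B_{x_1}=\partial\phi(x_1)\setminus\ri(\partial\phi(x_1)).
\]
Identity \eqref{eq:disintegration-identity} gives
\[
\pi(B)=\int \pi_{x_1}(B_{x_1})\,P_1(dx_1).
\]
Since the integrand is nonnegative, $\pi(B)=0$ holds if and only if $\pi_{x_1}(B_{x_1})=0$ for $P_1$-a.e.\ $x_1$, which is exactly the first form of statement 2).

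For the equivalent form in 2), the key is that $\pi_{x_1}$ lives on $\partial\phi(x_1)$ for $P_1$-a.e.\ $x_1$. To see this, apply the same identity to the complement of $\gph(\partial\phi)$. By \eqref{eq:pi-on-graph} this complement has $\pi$-measure zero, so
\[
\pi_{x_1}\bigl(\R^d\setminus\partial\phi(x_1)\bigr)=0
\quad\text{for } P_1\text{-a.e. } x_1.
\]
On this full-measure set, $\pi_{x_1}(\partial\phi(x_1))=1$. Subtracting then gives $\pi_{x_1}(B_{x_1})=0$ if and only if $\pi_{x_1}(\ri\,\partial\phi(x_1))=1$. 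This requires $\ri\subset\partial\phi(x_1)$, which holds for convex sets. I will intersect the two $P_1$-a.e.\ sets, which is harmless since a countable intersection of full-measure sets has full measure.

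The main obstacle is measurability. The identity \eqref{eq:disintegration-identity} needs $B$ and $\gph(\partial\phi)$ to be Borel, or at least $\pi$-measurable.

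For the graph, I will use that $\phi$ is proper, lsc and convex, so $\partial\phi$ is maximal monotone. Its graph is therefore closed, hence Borel.

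For the relative interior, I will write membership via $x_0\in\ri C$ if and only if there exists $\varepsilon>0$ with
\[
\bigl(x_0+\varepsilon\bar{B}\bigr)\cap\aff C\subset C.
\]
I will reduce this to countable operations: rational $\varepsilon$, together with a countable dense family of directions tested through the support function $\sigma_{\partial\phi(x_1)}$. Alternatively, I can treat $x_1\mapsto\partial\phi(x_1)$ as a closed-valued measurable multifunction and use the measurability of $x_1\mapsto\ri\,\partial\phi(x_1)$ from Rockafellar--Wets, Ch.~14.

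If full Borel measurability proves awkward, I fall back to universal measurability of $B$. Sets obtained from Borel sets by projection are analytic, hence universally measurable. The disintegration identity extends to universally measurable sets by completion, so the argument above goes through unchanged.
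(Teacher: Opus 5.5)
Your proposal is correct and is essentially the paper's argument. The paper applies \eqref{eq:disintegration-identity} to the indicator of the good set $E=\{(x_1,x_0): x_0\in\ri(\partial\phi(x_1))\}$ rather than your bad set $B$, and from $\pi(E)=\int\pi_{x_1}(\ri(\partial\phi(x_1)))\,P_1(dx_1)$ reaches the same conclusion the same way. Your extra steps fill in points the paper takes for granted: you use \eqref{eq:pi-on-graph} to get $\pi_{x_1}(\partial\phi(x_1))=1$ for $P_1$-a.e.\ $x_1$, which justifies the ``equivalently'' in 2), and you address the measurability of the sets involved, which the paper simply asserts.
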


\begin{proof}
Define the measurable set
\[
E\coloneqq\{(x_1,x_0): x_0\in \ri(\partial\phi(x_1))\}.
\]
By \eqref{eq:disintegration-identity} applied to $f=\mathbf{1}_E$,
\[
\pi(E)=\int \pi_{x_1}(E_{x_1})\,P_1(dx_1),
\]
where 
\[
E_{x_1}\coloneqq\{x_0:(x_1,x_0)\in E\}=\ri(\partial\phi(x_1)).
\]
Therefore $\pi(E)=1$ if and only if $\pi_{x_1}(E_{x_1})=1$ for $P_1$-a.e.\ $x_1$, which is exactly
2). This proves the equivalence.
\end{proof}

Lemma~\ref{lem:disintegration-equivalence} is the exact disintegration condition: it
reduces the desired $\pi$-a.e.\ statement to a conditional $\pi_{x_1}$-a.e.\ statement on each fiber.

\subsection{A checkable sufficient condition: absolute continuity within each fiber}

We next give a practical sufficient condition implying Lemma \ref{it:conditional-ri}-2), based on the fact
that the relative boundary of a full-dimensional convex set has zero intrinsic volume.

\begin{lemma}[Relative boundary has zero intrinsic volume]
\label{lem:relative-boundary-zero}
Let $L\subset\mathbb{R}^d$ be an affine subspace with $\dim L=k\ge 1$ and let
$C\subset L$ be a nonempty closed convex set with $\ri(C)\neq\emptyset$. Then
\[
\mathcal{H}^k\!\upharpoonright_L\bigl(\rb(C)\bigr)=0.
\]
If $k=0$, then $C$ is a singleton and $\ri(C)=C$.
\end{lemma}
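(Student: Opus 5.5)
The plan is to reduce to a $k$-dimensional Euclidean space and then show that the relative boundary of a full-dimensional closed convex set is Lebesgue-null. I would do this with a radial scaling argument around an interior point.

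\textbf{Reduction.} Since $L$ is a $k$-dimensional affine subspace, choose an affine isometry $A:\R^k\to L$. Then $\mathcal{H}^k\!\upharpoonright_L$ is the pushforward of $k$-dimensional Lebesgue measure $\mathcal{L}^k$ under $A$. The set $K\coloneqq A^{-1}(C)$ is a closed convex subset of $\R^k$ whose interior is nonempty, since $\ri(C)\neq\emptyset$. Relative interior and relative boundary in $L$ correspond to interior and topological boundary in $\R^k$. It therefore suffices to show $\mathcal{L}^k(\partial K)=0$.

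\textbf{Main step.} Translate so that $0\in\operatorname{int}K$, and fix $r>0$ with $B(0,r)\subset K$. Next, localize: $\partial K=\bigcup_{R\in\mathbb{N}}(\partial K\cap \bar B(0,R))$, so by countable subadditivity it is enough to treat each bounded piece. Fix $R$ and let $K_R\coloneqq K\cap \bar B(0,R)$, which is compact, convex, and contains $B(0,r)$.

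The key geometric fact is this: for $s\in(0,1)$, the set $sK_R$ lies in $\operatorname{int}K_R$. Indeed, for $x\in K_R$ the ball $sx+(1-s)B(0,r)$ is contained in $K_R$ by convexity. Hence $\partial K_R\subset K_R\setminus sK_R$, and so
\[
\mathcal{L}^k(\partial K_R)\le \mathcal{L}^k(K_R)-\mathcal{L}^k(sK_R)=(1-s^k)\,\mathcal{L}^k(K_R).
\]
Here $\mathcal{L}^k(K_R)<\infty$. Letting $s\uparrow 1$ gives $\mathcal{L}^k(\partial K_R)=0$.

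Finally, I need $\partial K\cap B(0,R)\subset\partial K_R$. This holds because near a point of norm less than $R$, the sets $K$ and $K_R$ coincide. Taking the union over $R$ yields $\mathcal{L}^k(\partial K)=0$, and transporting back through $A$ gives $\mathcal{H}^k\!\upharpoonright_L(\rb(C))=0$.

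\textbf{Case $k=0$.} A $0$-dimensional affine subspace is a single point, so $C$ is that singleton. Relative to $L$ the point is open, hence $\ri(C)=C$.

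\textbf{Main obstacle.} The only delicate point is unboundedness of $C$, since the scaling estimate needs finite volume. The truncation by balls handles this; the step to check carefully is that the truncation does not create spurious boundary inside $B(0,R)$, which holds by the local coincidence of $K$ and $K_R$ noted above.
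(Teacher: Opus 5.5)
Your proof is correct, but after the same affine-isometry reduction it reaches $\mathcal{L}^k(\partial K)=0$ by a different mechanism than the paper.

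The paper argues by contradiction. If $\partial T$ had positive measure, the Lebesgue density theorem would give a point of $\partial T$ at which $T$ has density $1$. But a supporting hyperplane at that point confines $T$ to a closed half-space, so the density there is at most $1/2$.

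You instead use a homothety about an interior point. The inclusion $sK_R\subset\operatorname{int}K_R$ (the line-segment principle) traps $\partial K_R$ in the shell $K_R\setminus sK_R$, whose volume $(1-s^k)\mathcal{L}^k(K_R)$ tends to $0$.

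Your route is more elementary: it needs only the scaling law of Lebesgue measure and convexity, with no density theorem or supporting hyperplanes. It also gives an explicit shell estimate. Because it compares total volumes, though, it requires $\mathcal{L}^k(K_R)<\infty$. That is why you need the truncation and the localization $\partial K\cap B(0,R)\subset\partial K_R$, both of which you handle correctly. One small detail: $B(0,r)\subset K_R$ requires $R\ge r$, but for $R<r$ the piece $\partial K\cap\bar B(0,R)$ is empty anyway.

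The paper's argument is purely local at each boundary point. So it handles unbounded $C$ with no truncation. It also extends verbatim to any closed set none of whose boundary points is a Lebesgue density point, for example sets satisfying an exterior cone condition.
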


\begin{proof}
The case $k=0$ is immediate.
Assume $k\ge 1$ and let $A:L\to\mathbb{R}^k$ be an affine isometry. Set $T\coloneqq A(C)\subset\mathbb{R}^k$.
Then $T$ is closed, convex, and has nonempty (Euclidean) interior in $\mathbb{R}^k$, and
\[
A(\ri(C))=\mathrm{int}(T),\qquad A(\rb(C))=\partial T.
\]
Moreover, since $A$ is an isometry between $(L,\mathcal{H}^k\!\upharpoonright_L)$ and
$(\mathbb{R}^k,\mathrm{Leb}^k)$, we have
\[
\mathcal{H}^k\!\upharpoonright_L(\rb(C))=0
\quad\Longleftrightarrow\quad
\mathrm{Leb}^k(\partial T)=0,
\]
where $\mathrm{Leb}^k$ denotes $k$-dimensional Lebesgue measure on $\mathbb{R}^k$.
We now prove $\mathrm{Leb}^k(\partial T)=0$.

Suppose for contradiction that $\mathrm{Leb}^k(\partial T)>0$. Since $\partial T\subset T$,
this implies $\mathrm{Leb}^k(T)>0$. By the Lebesgue density theorem \citep[7.12,][]{RudinAnalysis}, $\mathrm{Leb}^k$-a.e.\ point of $T$
is a density point of $T$. Hence there exists $x\in \partial T$ which is a density point of $T$,
i.e.
\[
\lim_{r\downarrow 0}\frac{\mathrm{Leb}^k(T\cap B_r(x))}{\mathrm{Leb}^k(B_r(x))}=1,
\]
where $B_r(x)$ is the Euclidean ball of radius $r$ centered at $x$.

On the other hand, since $T$ is closed convex with nonempty interior and $x\in\partial T$,
the supporting hyperplane theorem yields a nonzero vector $n\in\mathbb{R}^k$ such that
\[
\langle n,z-x\rangle \le 0\quad\text{for all }z\in T.
\]
Equivalently, $T$ is contained in the closed half-space $H^-\coloneqq\{z:\langle n,z-x\rangle\le 0\}$.
Therefore the opposite open half-space $H^+\coloneqq\{z:\langle n,z-x\rangle>0\}$ satisfies
$H^+\cap T=\emptyset$. For every $r>0$, the set $B_r(x)\cap H^+$ occupies exactly half of the ball
(up to a null set), hence
\[
\mathrm{Leb}^k(T\cap B_r(x))
\le \mathrm{Leb}^k(B_r(x)\cap H^-)
= \frac{1}{2}\,\mathrm{Leb}^k(B_r(x)).
\]
Thus the density of $T$ at $x$ is at most $1/2$, contradicting that $x$ is a density point of $T$.
We conclude $\mathrm{Leb}^k(\partial T)=0$, and therefore
$\mathcal{H}^k\!\upharpoonright_L(\rb(C))=0$ as claimed.
\end{proof}

We can now state the desired sufficient disintegration condition.

\begin{lemma}[A sufficient disintegration condition for relative-interior subgradients]
\label{lem:ac-implies-ri}
Assume \eqref{eq:pi-on-graph} and let $\pi_{x_1}$ be the disintegration \eqref{eq:disintegration}.
Assume moreover that for $P_1$-a.e.\ $x_1$ the following hold:
\begin{enumerate}
\item\label{it:support}
$\pi_{x_1}$ is supported on $C(x_1)=\partial\phi(x_1)$, i.e.\ $\pi_{x_1}(C(x_1))=1$;
\item\label{it:ac}
writing $L(x_1)=\aff C(x_1)$ and $k(x_1)=\dim L(x_1)$, the conditional measure $\pi_{x_1}$ is
absolutely continuous with respect to the intrinsic $k(x_1)$-dimensional volume on $L(x_1)$:
\begin{equation}
\pi_{x_1}\ \ll\ \mathcal{H}^{k(x_1)}\!\upharpoonright_{L(x_1)}.
\label{eq:conditional-ac}
\end{equation}
\end{enumerate}
Then
\[
x_0\in \ri(\partial\phi(x_1))
\qquad\text{for }\pi\text{-a.e.\ }(x_1,x_0).
\]
\end{lemma}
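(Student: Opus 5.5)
The plan is to reduce the claim to Lemma~\ref{lem:disintegration-equivalence}: it suffices to show that for $P_1$-a.e.\ $x_1$ we have $\pi_{x_1}\bigl(\ri(\partial\phi(x_1))\bigr)=1$, or equivalently $\pi_{x_1}\bigl(\rb(C(x_1))\bigr)=0$. The implication 2)$\Rightarrow$1) of that lemma then gives the $\pi$-a.e.\ statement directly. So the whole argument is fiberwise. We fix a point $x_1$ outside the $P_1$-null exceptional set on which conditions \ref{it:support} and \ref{it:ac} might fail.

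For such an $x_1$, write $C=C(x_1)$, $L=L(x_1)$ and $k=k(x_1)$. Since $\pi_{x_1}$ is a probability measure with $\pi_{x_1}(C)=1$, the set $C$ is nonempty. It is closed and convex because it is a subdifferential. A nonempty convex set always has nonempty relative interior in its affine hull, so $\ri(C)\neq\emptyset$ and $C\subset L$ with $\dim L=k$.

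We then split into two cases.
\begin{itemize}
\item If $k=0$, then $C$ is a singleton and $\ri(C)=C$, so $\pi_{x_1}(\ri C)=\pi_{x_1}(C)=1$ immediately.
\item If $k\ge1$, Lemma~\ref{lem:relative-boundary-zero} gives $\mathcal{H}^k\!\upharpoonright_L(\rb(C))=0$. The absolute continuity \eqref{eq:conditional-ac} then yields $\pi_{x_1}(\rb(C))=0$. Hence $\pi_{x_1}(\ri C)=\pi_{x_1}(C)-\pi_{x_1}(\rb C)=1$.
\end{itemize}

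The only delicate point is measurability. Lemma~\ref{lem:disintegration-equivalence} already treats $E=\{(x_1,x_0):x_0\in\ri(\partial\phi(x_1))\}$ as measurable, and $\rb(C)$ is a Borel set in $L$, since $C$ is closed and $\ri(C)$ is relatively open. So the step above is legitimate for each fixed $x_1$. Collecting the $P_1$-a.e.\ fibers and applying Lemma~\ref{lem:disintegration-equivalence} finishes the proof. I expect no substantial obstacle; the main care is in this measurability bookkeeping and in checking that $\ri(C)\ne\emptyset$, which is what Lemma~\ref{lem:relative-boundary-zero} needs.
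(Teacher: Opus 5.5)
Your proposal is correct and follows the paper's proof essentially verbatim: you fix a good fiber, treat $k=0$ as the singleton case, and for $k\ge 1$ combine Lemma~\ref{lem:relative-boundary-zero} with \eqref{eq:conditional-ac} to get $\pi_{x_1}(\rb C)=0$, then invoke Lemma~\ref{lem:disintegration-equivalence}. Your explicit check that $C\neq\emptyset$ and $\ri(C)\neq\emptyset$ is a small point the paper leaves implicit; it is needed as a hypothesis of Lemma~\ref{lem:relative-boundary-zero}.
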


\begin{proof}
Fix $x_1$ in the full-measure set where \eqref{it:support}--\eqref{it:ac} hold and set $C=C(x_1)$,
$L=L(x_1)$, $k=k(x_1)$. If $k=0$, then $C$ is a singleton and $\ri(C)=C$, hence
$\pi_{x_1}(\ri(C))=1$ by \eqref{it:support}.

Assume $k\ge 1$. By Lemma~\ref{lem:relative-boundary-zero},
$\mathcal{H}^k\!\upharpoonright_L(\rb(C))=0$. By \eqref{eq:conditional-ac},
$\pi_{x_1}(\rb(C))=0$. Since $C$ is closed convex, $C\setminus \ri(C)=\rb(C)$, so
\[
\pi_{x_1}(C\setminus \ri(C))=0
\quad\Longrightarrow\quad
\pi_{x_1}(\ri(C))=1.
\]
Therefore \eqref{it:conditional-ri} of Lemma~\ref{lem:disintegration-equivalence} holds, and
Lemma~\ref{lem:disintegration-equivalence} yields the desired $\pi$-a.e.\ conclusion.
\end{proof}

\paragraph{Interpretation.}
Lemma~\ref{lem:ac-implies-ri} says that, if for $P_1$-a.e.\ $x_1$ the conditional law
$\pi(dx_0\mid x_1)$ has a density on the affine hull of the fiber $\partial\phi(x_1)$, then the coupling
does not charge the lower-dimensional faces of $\partial\phi(x_1)$, hence lands in the relative interior
with probability one on each fiber, and consequently $\pi$-almost surely.

\section{Denoiser view to Schr\"odinger Bridge}
\label{sec:SB}

Under the Schr{\"o}dinger bridge framework, the joint density at times $0$ and $s$ factors as
\[
  p_t^{\text{SB}}(x_s,x_0) \propto \psi_t(x_s)K_s(x_s|x_0)\varphi_0(x_0),
\]
where $K_s(x_s|x_0)$ denotes the transition kernel of the reference diffusion.
Here, $(\varphi,\psi)$ are the Schr{\"o}dinger potentials (in the exponential domain),
which are the solutions to the Kolmogorov equation \cite{Leonard2014Survey}.
By assuming the Brownian motion, the transition kernel is $K_s(x_s|x_0)\coloneqq\mathcal{N}(x_s;x_0,2\epsilon sI)$.
We focus on deriving a concise expression of the denoiser $D_s(x_s)\coloneqq\mathbb{E}[X_0|X_s=x_s]$.

First, let us unroll the conditional path distribution:
\begin{align*}
  p_s^{\text{SB}}(x_0|x_s)
  & = \frac{p_s^{\text{SB}}(x_s,x_0)}{p_s(x_s)}  \\
  & = \frac{p_s^{\text{SB}}(x_s,x_0)}{\int p_s^{\text{SB}}(x_s,x_0)dx_0}  \\
  & = \frac{K_s(x_s|x_0)\varphi_0(x_0)}{\int K_s(x_s|x_0')\varphi_0(x_0')dx_0'}.
\end{align*}
With this, the denoiser is
\begin{align*}
\mathbb{E}[X_0|X_s=x_s]
  & = \frac{\int x_0K_s(x_s|x_0)\varphi_0(x_0)dx_0}{\int K_s(x_s|x_0)\varphi_0(x_0)dx_0}    \\
  & = \frac{\int(x_s+y)\exp\left(-\frac{\|y\|^2}{4\epsilon s}\right)\varphi_0(x_s+y)dy}{\int\exp\left(-\frac{\|y\|^2}{4\epsilon s}\right)\varphi_0(x_s+y)dy},
\end{align*}
where $y\coloneqq x_0-x_s$.
Now we apply the Laplace approximation to proceed.
The Taylor expansion of $\varphi_0$ gives us
\begin{align*}
   \varphi_0(x_s+y) 
  & = \varphi_0(x_s) + \nabla\varphi_0(x_s)^\top y + \frac12y^\top \text{Hess}_{\varphi_0}(x_s)y + O(\|y\|^3).
\end{align*}
The denominator of the denoiser can be written with the Gaussian integral as follows:
\begin{equation*}
  \int\exp\left(-\frac{\|y\|^2}{4\epsilon s}\right)\varphi_0(x_s+y)dy
  = (4\pi\epsilon s)^{d/2}\varphi_0(x_s) + O((\epsilon s)^{(d+2)/2}).
\end{equation*}
Note that the first-order term vanishes due to the symmetry of the Gaussian transition kernel.
For the numerator of the denoiser, we continue as follows:
\[
\begin{aligned}
  \int& (x_s+y)\exp\left(-\frac{\|y\|^2}{4\epsilon s}\right)\varphi_0(x_s+y)dy \\
  &= x_s\int \exp\left(-\frac{\|y\|^2}{4\epsilon s}\right)\varphi_0(x_s+y)dy 
  + \int y\exp\left(-\frac{\|y\|^2}{4\epsilon s}\right)\varphi_0(x_s+y)dy \\
  &= (4\pi\epsilon s)^{d/2}x_s\varphi_0(x_s) + O((\epsilon s)^{(d+2)/2}) 
  \int y\exp\left(-\frac{\|y\|^2}{4\epsilon s}\right)\varphi_0(x_s+y)dy,
\end{aligned}
\]
where the first term is simplified in the same manner as the denominator.
For the second term, substitute the Taylor expansion:
\[
\begin{aligned}
  \int& y\exp\left(-\frac{\|y\|^2}{4\epsilon s}\right)\varphi_0(x_s+y)dy \\
  &= \varphi_0(x_s)\underbrace{\int y\exp\left(-\frac{\|y\|^2}{4\epsilon s}\right)dy}_{=0} 
  + \int y\exp\left(-\frac{\|y\|^2}{4\epsilon s}\right)(\nabla\varphi_0(x_s)^\top y)dy 
   + O((\epsilon s)^{(d+2)/2}),
\end{aligned}
\]
and the second integral is simplified as follows (in its $i$-th coordinate):
\[
\begin{aligned}
  & \int y_i\exp\left(-\frac{\|y\|^2}{4\epsilon s}\right)(\nabla\varphi_0(x_s)^\top y)dy  \\
  &= \sum_{j=1}^d \nabla_j\varphi_0(x_s) \int y_iy_j\exp\left(-\frac{\|y\|^2}{4\epsilon s}\right)dy \\
  &= \sum_{j=1}^d \nabla_j\varphi_0(x_s) \cdot 2\epsilon s\delta_{ij}(4\pi\epsilon s)^{d/2} \\
  &= 2\epsilon s(4\pi\epsilon s)^{d/2} \cdot \nabla_i\varphi_0(x_s).
\end{aligned}
\]
By plugging this back to the numerator, it simplifies to
\[
\begin{aligned}
  & \int(x_s+y)\exp\left(-\frac{\|y\|^2}{4\epsilon s}\right)\varphi_0(x_s+y)dy  \\
  & = (4\pi\epsilon s)^{d/2}\left[x_s\varphi_0(x_s) + 2\epsilon s\nabla\varphi_0(x_s)\right] + O((\epsilon s)^{(d+2)/2}).
\end{aligned}
\]
Finally, the Laplace approximation simplifies the denoiser expression as follows:
\[
\begin{aligned}
  D_s(x_s)
  &= \mathbb{E}[X_0|X_s=x_s] \\
  &= \frac{x_s\varphi_0(x_s) + 2\epsilon s\nabla\varphi_0(x_s)}{\varphi_0(x_s)} + O((\epsilon s)^2) \\
  &= x_s + 2\epsilon s\nabla\log\varphi_0(x_s) + O((\epsilon s)^2).
\end{aligned}
\]
Thus we arrive at the claimed expression.
Unlike Tweedie's formula in the diffusion model case, this denoiser expression holds at infinitesimal noise due to the Laplace approximation.

\section{Stability of the minibatch OT-FM flow and convergence of pushforward measures}
\label{sec:app:stability-pushforward}

In this section, we give a proof of convergence of the ODE solution given by Minibatch OT-CFM to the slution by population OT-CFM.  The following results are based on the developments in Section \ref{sec:minibatch-OT}. 

We give general convergence results in the following setting, which is applicable to the case of Section~\ref{sec:minibatch-OT}.  As the factor $\dot\alpha_t/\alpha_t$ diverges for $t\to 1$, we show convergence only at $t=c$ with arbitrary $c<1$.

Fix $c\in(0,1)$. For each $n\in\mathbb{N}$ let $
v_n, v:[0,c]\times\mathbb{R}^d\to\mathbb{R}^d$ (for $n\in \mathbb{N}$) 
be (possibly random) time-dependent vector fields.
Assume that for every $x_0\in\mathbb{R}^d$ the initial value problems
\begin{equation}\label{eq:ivp-n}
\dot x_n(t)=v_n(t,x_n(t)),\qquad x_n(0)=x_0,
\end{equation}
and
\begin{equation}\label{eq:ivp}
\dot x(t)=v(t,x(t)),\qquad x(0)=x_0,
\end{equation}
admit unique absolutely continuous solutions on $[0,c]$.
We denote by $\Phi_n(t,x_0) \coloneqq x_n(t)$ and $\Phi(t,x_0) \coloneqq x(t)$ the associated flow maps.

Throughout this section, we work \emph{pathwise}: if $v_n,v$ are random fields,
all assumptions and conclusions are understood on an event where the stated bounds hold.
For example, if $v_n\to v$ locally uniformly almost surely, then the results below hold
almost surely.

\subsection{A one-sided Lipschitz bound}

We record the key monotonicity estimate satisfied by the OT-CFM written in the proximal form.
Let $\prox_{\lambda f}$ denote the Euclidean proximal map of a proper lsc  convex function $f$:
\[
\prox_{\lambda f}(y)  \coloneqq  \arg\min_{x\in\mathbb{R}^d}\left\{ f(x)+\frac{1}{2\lambda}\|x-y\|^2\right\}.
\]
It is well-known that $\prox_{\lambda f}$ is $1$-Lipschitz for every $\lambda>0$.

\begin{lemma}[One-sided Lipschitz (OSL) bound for the proximal OT-FM vector field]
\label{lem:osl-prox}
Let $\alpha,\beta\in C^1((0,1))$ satisfy $\alpha_t>0$ and $\beta_t>0$.
Fix $t\in(0,1)$ and define $\lambda_t \coloneqq \alpha_t/\beta_t$ and
\[
b(t) \coloneqq \beta_t\left(\frac{\dot\beta_t}{\beta_t}-\frac{\dot\alpha_t}{\alpha_t}\right).
\]
Let $\phi:\mathbb{R}^d\to(-\infty,+\infty]$ be proper lsc\ convex and define
\begin{equation}\label{eq:prox-drift}
v(t,x)
 \coloneqq \frac{\dot\alpha_t}{\alpha_t}\,x
+b(t)\,\prox_{\lambda_t\phi}\!\left(\frac{x}{\beta_t}\right).
\end{equation}
Assume $b(t)\ge 0$. Then for all $x,y\in\mathbb{R}^d$,
\begin{equation}\label{eq:osl}
\langle v(t,x)-v(t,y),\,x-y\rangle
\le \frac{\dot\beta_t}{\beta_t}\,\|x-y\|^2.
\end{equation}
The same inequality holds with $\phi$ replaced by any other convex potential (e.g.\ $\phi_n$),
hence it is uniform in $n$ for minibatch-OT potentials.
\end{lemma}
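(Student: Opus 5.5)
The plan is to split $v(t,\cdot)$ into its linear part and its proximal part, and bound the inner product of each with $x-y$ separately. The key property to use is firm nonexpansiveness of the proximal map of a proper lsc convex function, which is stronger than the $1$-Lipschitz property recalled above:
\[
\langle \prox_{\lambda f}(a)-\prox_{\lambda f}(b),\,a-b\rangle \ \ge\ \|\prox_{\lambda f}(a)-\prox_{\lambda f}(b)\|^2\ \ge\ 0 .
\]
It follows from monotonicity of $\partial f$: writing $p=\prox_{\lambda f}(a)$ and $q=\prox_{\lambda f}(b)$, we have $(a-p)/\lambda\in\partial f(p)$ and $(b-q)/\lambda\in\partial f(q)$. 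In fact only the Lipschitz bound combined with Cauchy--Schwarz is needed, as explained below.

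Write $\Delta\coloneqq x-y$, $a\coloneqq x/\beta_t$, $b\coloneqq y/\beta_t$, so that $a-b=\Delta/\beta_t$. Then
\[
\langle v(t,x)-v(t,y),\Delta\rangle=\frac{\dot\alpha_t}{\alpha_t}\|\Delta\|^2+b(t)\,\langle \prox_{\lambda_t\phi}(a)-\prox_{\lambda_t\phi}(b),\,\Delta\rangle .
\]
Since $b(t)\ge0$, an upper bound on the second inner product suffices. By Cauchy--Schwarz and the $1$-Lipschitz property,
\[
\langle \prox_{\lambda_t\phi}(a)-\prox_{\lambda_t\phi}(b),\Delta\rangle\le \|a-b\|\,\|\Delta\|=\|\Delta\|^2/\beta_t .
\]
Substituting gives the coefficient
\[
\frac{\dot\alpha_t}{\alpha_t}+\frac{b(t)}{\beta_t}=\frac{\dot\alpha_t}{\alpha_t}+\frac{\dot\beta_t}{\beta_t}-\frac{\dot\alpha_t}{\alpha_t}=\frac{\dot\beta_t}{\beta_t},
\]
which is exactly \eqref{eq:osl}.

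The only point needing care is the sign condition $b(t)\ge0$. If $b(t)$ were negative, the inequality would flip, and we would instead need a lower bound on the inner product; that lower bound is nonnegativity, coming from monotonicity. The hypothesis $b(t)\ge0$ removes this case. The argument never uses any property of $\phi$ beyond its being proper, lsc and convex, which guarantees that the proximal map is single-valued and nonexpansive. It therefore applies verbatim to every minibatch potential $\phi_n$, and the bound is uniform in $n$. I do not expect a genuine obstacle; the main thing to state explicitly is the nonexpansiveness of $\prox$, which the paper records as well known.
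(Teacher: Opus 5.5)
Your proposal is correct and follows essentially the same argument as the paper: split off the linear term, bound the proximal term by Cauchy--Schwarz together with the $1$-Lipschitz property of $\prox_{\lambda_t\phi}$ (firm nonexpansiveness is not needed), and use $b(t)\ge 0$ to combine the coefficients into $\dot\beta_t/\beta_t$. The only blemish is cosmetic: you use the letter $b$ both for the scalar $b(t)$ and for the point $y/\beta_t$, so you may want to rename one of them.
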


\begin{proof}
Fix $t$ and write $d \coloneqq x-y$, $u \coloneqq x/\beta_t$, $w \coloneqq y/\beta_t$.
Let $p \coloneqq \prox_{\lambda_t\phi}(u)$ and $q \coloneqq \prox_{\lambda_t\phi}(w)$.
From \eqref{eq:prox-drift},
\[
v(t,x)-v(t,y)=\frac{\dot\alpha_t}{\alpha_t}\,d + b(t)\,(p-q).
\]
Therefore,
\[
\langle v(t,x)-v(t,y),d\rangle
=\frac{\dot\alpha_t}{\alpha_t}\|d\|^2 + b(t)\langle p-q,d\rangle.
\]
Since $d=\beta_t(u-w)$ and $\prox_{\lambda_t\phi}$ is $1$-Lipschitz,
\[
\langle p-q,d\rangle
=\beta_t\langle p-q,u-w\rangle
\le \beta_t\|p-q\|\,\|u-w\|
\le \beta_t\|u-w\|^2
=\frac{1}{\beta_t}\|d\|^2.
\]
Using $b(t)\ge 0$ and $b(t)/\beta_t=\dot\beta_t/\beta_t-\dot\alpha_t/\alpha_t$ gives
\[
\langle v(t,x)-v(t,y),d\rangle
\le \left(\frac{\dot\alpha_t}{\alpha_t}+\frac{b(t)}{\beta_t}\right)\|d\|^2
=\frac{\dot\beta_t}{\beta_t}\|d\|^2,
\]
which is \eqref{eq:osl}.
\end{proof}

\begin{remark}[Standard schedule]
\label{rem:standard-schedule}
For the standard affine schedule $\alpha_t=1-t$, $\beta_t=t$ one has
$\dot\beta_t/\beta_t = 1/t$ and $b(t)=1/(1-t)>0$.
Thus Lemma~\ref{lem:osl-prox} yields the OSL constant $\ell(t)=1/t$.
This is substantially smaller than the (two-sided) Lipschitz bound
$L(t)=\frac{1}{t}+\frac{2}{1-t}$ obtained from the nonexpansiveness of $\prox$ by norm estimates.
\end{remark}

\subsection{Convergence of trajectories at time \texorpdfstring{$c$}{c}}

 We make the following assumptions.  (B3) and (B4) just formalize  the current situations, while (B1) and (B2) require additional assumptions on the vector fields. 

\begin{assumption}[Uniform boundedness and convergence of vector fields]
\label{ass:vf}
Fix $c\in(0,1)$.
\begin{enumerate}
\item[(B1)] \textbf{Uniform boundedness.}
There exists $M>0$ such that for all $n$, all $t\in[0,c]$, and all $x\in\mathbb{R}^d$,
\[
\|v_n(t,x)\|\le M,\qquad \|v(t,x)\|\le M.
\]

\item[(B2)] \textbf{Early-time uniform closeness (uniform in $n$).}
For every $\varepsilon>0$ there exists $\delta\in(0,c]$ such that
\[
\sup_{n\ge 1}\ \sup_{t\in[0,\delta]}\ \sup_{x\in\mathbb{R}^d}\ \|v_n(t,x)-v(t,x)\|\le \varepsilon.
\]

\item[(B3)] \textbf{Compact-uniform convergence away from $t=0$.}
For every $\delta\in(0,c]$ and every $R>0$,
\[
\eta_n(\delta,R)
 \coloneqq \sup_{t\in[\delta,c]}\ \sup_{\|x\|\le R}\ \|v_n(t,x)-v(t,x)\|
\longrightarrow 0\qquad(n\to\infty).
\]

\item[(B4)] \textbf{One-sided Lipschitz (OSL) bound for $v$ on $(0,c]$.}
For all $t\in(0,c]$ and all
$x,y\in\mathbb{R}^d$,
\[
\langle v(t,x)-v(t,y),x-y\rangle \le \frac{1}{t}\|x-y\|^2.
\]
(see Remark~\ref{rem:standard-schedule} in the previous subsection.)
\end{enumerate}
\end{assumption}

\begin{theorem}[Trajectory convergence at time $c$ for the standard schedule]
\label{thm:traj-conv}
Assume (B1)--(B4). 
Fix $R>0$ and let $x_0\in\mathbb{R}^d$ satisfy $\|x_0\|\le R$.
Suppose that $x_n(\cdot)$ and $x(\cdot)$ are the unique solutions of \eqref{eq:ivp-n} and \eqref{eq:ivp}.

Then, for every $\varepsilon>0$ there exists $N\in\mathbb{N}$ such that for all $n\ge N$,
\begin{equation}\label{eq:traj-bound}
\|x_n(c)-x(c)\|\le \varepsilon.
\end{equation}
Moreover, the bound is uniform over initial conditions in the ball: for all $n\ge N$,
\[
\sup_{\|x_0\|\le R}\ \|\Phi_n(c,x_0)-\Phi(c,x_0)\|\le \varepsilon.
\]
\end{theorem}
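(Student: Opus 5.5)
The plan is a Grönwall-type stability argument that uses the one-sided Lipschitz bound (B4) in place of a two-sided Lipschitz constant, together with a split of the time interval $[0,c]$ into an early piece $[0,\delta]$ and a late piece $[\delta,c]$. Throughout, set $e_n(t)\coloneqq x_n(t)-x(t)$, so that $e_n(0)=0$.

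\textbf{A priori bound.} By (B1), both trajectories satisfy $\|x_n(t)\|,\|x(t)\|\le R+Mc\eqqcolon R'$ for all $t\in[0,c]$ and all initial points with $\|x_0\|\le R$. Hence every evaluation of $v_n-v$ along the trajectories takes place in the ball of radius $R'$, and (B3) can be applied with $R'$.

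\textbf{Early piece.} Given $\varepsilon>0$, use (B2) to choose $\delta$ with $\sup_n\sup_{t\le\delta}\sup_x\|v_n-v\|\le \varepsilon'$, where $\varepsilon'$ is to be fixed later. On $[0,\delta]$ we avoid (B4), since its constant $1/t$ blows up at $t=0$. Instead we write $e_n'=(v_n(t,x_n)-v(t,x_n))+(v(t,x_n)-v(t,x))$ and bound the second term crudely by $2M$ using (B1). This gives $\|e_n(\delta)\|\le 2M\delta$.

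A cleaner route is to choose $\delta$ small directly, so that $2M\delta\le\varepsilon'$; then (B2) is not even needed on this piece. I will keep (B2) available as a backup in case the later factor $c/\delta$ forces a finer balance.

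\textbf{Late piece.} For a.e.\ $t\in[\delta,c]$, compute
\[
\tfrac12\tfrac{d}{dt}\|e_n\|^2=\langle v_n(t,x_n)-v(t,x_n),e_n\rangle+\langle v(t,x_n)-v(t,x),e_n\rangle\le \eta_n\|e_n\|+\tfrac1t\|e_n\|^2,
\]
where $\eta_n=\eta_n(\delta,R')$. This yields $\frac{d}{dt}\|e_n\|\le \eta_n+\|e_n\|/t$, justified via $\sqrt{\|e_n\|^2+\kappa}$ with $\kappa\downarrow0$ to handle the nondifferentiability of the norm at $0$. Grönwall's inequality with integrating factor $1/t$ then gives
\[
\|e_n(c)\|\le \frac{c}{\delta}\|e_n(\delta)\|+c\,\eta_n\log\frac{c}{\delta}.
\]

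\textbf{Main obstacle: balancing the two pieces.} The amplification factor $c/\delta$ multiplies the early error, so a bound $\|e_n(\delta)\|\lesssim\delta$ only produces $\|e_n(c)\|\lesssim c\cdot 2M$, which is not small. This is where (B2) is used. On $[0,\delta]$, bound the early error by the uniform closeness together with the OSL-free estimate: we want $\|e_n(\delta)\|\le \varepsilon'\delta+(\text{contribution from }v(t,x_n)-v(t,x))$.

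To control that contribution, apply the same differential inequality on $[s,\delta]$ for small $s$, now using $\varepsilon'$ in place of $\eta_n$, with the initial error satisfying $\|e_n(s)\|\le 2Ms$. This gives
\[
\|e_n(\delta)\|\le \frac{\delta}{s}\,2Ms+\delta\,\varepsilon'\log\frac{\delta}{s}=2M\delta+\delta\varepsilon'\log\frac{\delta}{s}.
\]
This still leaves a $2M\delta$ term, so the correct fix is to run the Grönwall inequality from $s$ straight to $c$, using $\sup\|v_n-v\|\le\varepsilon'$ on $[s,\delta]$ and $\eta_n$ on $[\delta,c]$. The result is
\[
\|e_n(c)\|\le 2Mc+\dots,
\]
so the $2Mc$ term again fails to vanish.

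I therefore expect the crux to be exploiting (B2) from time $0$ itself: the integrating factor $t$ gives $\frac{d}{dt}(\|e_n\|/t)\le \sup\|v_n-v\|/t$. Near $t=0$ the quantity $\|e_n\|/t$ stays bounded, by $\varepsilon'$ plus a term coming from (B4), while $\int_0^\delta \varepsilon'/t\,dt$ diverges. I plan to handle this by showing $\|e_n(t)\|\le \varepsilon' t$ for $t\le\delta$. This should follow from $\|e_n\|/t\to$ a limit at most $\varepsilon'$ together with a comparison argument for the differential inequality.

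Once that is in place, the final bound reads $\|e_n(c)\|\le c\,\varepsilon'\,(1+\log(c/\delta))+c\,\eta_n\log(c/\delta)$. Choosing $\varepsilon'$ small (which fixes $\delta$) and then $N$ large through (B3) gives $\|e_n(c)\|\le\varepsilon$ for all $n\ge N$. Since every estimate depends on $x_0$ only through $R$, the bound is uniform over the ball $\|x_0\|\le R$.
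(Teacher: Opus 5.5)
Your treatment of $[\delta,c]$ matches the paper: the inequality $y'\le\eta_n+y/t$ with integrating factor $1/t$ gives $y(c)\le\frac{c}{\delta}\,y(\delta)+c\,\eta_n\log\frac{c}{\delta}$. You also correctly see that everything hinges on an early-time bound $\|e_n(\delta)\|\le C\varepsilon'\delta$. But the route you propose for that bound fails.

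\textbf{Why the early-time step fails.} On $[0,\delta]$, (B2) and (B4) give only $\frac{d}{dt}(y/t)\le\varepsilon'/t$. The right-hand side is not integrable at $0$, so this inequality gives no bound on $y(\delta)/\delta$ beyond the trivial $2M$ from (B1). The comparison you sketch does not close: $\varepsilon' t$ is not a supersolution of $z'=\varepsilon'+z/t$, since that would need $\varepsilon'\ge 2\varepsilon'$.

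In fact the claim $\|e_n(t)\|\le\varepsilon' t$ is false under (B1), (B4) and $\sup_{t\le\delta}\sup_x\|v_n-v\|\le\varepsilon'$. Take $d=1$, $x_0=0$, $v(t,x)=\tanh(x/t)$, and $v_n=v+\varepsilon'$ on $[0,\delta]$. Then $x\equiv 0$, while the unique solution is $x_n(t)=u^\ast t$ with $\tanh u^\ast=u^\ast-\varepsilon'$. So $u^\ast\approx(3\varepsilon')^{1/3}\gg\varepsilon'$.

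\textbf{A second problem.} Your final bound contains the term $c\,\varepsilon'\log(c/\delta)$. Since $\delta=\delta(\varepsilon')$ comes from (B2) with no rate, this product need not tend to $0$ as $\varepsilon'\to 0$. Had you actually obtained $\|e_n(\delta)\|\le\varepsilon'\delta$, the correct bound would be $c\,\varepsilon'+c\,\eta_n\log(c/\delta)$, with no logarithm on $\varepsilon'$.

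\textbf{The missing idea.} Control $v(t,x_n(t))-v(t,x(t))$ on $[0,\delta]$ through regularity of $v$ itself, not through the OSL bound. The paper uses that $v$ is uniformly continuous on the compact set $[0,c]\times K_R$, where $K_R=B_{R+Mc}(0)$, with modulus $\omega_R$.
\begin{itemize}
\item Since $\|x_n(t)-x(t)\|\le 2Mt\le 2M\delta$, this term is at most $\omega_R(2M\delta)$.
\item Set $\varepsilon_0=\varepsilon/(4c)$ and take $\delta_1$ from (B2) for $\varepsilon_0$. Choose $\delta\le\delta_1$ so that also $\omega_R(2M\delta)\le\varepsilon_0$.
\item Then $\|\dot e_n\|\le 2\varepsilon_0$ on $[0,\delta]$, hence $\|e_n(\delta)\|\le 2\varepsilon_0\delta$.
\item After the amplification by $c/\delta$, this contributes only $2c\varepsilon_0=\varepsilon/2$.
\end{itemize}
So $\delta$ must be fixed by both (B2) and the modulus of continuity, not by (B2) alone. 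Even for continuous $v$, for example $\tanh(x/(t+a))$, a $\delta$ taken only from (B2) with $\delta\gg a$ fails for the same reason as the example above.

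This continuity is not among (B1)--(B4); the paper invokes it inside the proof. Your argument needs it, or an equivalent hypothesis, as well, because the OSL bound alone cannot give the required $O(\varepsilon_0\delta)$ early-time error.
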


\begin{proof}
Set $\varepsilon_0 \coloneqq \varepsilon/(4c)$.  By (B1), for any $t\in[0,c]$,
\[
\|x_n(t)-x_0\|\le \int_0^t \|v_n(s,x_n(s))\|ds \le Mt,
\qquad
\|x(t)-x_0\|\le Mt.
\]
Hence for all $t\in[0,c]$,
\[
x_n(t),x(t)\in B_{R+Mc}(0)\eqqcolon K_R.
\]

By (B2) applied with $\varepsilon_0$, choose $\delta_1\in(0,c]$ such that
\[
\sup_{n}\sup_{t\in[0,\delta_1]}\sup_{x\in\mathbb{R}^d}\|v_n(t,x)-v(t,x)\|\le \varepsilon_0.
\]
Since $v$ is continuous in $x$ and $[0,c]\times K_R$ is compact,
$v$ is uniformly continuous on $[0,c]\times K_R$.
Define its modulus of continuity on this compact set by
\[
\omega_R(r) \coloneqq \sup_{t\in[0,c]}\sup_{\substack{x,y\in K_R\\ \|x-y\|\le r}} \|v(t,x)-v(t,y)\|,
\qquad r\ge 0,
\]
so that $\omega_R(r)\to 0$ as $r\downarrow 0$.
Take $\delta\in(0,\delta_1]$ so that $\omega_R(2M\delta)\le \varepsilon_0$.

\medskip
Define  
\[
\Delta(t) \coloneqq x_n(t)-x(t).
\]
We divide the time interval $[0,c]$ into $[0,\delta]$ and $[\delta,c]$) to derive bounds. 

\noindent
(i) On $[0,\delta]$ we have $\Delta(0)=0$ and
\[
\|\dot\Delta(t)\|
=\|v_n(t,x_n(t))-v(t,x(t))\|
\le \|v_n(t,x_n(t))-v(t,x_n(t))\| + \|v(t,x_n(t))-v(t,x(t))\|.
\]
The first term is $\le\varepsilon_0$ by the choice of $\delta_1$.
For the second term, note that $\|x_n(t)-x(t)\|\le \|x_n(t)-x_0\|+\|x(t)-x_0\|\le 2Mt\le 2M\delta$,
hence $\|v(t,x_n(t))-v(t,x(t))\|\le \omega_R(2M\delta)\le \varepsilon_0$.
Therefore $\|\dot\Delta(t)\|\le 2\varepsilon_0$ for $t\in[0,\delta]$, and integrating yields
\begin{equation}\label{eq:delta-error}
\|\Delta(\delta)\|\le 2\varepsilon_0\,\delta.
\end{equation}

\noindent 
(ii) For $t\in[\delta,c]$, we write
\[
\dot\Delta(t)=\bigl(v_n(t,x_n(t))-v(t,x_n(t))\bigr)+\bigl(v(t,x_n(t))-v(t,x(t))\bigr).
\]
Let $y(t) \coloneqq \|\Delta(t)\|$. For a.e.\ $t$ such that $y(t)>0$,
\[
y'(t)=\left\langle \frac{\Delta(t)}{\|\Delta(t)\|},\dot\Delta(t)\right\rangle
\le \|v_n(t,x_n(t))-v(t,x_n(t))\| + \frac{\langle v(t,x_n(t))-v(t,x(t)),\Delta(t)\rangle}{\|\Delta(t)\|}.
\]
Since $x_n(t)\in K_R$ for all $t$, the first term is $\le \eta_n(\delta,R+Mc)$ by (B3).
The second term is controlled by the OSL bound (B4):
\[
\frac{\langle v(t,x_n(t))-v(t,x(t)),\Delta(t)\rangle}{\|\Delta(t)\|}
\le \frac{1}{t}\,\|\Delta(t)\|=\frac{1}{t}\,y(t).
\]
Thus
\begin{equation}\label{eq:osl-ineq}
y'(t)\le \eta_n(\delta,R+Mc) + \frac{1}{t}\,y(t)
\qquad\text{for a.e. }t\in[\delta,c].
\end{equation}
Define $\eta_n \coloneqq \eta_n(\delta,R+Mc)$ for brevity.
From \eqref{eq:osl-ineq} we obtain $(y(t)/t)'\le \eta_n/t$,
hence integrating from $\delta$ to $c$ gives
\begin{equation}\label{eq:propagation}
y(c)\le \frac{c}{\delta}\,y(\delta) + c\,\eta_n \log\frac{c}{\delta}.
\end{equation}
Plugging \eqref{eq:delta-error} into \eqref{eq:propagation} yields
\[
y(c)\le \frac{c}{\delta}\cdot 2\varepsilon_0\delta + c\,\eta_n \log\frac{c}{\delta}
=2c\varepsilon_0 + c\,\eta_n \log\frac{c}{\delta}
=\frac{\varepsilon}{2} + c\,\eta_n \log\frac{c}{\delta}.
\]
Finally, by (B3) we have $\eta_n\to 0$, so choose $N$ such that for all $n\ge N$,
$c\,\eta_n\log(c/\delta)\le \varepsilon/2$. Then $y(c)\le \varepsilon$, proving \eqref{eq:traj-bound}.

\paragraph{Uniformity over $\|x_0\|\le R$.}
All constants above depend on $R$ only through the compact set $K_R=B_{R+Mc}(0)$
(and its modulus of continuity $\omega_R$), and the error $\eta_n(\delta,R+Mc)$.
Thus the same choice of $\delta$ and $N$ works for all initial conditions with $\|x_0\|\le R$.
\end{proof}

\subsection{Convergence of pushforward measures}

Let $\mathcal{P}_2(\mathbb{R}^d)$ denote the set of Borel probability measures with finite second moment.
For $\mu\in\mathcal{P}_2(\mathbb{R}^d)$ define the pushforward at time $t$ by
\[
\mu_t^{(n)}  \coloneqq  (\Phi_n(t,\cdot))_\#\mu,\qquad
\mu_t  \coloneqq  (\Phi(t,\cdot))_\#\mu.
\]

\begin{corollary}[Convergence of pushforward measures in $W_2$]
\label{cor:pushforward}
Under the assumptions of Theorem~\ref{thm:traj-conv}, let $\mu\in\mathcal{P}_2(\mathbb{R}^d)$.
Then
\[
W_2\bigl(\mu_c^{(n)},\mu_c\bigr)\longrightarrow 0\qquad(n\to\infty).
\]
In particular, $\mu_c^{(n)}\Rightarrow \mu_c$ weakly and the second moments converge.
\end{corollary}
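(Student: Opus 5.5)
The plan is to use a synchronous coupling: run both flows from the same initial condition $X_0\sim\mu$. Since $\Phi_n(c,X_0)$ and $\Phi(c,X_0)$ have laws $\mu_c^{(n)}$ and $\mu_c$, this coupling gives
\[
W_2^2\bigl(\mu_c^{(n)},\mu_c\bigr)\le \int \|\Phi_n(c,x_0)-\Phi(c,x_0)\|^2\,d\mu(x_0).
\]
It therefore suffices to show that this integral tends to $0$. I would split it over the ball $\{\|x_0\|\le R\}$ and its complement.

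\textbf{Inside the ball.} Theorem~\ref{thm:traj-conv} gives $\sup_{\|x_0\|\le R}\|\Phi_n(c,x_0)-\Phi(c,x_0)\|\le\varepsilon$ for all $n\ge N(R,\varepsilon)$. Hence the contribution from $\{\|x_0\|\le R\}$ is at most $\varepsilon^2$.

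\textbf{Outside the ball.} Here (B1) supplies a uniform domination, which I expect to be the only point needing care. For every $x_0$,
\[
\|\Phi_n(c,x_0)-x_0\|\le Mc,\qquad \|\Phi(c,x_0)-x_0\|\le Mc,
\]
so $\|\Phi_n(c,x_0)-\Phi(c,x_0)\|\le 2Mc$ uniformly in $n$ and $x_0$. The tail contribution is then bounded by $4M^2c^2\,\mu(\{\|x_0\|>R\})$. Since $\mu$ is a probability measure, this can be made smaller than $\varepsilon^2$ by choosing $R$ large, independently of $n$. Finite second moment is not needed for this step; it is needed only so that $\mu_c^{(n)},\mu_c\in\mathcal P_2$. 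That membership also follows from $\|\Phi_n(c,x_0)\|\le\|x_0\|+Mc$.

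\textbf{Combining.} First fix $R$ so that the tail term is below $\varepsilon^2$. Then take $n\ge N(R,\varepsilon)$ from Theorem~\ref{thm:traj-conv}. This gives $W_2^2\bigl(\mu_c^{(n)},\mu_c\bigr)\le 2\varepsilon^2$, and so $W_2\to0$. Alternatively, the dominated convergence theorem applies directly: the integrand converges pointwise and is bounded by $4M^2c^2$.

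\textbf{Final claims.} Weak convergence and convergence of second moments follow from the standard characterization of $W_2$-convergence, namely weak convergence together with convergence of second moments (Villani, or Ambrosio--Gigli--Savaré). The second moments can also be handled directly: by the triangle inequality in $L^2$ under the coupling,
\[
\Bigl|\bigl(\textstyle\int\|y\|^2 d\mu_c^{(n)}\bigr)^{1/2}-\bigl(\int\|y\|^2 d\mu_c\bigr)^{1/2}\Bigr|\le \|\Phi_n(c,X_0)-\Phi(c,X_0)\|_{L^2},
\]
and the right-hand side tends to $0$.
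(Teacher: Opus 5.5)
Your proof is correct and is essentially the paper's argument. Both use the synchronous coupling through a common $X_0\sim\mu$, the uniform bound $\|\Phi_n(c,x_0)-\Phi(c,x_0)\|\le 2Mc$ from (B1), and the convergence of the flow maps from Theorem~\ref{thm:traj-conv}. The paper concludes by dominated convergence; your ball/tail split is simply an explicit version of that step.
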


\begin{proof}
Let $X_0$ be a random variable with law $\mu$ and set
\[
X_n  \coloneqq  \Phi_n(c,X_0),\qquad X \coloneqq \Phi(c,X_0).
\]
Then $\mathcal{L}(X_n)=\mu_c^{(n)}$ and $\mathcal{L}(X)=\mu_c$.
Using the coupling induced by the common $X_0$,
\[
W_2\bigl(\mu_c^{(n)},\mu_c\bigr)^2 \le \mathbb{E}\|X_n-X\|^2.
\]
By (A1),
\[
\|X_n-X_0\|\le \int_0^c \|v_n(t,X_n(t))\|dt\le Mc,
\qquad
\|X-X_0\|\le Mc,
\]
hence $\|X_n-X\|\le 2Mc$ almost surely and $\|X_n-X\|^2\le 4M^2c^2$ is an integrable dominating bound.

Next, Theorem~\ref{thm:traj-conv} implies pointwise convergence of the flow maps:
for every deterministic $x_0\in\mathbb{R}^d$, $\Phi_n(c,x_0)\to \Phi(c,x_0)$ as $n\to\infty$.
Therefore $\|X_n-X\|\to 0$ almost surely.
By dominated convergence, $\mathbb{E}\|X_n-X\|^2\to 0$, hence $W_2(\mu_c^{(n)},\mu_c)\to 0$.

Finally, $W_2$-convergence implies weak convergence and convergence of second moments.
\end{proof}

\begin{remark}(Uniform-in-time version on $[\delta,c]$)
If the conclusion of Theorem~\ref{thm:traj-conv} is strengthened to
$\sup_{t\in[\delta,c]}\sup_{\|x_0\|\le R}\|\Phi_n(t,x_0)-\Phi(t,x_0)\|\to 0$ for every $R$,
then the same coupling argument yields
$\sup_{t\in[\delta,c]}W_2(\mu_t^{(n)},\mu_t)\to 0$ for every fixed $\delta\in(0,c)$.
\end{remark}

\section{OT-CFM Vector Field via Forward Proximal Operator}
\label{sec:manifold_OTCFM}

In the main body of this paper, we discussed the (set-valued) transport
$\partial\phi : P_{1}\rightrightarrows P_{0}$ with the Aleksandrov--Brenier potential $\phi$, which may not be differentiable in general, especially under manifold hypothesis.  This approach enables us to consider the case where
the target distribution is supported on a lower-dimensional submanifold.

We can also consider the transport from $P_0$ to $P_1$ in a similar way. 
Because the base distribution $P_{0}$ is absolutely continuous (typicaly $N(0,I_d)$), there always exists a convex potential
$\varphi$ such that
\[
S \coloneqq \nabla \varphi : P_{0} \rightrightarrows P_{1}.
\]
With this forward Brenier map 
all the convex-analytic identities derived in Sec.~\ref{sec:prox_OTCFM} 
continue to hold with only minimal modifications.  
The statements and proofs below parallel the arguments of Sec.~\ref{sec:prox_OTCFM}. 

\subsection{Interpolation via the Forward Brenier Potential}

Let $x_{0}\sim P_{0}$.  Define
\[
x_{1} = S(x_{0}) = \nabla\varphi(x_{0})
\]
with the OT map $S$, and the deterministic affine interpolation
\begin{equation}
x_{t}
= \alpha_{t} x_{0} + \beta_{t} x_{1}
= \alpha_{t} x_{0} + \beta_{t} \nabla\varphi(x_{0})
\eqqcolon  K_{t}(x_{0}).
\label{eq:manifold-interp}
\end{equation}
Introduce the convex potential
\begin{equation}
\chi_{t}(x)
\coloneqq 
\frac{\alpha_{t}}{2}\|x\|^{2}
+
\beta_{t}\varphi(x),
\label{eq:chi-t-def}
\end{equation}
for which we have
\[
K_{t}(x_{0}) = \nabla\chi_{t}(x_{0}).
\]
Since $\chi_{t}$ is $\alpha_{t}$-strongly convex for $\alpha_t>0$, the gradient $\nabla\chi_{t}$ is a bijection
$\mathbb{R}^{d}\to\mathbb{R}^{d}$ for every $t\in [0,1)$ with $\alpha_{t}>0$.

We now state and prove the proximal representation of the OT-CFM vector field, which is parallel to Sec.~\ref{sec:prox_OTCFM} in the main part. We omit the proofs because they are almost the same as those in Sec.~\ref{sec:prox_OTCFM}. 

\begin{theorem}[OT-CFM with Manifold-Supported Targets]
\label{thm:manifold-prox}
Let $P_{0}$ be absolutely continuous on $\mathbb{R}^{d}$, and let 
$P_{1}$ be supported on a smooth embedded submanifold $\M\subset\mathbb{R}^{d}$.  
Let $S=\nabla\varphi$ be the Brenier map pushing $P_{0}$ to $P_{1}$.  
Define $x_{0}\sim P_{0}$, $x_{1}=S(x_{0})$, and $x_{t}$ by \eqref{eq:manifold-interp}.  
Then:

\begin{enumerate}
\item[(1)] (\textbf{Inverse Map as a Proximal Operator})  
For each $t\in(0,1)$,
\[
x_{0}
= K_{t}^{-1}(x_{t})
= \nabla \chi_{t}^{*}(x_{t})
= \prox_{\mu_{t}\varphi}\!\left(\frac{x_{t}}{\alpha_{t}}\right),
\]
where
\[
\mu_{t} \coloneqq \beta_{t}/\alpha_{t}.
\]

\item[(2)] (\textbf{Pointwise Expression for the Teaching Vector Field})
The deterministic OT-CFM vector field
\[
v_{t}(x_{t}) = \dot\alpha_{t}x_{0} + \dot\beta_{t}x_{1}
\]
may be written purely as a function of $x_{t}$:
\[
v_{t}(x)
=
\frac{\dot\beta_{t}}{\beta_{t}} x
+
\left(
\dot\alpha_{t}
-
\frac{\alpha_{t}\dot\beta_{t}}{\beta_{t}}
\right)
\prox_{\mu_{t}\varphi}\!\left(\frac{x}{\alpha_{t}}\right).
\]

\item[(3)] (\textbf{Gradient Flow Form})
Define the time-dependent potential
\[
F_{t}^{\mathrm{FM,man}}(x)
\coloneqq
-\frac{\dot\beta_{t}}{2\beta_{t}}\|x\|^{2}
-
\left(
\dot\alpha_{t} - \frac{\alpha_{t}\dot\beta_{t}}{\beta_{t}}
\right)
\chi_{t}^{*}(x).
\]
Then
\[
v_{t}(x) = -\nabla F_{t}^{\mathrm{FM,man}}(x),
\]
so the FM dynamics 
$\dot x_{t}=v_{t}(x_{t})$ is a gradient flow and arises as the continuous-time 
limit of a proximal algorithm.
\end{enumerate}
\end{theorem}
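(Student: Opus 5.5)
The three claims mirror Lemma~\ref{lem:prox_general}, Corollary~\ref{cor:vf_prox_general} and the gradient-flow form of Sec.~\ref{sec:fm-gradient}, with the roles of $\alpha_t$ and $\beta_t$ and of $x_0$ and $x_1$ exchanged. The one structural difference is favorable: $\varphi$ is differentiable $P_0$-a.e. (Brenier), so $S=\nabla\varphi$ is single-valued on the relevant set. Strong convexity now comes from $\alpha_t>0$ rather than $\beta_t>0$. I will first fix the convention that $\varphi$ is a proper lsc convex function with $S=\nabla\varphi$ $P_0$-a.e. I will read $K_t=\partial\chi_t$ as a possibly set-valued map wherever $\varphi$ is not differentiable. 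The statement $x_1=S(x_0)$ is understood $P_0$-a.s.

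\textbf{Claim (1).} Since $\chi_t=\frac{\alpha_t}{2}\|\cdot\|^2+\beta_t\varphi$ is $\alpha_t$-strongly convex, proper and lsc for $t<1$, its conjugate $\chi_t^\ast$ is finite and differentiable everywhere, with $(1/\alpha_t)$-Lipschitz gradient. The duality \eqref{eq:subgrad_inverse} then gives
\[
y\in\partial\chi_t(x)\iff x=\nabla\chi_t^\ast(y).
\]
Because $x_t=\alpha_t x_0+\beta_t\nabla\varphi(x_0)\in\partial\chi_t(x_0)$, this yields $x_0=\nabla\chi_t^\ast(x_t)$, and in particular $K_t^{-1}=\nabla\chi_t^\ast$ is single-valued. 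For the proximal form I repeat the proof of Lemma~\ref{lem:prox_general}. Set $u=\prox_{\mu_t\varphi}(x_t/\alpha_t)$. The optimality condition is $0\in\mu_t\partial\varphi(u)+u-x_t/\alpha_t$. Multiplying by $\alpha_t$ gives $x_t\in\alpha_t u+\beta_t\partial\varphi(u)=\partial\chi_t(u)$, and the duality above then gives $u=\nabla\chi_t^\ast(x_t)$.

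\textbf{Claim (2).} Eliminate $x_1=(x_t-\alpha_t x_0)/\beta_t$ from $v_t=\dot\alpha_t x_0+\dot\beta_t x_1$. This gives
\[
v_t=\frac{\dot\beta_t}{\beta_t}x_t+\Bigl(\dot\alpha_t-\frac{\alpha_t\dot\beta_t}{\beta_t}\Bigr)x_0 .
\]
Now substitute $x_0=\prox_{\mu_t\varphi}(x_t/\alpha_t)$ from (1). Since $x_0$ is a deterministic function of $x_t$, the conditional expectation defining the marginal field is trivial. Hence the teaching field coincides with the marginal field $v_t(x)$ on the support of the law of $x_t$, which is all of $\R^d$ when $K_t$ is onto.

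\textbf{Claim (3).} Use $\nabla\frac12\|x\|^2=x$ and the differentiability of $\chi_t^\ast$ from (1). Then
\[
-\nabla F_t^{\mathrm{FM,man}}(x)=\frac{\dot\beta_t}{\beta_t}x+\Bigl(\dot\alpha_t-\frac{\alpha_t\dot\beta_t}{\beta_t}\Bigr)\nabla\chi_t^\ast(x),
\]
which is the expression in (2). The phrase about a continuous-time limit of a proximal algorithm is interpretive. I will justify it only by noting that each evaluation of the field is an implicit-Euler (proximal) step, as in Sec.~\ref{sec:prox}.

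\textbf{Main obstacle.} The only delicate point is global bijectivity of $K_t$ when $\varphi$ is nondifferentiable somewhere, or takes the value $+\infty$ outside a convex set. I handle this by working with the set-valued map $\partial\chi_t$ throughout. Its inverse $\nabla\chi_t^\ast$ is always single-valued and defined on all of $\R^d$, so claims (1)--(3) hold as identities for every $x\in\R^d$. No regularity of $S$ beyond Brenier's theorem is needed.
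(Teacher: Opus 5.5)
Your proposal is correct and follows the route the paper indicates: the paper omits the proof as a role-swapped repetition of Lemma~\ref{lem:prox_general}, Corollary~\ref{cor:vf_prox_general} and the gradient form of Sec.~\ref{sec:fm-gradient}. Working with the set-valued $\partial\chi_t$ is, if anything, more careful than the paper's tacit assumption that $\nabla\chi_t$ is an everywhere-defined bijection. One aside is inaccurate, though: the law of $x_t$ need not have full support even though $\partial\chi_t$ is onto (e.g.\ $d=1$, $\varphi(x)=|x|$ puts no mass on $(-\beta_t,\beta_t)$), but nothing in (1)--(3) depends on that remark.
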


\paragraph{Implications.} 
Even when $P_{1}$ is supported on a low-dimensional manifold $\M$, the interpolants $x_{t}$
are full-dimensional because the map $\nabla\chi_{t}$ is strongly monotone.  
In contrast to $\prox_{\lambda_t\phi}$ discussed in Sec.~\ref{sec:prox_OTCFM}, the proximal operator $\prox_{\mu_{t}\varphi}$ here acts as a \emph{noising} step that 
pulls points toward the corresponding code $x_0$. Nevertheless, due to the deterministic correspondence between the data $x_1$ and the the code $x_0$, we can still discuss the manifold structure with the potential $\varphi$ and Brenier map $S$.

\subsection{Lyapunov Analysis of OT-CFM Near a Manifold-Supported Target}
\label{sec:lyapunov_forward}

We show the same Lyapunov analysis using the Brenier potential for the transport from $P_0$ to $P_1$.  Note that, since $P_0$ has a density function, 
\kfff{there is a} transport map $S$ \kfff{as a solution of the Kantorovich-Monge problem} such that $x_1 = S(x_0)$ for the OT coupling $(x_0,x_1)\sim \pi^\star$.  While this provides simpler arguments, the following proof requires much stronger smoothness of the Brenier potential function, which is required for Sard's theorem.

For simplicity, we use the notation 
\[
A(x_0) = DS(x_0) = \nabla^2\varphi(x_0)
\]
for the Hessian of the Brenier potential.  From Lemma \ref{lma:rank_m} below, the matrix $A(x_0)$ has rank $m$ for $P_0$-almost every $x_0$, and
its range and kernel satisfy the geometric splitting (interpreted in the ambient
space $\mathbb{R}^d$):
\begin{equation}\label{eq:splitting}
    \mathrm{range}\,A(x_0)=T_{x_1}\mathcal{M},
    \qquad
    \ker A(x_0)\cong N_{x_1}\mathcal{M}.
\end{equation}
Here, we identify $\ker A(x_0)\subset T_{x_0}\mathbb{R}^d$ as a subspace of $T_{x_1}\mathbb{R}^d$ through the ambient space $\mathbb{R}^d$.

As in Sec.~\ref{sec:prox_OTCFM}, the vector field of OT-CFM is expressed by 
\begin{equation}\label{eq:v_t_OTCFM}
    v_t(x_t)
    =
    \dot\alpha_t x_0 + \dot\beta_t S(x_0),
    \qquad
    x_t = K_t(x_0),
\end{equation}
and we introduce the terminal time parameter $\tau$ with $d\tau = dt/(1-t)$, so that $\tau\to\infty$ corresponds to $t\uparrow 1$.  

The following theorem provides the same Lyapunov exponents as Theorem~\ref{thm:terminal-lyap}, while the proof and the assumptions are different. It assumes a higher order of differentiability for the Brenier potential $\varphi$, which is caused by the assumption of Sard's theorem (see Lemma \ref{lma:rank_m}). 

\nc{The assumption on \kfff{such differentiability of higher order for } 
the potential $\varphi$ is often violated. In general, \citet{Caffarelli2003ElementaryReview} proves the $C^2$ regularity of the convex potential when both $P_0$ and $P_1$ are absolutely continuous. In the setting where $P_0$ has a density, while a convex potential exists, it may not be regular when $P_1$ is a singular measure. In particular, since $\nabla S = \nabla^2 \phi$ is rank-deficient (has determinant zero), and hence the Monge-Amp\`ere equation is not valid. 
The analysis here 
can still be carried out for the top Lyapunov exponent when $\nabla S$ has derivatives ``along the manifold'', i.e., the directional derivatives of $S$ exist along $T\mathcal{M}$.  }

\begin{theorem}[Terminal Lyapunov Spectrum for OT-CFM]\label{thm:OTCFM_lyapunov}
Let $P_0, P_1, x_t$, and $v_t$ be as above, with $P_1$ supported on a smooth embedded manifold $\mathcal{M}\subset\mathbb{R}^d$.
Let $\tau$ be the terminal time variable given by $\tau = -\log (1-t)$.  We assume (SC) for the time schedule. 
Assume that the convex potential $\varphi:\mathbb{R}^d\to\mathbb{R}$ 
that gives the forward Brenier map $S \coloneqq \nabla\varphi$ 
with $S_{\#}P_0 = P_1$, and that $\varphi$ is of $C^r$-class with $r\geq \max\{d-m+2,2\}$. 

Then the terminal Lyapunov exponents of the reparametrized flow
$\frac{d}{d\tau} x(\tau) = (1-t(\tau))\,v_t(x(\tau))$
are determined by 
\[
    \lambda(v)
    =
    \begin{cases}
        -\gamma, & v \in N_{x_1}\mathcal{M}, \\[4pt]
        0,       & v \in T_{x_1}\mathcal{M},
    \end{cases}
\]
almost surely.  Thus, the manifold $\mathcal{M}$ is a terminal normally hyperbolic attractor for the OT-CFM dynamics.
\end{theorem}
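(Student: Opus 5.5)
The plan is to work directly with the smooth forward map, avoiding the proximal machinery. Along the OT trajectory $x_t=K_t(x_0)$, the vector field \eqref{eq:v_t_OTCFM} composes with $K_t^{-1}=\nabla\chi_t^\ast$. Differentiating gives the Jacobian
\[
Dv_t(x_t)=\bigl(\dot\alpha_t I+\dot\beta_t A\bigr)\bigl(\alpha_t I+\beta_t A\bigr)^{-1},\qquad A=A(x_0)=\nabla^2\varphi(x_0).
\]
This is well defined because $\alpha_t I+\beta_t A\succ 0$ for $t<1$. Since $A$ is symmetric positive semidefinite, both factors are simultaneously diagonalizable. Hence $Dv_t(x_t)$ is symmetric, with the same eigenvectors as $A$ and eigenvalues $(\dot\alpha_t+\dot\beta_t a)/(\alpha_t+\beta_t a)$ for each eigenvalue $a\ge 0$ of $A$.

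The key structural input is Lemma~\ref{lma:rank_m} (rank $m$ a.e.\ together with the splitting \eqref{eq:splitting}). This is exactly where the $C^r$ hypothesis with $r\ge d-m+2$ enters, via Sard's theorem applied to $S$ mapping onto the $m$-dimensional $\M$. Once we have it, $\ker A=N_{x_1}\M$ corresponds to $a=0$, where the eigenvalue is $\dot\alpha_t/\alpha_t$. On $\mathrm{range}\,A=T_{x_1}\M$ the eigenvalues are $a>0$, where the eigenvalue is $(\dot\alpha_t+\dot\beta_t a)/(\alpha_t+\beta_t a)$.

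The crucial point is that the Jacobian is constant in its eigenbasis along the whole trajectory. The trajectory $x(\tau)$ is determined by the single fixed $x_0$, so $A$ does not change. Hence the variational equation $\xi'(\tau)=(1-t)Dv_t(x(\tau))\xi$ decouples into scalar linear ODEs on the fixed eigenvectors of $A$. Each eigencomponent therefore evolves as $\exp\int_0^\tau c(s)\,ds$, where $c$ is the rescaled eigenvalue.

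For the normal directions, $c(\tau)=(1-t)\dot\alpha_t/\alpha_t\to-\gamma$ by (SC). The Ces\`aro-mean argument from the proof of Theorem~\ref{thm:terminal-lyap} then gives $\lambda(v)=-\gamma$.

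For the tangential directions with $a>0$, use $\alpha_t\to0$, $\beta_t\to1$, and the boundedness of $\dot\beta_t$. Then
\[
(1-t)\frac{\dot\alpha_t+\dot\beta_t a}{\alpha_t+\beta_t a}=\frac{(1-t)\dot\alpha_t}{\alpha_t+\beta_t a}+O(1-t).
\]
The first term equals $\bigl((1-t)\dot\alpha_t/\alpha_t\bigr)\cdot\alpha_t/(\alpha_t+\beta_t a)$, which is a bounded factor times a quantity tending to $0$. So $c(\tau)\to 0$, and the Ces\`aro mean gives $\lambda(v)=0$.

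A general $v\in N_{x_1}\M$ or $v\in T_{x_1}\M$ is a combination of eigenvectors within that subspace. In the tangential case, several eigenvalues may each give exponent $0$, and the maximum of finitely many components still has exponent $0$, because each component is bounded above and below by $e^{o(\tau)}$. The almost-sure qualifier comes from Lemma~\ref{lma:rank_m} holding for $P_0$-a.e.\ $x_0$.

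The main obstacle is establishing \eqref{eq:splitting} rigorously, specifically that $\mathrm{range}\,\nabla^2\varphi(x_0)=T_{S(x_0)}\M$ for a.e.\ $x_0$. The inclusion $\mathrm{range}\subset T\M$ is straightforward: $S$ maps into $\M$, so $DS$ maps into $T\M$. Equality requires the rank to be exactly $m$ a.e. My approach is to argue as follows:
\begin{itemize}
\item The set where the rank is below $m$ has image in $\M$ of measure zero. This uses Sard's theorem for the $C^{r-1}$ map $S$ into the $m$-manifold $\M$, which is where $r-1\ge d-m+1$ is needed.
\item That image has $P_1$-measure zero, provided $P_1$ charges no such null set.
\item Its preimage then has $P_0$-measure zero, because $S_\#P_0=P_1$.
\end{itemize}
Finally, symmetry of $A$ gives $\ker A=(\mathrm{range}\,A)^\perp=N_{x_1}\M$.
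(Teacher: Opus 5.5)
Your proposal is correct and follows the paper's proof. The shared ingredients are the Jacobian $(\dot\alpha_t I+\dot\beta_t A)(\alpha_t I+\beta_t A)^{-1}$ with $A=\nabla^2\varphi(x_0)$ fixed along the trajectory, the rank-$m$ splitting from Lemma~\ref{lma:rank_m}, and the Ces\`aro-mean argument on the normal ($-\gamma$) and tangential ($0$) eigen-directions. Your handling of the Sard step is a little more careful than the paper's one-line appeal to Sard: critical values are $P_1$-null because $P_1\ll\mathcal{H}^m\llcorner\M$, so their preimage is $P_0$-null. You also treat general tangent vectors as combinations of eigenvectors, which the paper does not spell out, but the route is the same.
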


\begin{proof}
Fix $x_0\in\mathbb{R}^d$ and let $x_1=S(x_0)\in\mathcal{M}$.
The dynamics near $\mathcal{M}$ is governed by the Jacobian $J_t$ of $v_t(x)$ given by 
\begin{equation}\label{eq:J_t}
    J_t(x_t)
    \coloneqq D_x v_t(x_t)
    =
    \bigl(\dot\alpha_t I + \dot\beta_t A(x_0)\bigr)
    \bigl(\alpha_t I + \beta_t A(x_0)\bigr)^{-1}.
\end{equation}
Note that $(\alpha_t I + \beta_t A(x_0))^{-1}$ is the inverse of the Jacobian of $x_t = K_t(x_0)=\alpha_t x_0 + \beta_t S(x_0)$. 

Let $\mu_1,\dots,\mu_m$ be the nonzero eigenvalues of $A(x_0)$, and $u_j$ be the eigenvector corresponding to $\lambda_j$.  Note that by the convexity of $\varphi$, $\mu_j>0$.  From \eqref{eq:splitting},  we see that $u_j \in T_{x_1}\M$, and 
\[
    A(x_0)u_j = \mu_j u_j \quad (j=1,\ldots,m), 
\]
\[  
     A(x_0)u =    0  \quad\text{for}\quad u\in N_{x_1}\mathcal{M}. 
\]

To see the Lyapunov exponent, let $\Phi_\tau(x_0)$ be the flow and $v\in\R^d$ be a direction.  Then as in Sec.\ref{sec:lyap-prox-epi}, 
\begin{equation}
\xi(\tau)\coloneqq D\Phi_\tau(x_0)v 
\end{equation}
satisfies the variational equation 
\begin{equation}\label{eq:var_eq_app}
    \xi'(\tau) = (1-t) J_t(x_t)\xi(\tau).
\end{equation}

\paragraph{Normal directions.}
Let $\xi(0) \in N_{x_1}\mathcal{M}$.
Since $A(x_0)\xi=0$, from \eqref{eq:J_t} the solution $\xi(\tau)$ always lies on the normal space $N_{x_1}\M$, and thus 
\[
    \xi'(\tau) = 
    (1-t) \frac{\dot\alpha_t}{\alpha_t} \xi(\tau).
\]
Using the assumption $(1-t)\frac{\dot\alpha_t}{\alpha_t}\to -\gamma$, the same Ces\`aro mean argument as in the proof of Theorem~\ref{thm:terminal-lyap}
shows 
\[
\frac{1}{\tau} \log \|\xi(\tau) \| \to -\gamma\qquad (\tau\to\infty), 
\]
which means that the Lyapunov exponent on $N_{x_1}\mathcal{M}$ is $\lambda = -\gamma$.

\paragraph{Tangential directions.}
Let $u\in T_{x_1}\mathcal{M}$ be an eigenvector of $A(x_0)$ with eigenvalue $\mu>0$.
Then
\[
    (1-t) J_t u
    =
    (1-t)\frac{\dot\alpha_t + \dot\beta_t \mu}{\alpha_t + \beta_t \mu}\, u,
\]
which implies the solution $\xi(\tau)$ to \eqref{eq:var_eq_app} with initial value $\xi(0)\in T_{x_1}\M$ always lie in the eigenspace.  Thus, with $\xi(0)=u$, the variational equation \eqref{eq:var_eq_app} is reduced to 
\[
\xi'(\tau) = (1-t)\frac{\dot\alpha_t + \dot\beta_t \mu}{\alpha_t + \beta_t \mu} \xi(\tau).
\]
From assumption $(1-t)\dot\alpha_t/\alpha_t\to-\gamma$ in (SC) and $\alpha_t\to 0$, it follows that $(1-t)\dot\alpha_t \to 0$ as $t\to 1$.  Thus, we have 
\[
(1-t)\frac{\dot\alpha_t}{\alpha_t + \beta_t \mu}\leq (1-t)\frac{\dot\alpha_t}{\beta_t\mu}  \to 0 \qquad (t\to 1).
\]
Also, by the boundedness of $\dot\beta_t/\beta_t$, 
\[
(1-t)\frac{\dot\beta_t \mu}{\alpha_t + \beta_t \mu}\leq (1-t)\frac{\dot\beta_t}{\beta_t} \to 0 \qquad (t\to 1). 
\]
By the same Ces\`aro mean argument, we have 
\[
\frac{1}{\tau} \log \|\xi(\tau) \| \to 0 \qquad (\tau\to\infty), 
\]
which completes the proof. 
\end{proof}

\begin{lemma}[Rank--$m$ structure of the forward Brenier map]
\label{lma:rank_m}
Let $\mathcal{M}\subset\mathbb{R}^d$ be a $C^1$ embedded submanifold of
dimension $m<d$.  
Assume that $P_0$ is absolutely continuous with respect to $\mathcal{L}^d$, 
with density $\rho_0$, and that $P_1$ is supported on $\mathcal{M}$ and 
absolutely continuous with respect to the $m$-dimensional Hausdorff 
measure $\mathcal{H}^m\!\llcorner \mathcal{M}$.

Suppose there exists a convex potential $\varphi:\mathbb{R}^d\to\mathbb{R}$ 
such that the forward Brenier map
\[
    S \coloneqq \nabla\varphi
\]
pushes $P_0$ to $P_1$, i.e.\ $S_{\#}P_0 = P_1$.   We assume $\varphi$ is of $C^r$-class with $r\geq \max\{d-m+2,2\}$. 
Then for $P_0$--almost every $x\in\mathbb{R}^d$,
\begin{equation}\label{eq:rank_equal_m}
    \operatorname{rank} DS(x) = m,
\end{equation}
and moreover
\begin{equation}\label{eq:range_kernel_TM}
    \operatorname{range}DS(x) = T_{S(x)}\mathcal{M},
    \qquad 
    \ker DS(x) = N_{S(x)}\mathcal{M}.
\end{equation}
\end{lemma}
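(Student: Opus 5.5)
The plan is to get the range statement from $S(\mathbb{R}^d)\subset\M$ plus the symmetry of $DS=\nabla^2\varphi$, and to get the rank lower bound from Sard's theorem combined with $P_1\ll\mathcal{H}^m\llcorner\M$.

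\textbf{Step 1 (range inside the tangent space).} Since $S_\#P_1$-mass lies on the closed support, I first argue $S(x)\in\M$ for $P_0$-a.e.\ $x$. Because $\rho_0$ is a density and $S$ is continuous ($\varphi\in C^2$), this holds for every $x$ in $\operatorname{supp}P_0$, which equals $\mathbb{R}^d$ for the Gaussian. If needed, I restrict to the interior of the support. Take a $C^1$ curve $c(s)=x+sh$. Then $S(c(s))\in\M$ for all $s$, so its derivative $DS(x)h$ lies in $T_{S(x)}\M$. Hence $\operatorname{range}DS(x)\subset T_{S(x)}\M$ and $\operatorname{rank}DS(x)\le m$ everywhere on that open set.

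\textbf{Step 2 (kernel is the normal space).} $DS(x)=\nabla^2\varphi(x)$ is symmetric positive semidefinite. So $\ker DS(x)=(\operatorname{range}DS(x))^\perp\supset N_{S(x)}\M$. Once the rank is shown to equal $m$, the range equals $T_{S(x)}\M$ by dimension count. Taking orthogonal complements then gives the kernel identity in the ambient identification.

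\textbf{Step 3 (rank equals $m$ a.e.; the main obstacle).} Let $Z\coloneqq\{x:\operatorname{rank}DS(x)<m\}$. Viewed as a $C^{r-1}$ map into the $m$-manifold $\M$ (locally via charts, since $S$ takes values in $\M$), every point of $Z$ is a critical point of $S:\mathbb{R}^d\to\M$. Sard's theorem requires smoothness class $r-1\ge\max\{d-m+1,1\}$, which is exactly the hypothesis $r\ge\max\{d-m+2,2\}$. It gives $\mathcal{H}^m(S(Z))=0$, and therefore $P_1(S(Z))=0$ by absolute continuity.

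I then need $P_0(Z)=0$. Since $P_0(Z)\le P_0(S^{-1}(S(Z)))=P_1(S(Z))=0$, this follows. Measurability can be handled by noting that $Z$ is closed in the open set (the rank is lower semicontinuous, so $\{\operatorname{rank}<m\}$ is closed). Consequently $S(Z)$ is $\sigma$-compact and Borel.

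I expect the subtle part to be justifying that $S$ maps into $\M$ pointwise, and hence is a $C^{r-1}$ map between manifolds so that Sard applies in its manifold-target form. I would handle this through local charts of $\M$, using the continuity of $S$ and the density of $\operatorname{supp}P_0$. If $\M$ is not closed, I would work with points where $S(x)\in\M$ and a neighborhood mapped into a chart.
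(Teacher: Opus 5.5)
Your proposal is correct and follows the same route as the paper: $S(\mathbb{R}^d)\subset\M$ gives $\operatorname{range}DS(x)\subset T_{S(x)}\M$, Sard's theorem gives rank $m$ almost everywhere, and symmetry of $\nabla^2\varphi$ turns the range identity into $\ker DS(x)=N_{S(x)}\M$. Your Step~3 is more complete than the paper's proof, which only asserts ``by Sard's theorem, $\operatorname{rank}DS(x)=m$ for a.e.\ $x$'' even though Sard only makes the critical \emph{values} null; your chain $P_0(Z)\le P_1(S(Z))=0$, using $P_1\ll\mathcal{H}^m\llcorner\M$, supplies exactly the missing link. One detail to pin down: since $\M$ is only $C^1$, use as local charts the orthogonal projections onto $T_p\M$. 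These are linear, so the composite with $S$ stays $C^{r-1}$, and the $C^1$ inverse chart sends $\mathcal{L}^m$-null sets to $\mathcal{H}^m$-null sets.
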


\begin{proof}
Since $\varphi\in C^2$, $S=\nabla\varphi$ is $C^1$ and differentiable everywhere,
with Jacobian $DS(x)=\nabla^2\varphi(x)$ symmetric.

Additionally, $P_1$ is supported on $\mathcal{M}$ and $S_{\#}P_0 = P_1$, 
we have $P_0(\{x : S(x)\notin\mathcal{M}\}) = 0$.  By the differentiability of $S$, we can see that $S(x)\in\mathcal{M}$ for any $x\in\mathbb{R}^d$. 
Thus $\operatorname{range}DS(x)\subset T_{S(x)}\mathcal{M}$, and hence
\(
    \operatorname{rank}DS(x)\le m.
\)

Due to the smoothness assumption of $\varphi$, the smoothness degree of $S$ is not less than $\max\{d-m+1,1\}$.  Then by Sard's theorem (e.g.~\cite{lee2012introduction}), $\operatorname{rank}DS(x)=m$ for a.e.\ $x$.
For $P_0$--a.e.\ $x$, we now know that
\[
    \operatorname{range}DS(x)\subset T_{S(x)}\mathcal{M}
\]
and
\[
    \dim\operatorname{range}DS(x)=m=\dim T_{S(x)}\mathcal{M},
\]
hence
\[
    \operatorname{range}DS(x)=T_{S(x)}\mathcal{M}.
\]
Because $DS(x)$ is symmetric, we have
\[
    \ker DS(x) = \bigl(\operatorname{range}DS(x)\bigr)^{\perp}
    = \bigl(T_{S(x)}\mathcal{M}\bigr)^{\perp}
    = N_{S(x)}\mathcal{M}.
\]
This establishes \eqref{eq:range_kernel_TM}.
\end{proof}

\end{document}